\theoremstyle{plain}
\newtheorem{theorem}{Theorem}[section]
\newtheorem{proposition}[theorem]{Proposition}
\newtheorem{lemma}[theorem]{Lemma}
\theoremstyle{definition}
\newtheorem{assumption}[theorem]{Assumption}
\theoremstyle{remark}
\newtheorem{remark}[theorem]{Remark}
\begin{document}
\global\long\def\l{\lvert}%
\global\long\def\t{\theta}%
\global\long\def\a{\boldsymbol{a}}%
\global\long\def\b{\beta}%
\global\long\def\bb{\boldsymbol{\b}}%
\global\long\def\bg{\boldsymbol{\gamma}}%
\global\long\def\S{\Sigma}%
\global\long\def\L{\Lambda}%
\global\long\def\s{\sigma}%
\global\long\def\i{\infty}%
\global\long\def\F{\mathcal{F}}%
\global\long\def\R{\mathbb{R}}%
\global\long\def\E{\mathbb{E}}%
\global\long\def\p{\prime}%
\global\long\def\x{\boldsymbol{x}}%
\global\long\def\bt{\boldsymbol{\t}}%
\global\long\def\M{\mathcal{M}}%
\global\long\def\0{\boldsymbol{0}}%
\global\long\def\I{\boldsymbol{I}}%
\global\long\def\X{\boldsymbol{X}}%
\global\long\def\c{\boldsymbol{c}}%
\global\long\def\e{\boldsymbol{e}}%
\global\long\def\u{\boldsymbol{u}}%
\global\long\def\v{\boldsymbol{v}}%
\global\long\def\U{\boldsymbol{U}}%
\global\long\def\V{\boldsymbol{V}}%
\global\long\def\P{\mathbb{P}}%

\twocolumn[
\icmltitle{Projection Pursuit Density Ratio Estimation}




\begin{icmlauthorlist}
\icmlauthor{Meilin Wang}{ruc}
\icmlauthor{Wei Huang}{um}
\icmlauthor{Mingming Gong}{um}
\icmlauthor{Zheng Zhang}{ruc}
\end{icmlauthorlist}

\icmlaffiliation{ruc}{Center for Applied Statistics, Institute of Statistics and Big Data, Renmin University of China, Beijing, China}
\icmlaffiliation{um}{School of Mathematics and Statistics, University of Melbourne, Melbourne, Australia}

\icmlcorrespondingauthor{Zheng Zhang}{zhengzhang@ruc.edu.cn}

\icmlkeywords{Machine Learning, ICML}

\vskip 0.3in
]

\printAffiliationsAndNotice{} 

\begin{abstract}
\emph{Density ratio estimation} (DRE) is a paramount task in machine learning, for its broad applications across multiple domains, such as covariate shift adaptation, causal inference, independence tests and beyond. Parametric methods for estimating the density ratio possibly lead to biased results if models are misspecified, while conventional non-parametric methods suffer from the curse of dimensionality when the dimension of data is large. To address these challenges, in this paper, we propose a novel approach for DRE based on the projection pursuit (PP) approximation. The proposed method leverages PP to mitigate the impact of high dimensionality while retaining the model flexibility needed for the accuracy of DRE. We establish the consistency and the convergence rate for the proposed estimator.  Experimental results demonstrate that our proposed method outperforms existing alternatives in various applications.
\end{abstract}

\section{Introduction}

Density ratio estimation (DRE) is a fundamental concept in machine learning that focuses on directly estimating the ratio between two probability density functions (PDFs), avoiding the need to estimate each density function individually. DRE has widespread applications, such as covariate shift adaptation \cite{shimodairaImprovingPredictiveInference2000,sugiyamaDirectImportanceEstimation2007},
outlier detection \cite{hidoStatisticalOutlierDetection2011}, independence
test \cite{aiTestingUnconditionalConditional2024}, mutual information estimation \cite{suzuki2008MutualInformation, aiTestingUnconditionalConditional2024},
importance sampling \cite{mengSIMULATINGRATIOSNORMALIZING1996,sinhaNeuralBridgeSampling2020}, and
treatment effect estimation in causal inference \cite{aiUnifiedFrameworkEfficient2021,matsushitaEstimatingDensityRatio2023}. See \citet{sugiyamaDensityRatioEstimation2012} for a comprehensive review of DRE and its applications in machine learning. 

There are a variety of methods concerning the DRE problem in the existing literature, which can be divided into three paradigms: the parametric methods  \cite{liuChangepointDetectionTimeseries2013,liuSupportConsistencyDirect2017,nagumoDensityRatioEstimation2024}, the nonparametric linear sieve methods \cite{kanamoriLeastsquaresApproachDirect2009,sugiyamaDirectImportanceEstimation2008,sugiyamaDimensionalityReductionDensity2010} and the nonparametric neural network methods \cite{namDirectDensityRatio2015, fangRethinkingImportanceWeighting2020, rhodesTelescopingDensityRatioEstimation2020}. The validity of the parametric modeling builds on the correct model specification for the density ratio. If the model is misspecified, the results may be severely biased. Nonparametric neural network methods \cite{namDirectDensityRatio2015, fangRethinkingImportanceWeighting2020, rhodesTelescopingDensityRatioEstimation2020} can achieve superior performance without imposing parametric model assumptions; however, they require a substantial number of training samples to learn the extensive parameters involved. Furthermore, theoretical guarantees for neural network estimations remain insufficient.

Nonparametric linear sieve estimation is a statistical technique that combines the flexibility of nonparametric methods with the simplicity of linear models \citep{chenLargeSampleSieve2007}. The term ``linear sieve" refers to a sequence of increasingly complex linear models that are used to approximate the unknown function of interest.  With different choices of criteria, e.g. squared-loss \cite{kanamoriLeastsquaresApproachDirect2009, sugiyamaDirectDensityratioEstimation2011,yamadaRelativeDensityRatioEstimation2013}, Kullback–Leibler divergence \cite{sugiyamaDirectImportanceEstimation2008,tsuboiDirectDensityRatio2009}, the method of sieves is highly flexible in estimating complicated models. Unlike nonparametric neural network methods that can become excessively complex and computationally intensive, the sieve approach seeks to balance model complexity with interpretability and computational feasibility.

Although the linear sieve methods enjoy computational convenience and theoretical support \cite{kanamoriTheoreticalAnalysisDensity2010}, they suffer from a significant challenge known as the curse of dimensionality.  Specifically, as the dimension of the data increases, there is a notable and substantial deterioration in the performance of these linear sieve methods. This phenomenon is clearly evidenced by both ratio pattern visualization and estimation error in our experimental results.
Conventional linear sieve methods, such as KLIEP \cite{sugiyamaDirectImportanceEstimation2007} and uLSIF \cite{kanamoriLeastsquaresApproachDirect2009} can correctly capture the pattern of density ratio functions when the dimension of variables is 2 (see in Figure \ref{figs-dre2d}). However, they experience a significant deterioration in performance when the dimension of variables increases to 10 (see Figure \ref{figs-swe}).
The root mean squared logarithmic error (RMSLE) for the DRE based on conventional linear sieve methods further quantifies this performance decline, exhibiting a dramatic escalation as the dimension of variables increases (see in Figure \ref{fig-swe_rmsle}).
This deterioration in performance is attributed to the sparsity of the data in high-dimensional spaces, which hinders the ability of the linear sieve methods to effectively learn from the data and generalize to new, unseen instances.

To address the curse of dimensionality, a variety of $D^{3}$ (Direct DRE with Dimension reduction) methods have been proposed \cite{sugiyamaDimensionalityReductionDensity2010,sugiyamaDirectDensityratioEstimation2011,yamadaDirectDensityRatioEstimation2011} by incorporating a dimension reduction step prior to applying a linear sieve estimation. These dimension reduction techniques assume that the density ratio function is located in a low-dimensional intrinsic space characterized by a linear transform of the high-dimensional variables. However, this fundamental assumption can be restrictive in many practical scenarios.


To overcome these limitations, in this paper, we propose a novel method for DRE based on the projection pursuit (PP) \citep{friedmanProjectionPursuitAlgorithm1974}. The core idea is to approximate the density ratio function by a product of PP functions and estimate them iteratively. Each of the PP functions is a projection of the target function to a low-dimensional space. As a result, at each iteration, we only need to estimate a semiparametric single-index function 
based on a \emph{univariate} linear sieve basis. It is known that the approximation error vanishes as the number of iterations approaches infinity, mitigating the curse of dimensionality when the target function can be expressed as many (or infinitely many) low-dimensional projections along different directions \citep{diaconisNonlinearFunctionsLinear1984}. We further provide a theoretical justification for our method by establishing its consistency and convergence rate for the proposed estimator. In application, we apply the proposed method in causal inference, mutual information estimation and covariate shift adaptation, and find consistent improvements in performance.



This paper is organized as follows. Section \ref{sec:related-work} discusses recent advancements in the field through analysis of related work. Section \ref{sec:dre} introduces the conventional methods for DRE from the perspective of model specification. In Section \ref{sec:ppdre}, we propose the projection pursuit DRE method that can achieve efficient estimation even in high-dimensional data settings, and we establish the theoretical properties. Section \ref{sec:exp} demonstrates the superiority of our proposed method in various applications.



\section{Related Work}\label{sec:related-work}

While this work primarily addresses the curse of dimensionality in DRE, the field has also made significant progress on two related challenges: stabilizing estimators through effective regularization strategies, and addressing geometric disparities between distributions known as density-chasm effects.

\textbf{Regularized Kernel Learning Methods}. The regularization scheme within reproducing Kernel Hilbert space (RKHS) has been developed for estimating the DRE problem. 
\citet{qichao2013InverseDensity} reformulated the DRE problem as an inverse problem
in terms of an integral operator corresponding to a kernel, then proposed a regularized estimation method with an RKHS norm penalty.  \citet{gizewski20222OnaRegularization} applied the regularized kernel methods in the context of unsupervised domain adaptation under covariate shift and developed the convergence rates. \citet{gruberOvercomingSaturationDensity2024} proposed iterated regularization and developed an improved error bounds faster than the non-iterated error bound under the Bregman distance and certain regular conditions (e.g. source condition and capacity condition). \citet{nguyen2024OnRegularized} established the pointwise convergence rate of the regularized estimator taking into
account both the smoothness of the density ratio and the capacity of the space in which it is estimated. 

\textbf{Density-Chasm Problem} 
Density ratio estimation faces an additional challenge known as the density-chasm problem, which occurs when the distributions differ substantially  \cite{rhodesTelescopingDensityRatioEstimation2020, choiFeaturizedDensityRatio2021,choiDensityRatioEstimation2022}. This phenomenon arises because samples are less likely to be observed in the low-density regions between the two distributions. To overcome this challenge, \citet{choiFeaturizedDensityRatio2021} proposed an invertible parametric transform mapping the data onto a shared feature space, thereby bringing the transformed densities become closer. Then they estimate the density ratio in the feature space based on the key property that, the ratio remains invariant under such invertible transformation.
Notably, this invertible transformation preserves the original data dimensionality.
Therefore, it is necessary to clarify that they neither map the data to a low-dimensional latent space nor address the curse of dimensionality as focused by our paper.

\section{Density Ratio Estimation}\label{sec:dre}
Let $p(\x)$ and $q(\x)$ be two probability density functions of the target and reference datasets respectively, where $\x\in\mathbb{R}^d$ is a $d$-dimensional variable.
The density ratio estimation (DRE) problem is to estimate
\[
r^{*}(\x):=\frac{p(\x)}{q(\x)}
\]
based on two independently and identically
distributed ($i.i.d.$) samples  from the two referred distributions, i.e. $\{\x_{i}^{p}\}_{i=1}^{n_{p}}\stackrel{\text{i.i.d.}}{\sim}p(\x)$
and $\{\x_{i}^{q}\}_{i=1}^{n_{q}}\stackrel{\text{i.i.d.}}{\sim}q(\x)$. To make the density ratio function $r^{*}(\x)$ well-defined, we assume $q(\x)$ dominates $p(\x)$, i.e. $p(\x)>0$  implies that $q(\x)>0$, and $\mathcal{X}\subset \mathbb{R}^d$ denotes the support of $r^*(\x)$.



Below, we briefly summarize the existing methods for the DRE problem and highlight their limitations.

\paragraph{Parametric Methods}

The density ratio function is assumed to satisfy the following parametric model \cite{liuChangepointDetectionTimeseries2013,liuSupportConsistencyDirect2017,nagumoDensityRatioEstimation2024}:
\begin{equation*}
    r^*(\x)=C\exp\{\bt^\top h(\x)\},
\end{equation*}
where $C\in \R$ is a normalizing constant, $h(\x) : \R^d\mapsto \R^p$ is the feature transformation function, $p$ is a known \emph{fixed} positive integer, and $\bt\in\R^p$ is the parameter of interest to be estimated. See  \citet[Appendix A]{nagumoDensityRatioEstimation2024} for more detailed discussion on this parametric formulation.  The parametric formulation is particularly useful for sparse estimation in high-dimensional problems. However, it relies heavily on the correct model specification and may fail to capture non-linear relationships or interactions within the data as effectively as non-parametric methods. This limitation diminishes their adaptability and flexibility in handling complex or varying datasets. 

\paragraph{Nonparametric Linear Sieve Methods} The density ratio function is approximated by a sequence of increasingly complex linear models \cite{kanamoriLeastsquaresApproachDirect2009,sugiyamaDirectImportanceEstimation2008,sugiyamaDensityRatioEstimation2012}:
\begin{equation*}
    r^*(\x)\approx r^{lm}_p(\x):=\bt^\top \boldsymbol{\psi}(\x),
\end{equation*}
or log-linear models \cite{kanamoriTheoreticalAnalysisDensity2010,tsuboiDirectDensityRatio2009}:
$$  r^*(\x)\approx r^{llm}_p(\x):=C\exp\{\bt^\top\boldsymbol{\psi}(\x)\},$$
where $\boldsymbol{\psi}(\x):\R^d\mapsto\R^p$ is a user-specified basis function (also called ``sieve" in statistical literature), $\bt\in\R^p$ is the $p$-dimensional coefficient parameter, and $p\in\mathbb{N}$ is unknown and will go to infinity at an appropriate rate as the sample size increases. The linear sieve methods adaptively learn the model parameters $(\bt,p)$ from the data by minimizing the empirical discrepancy between the true density ratio function and its linear sieve estimators under some  distance measure (e.g. the Kullback-Leibler divergence, the squared distance, and the general Bregman distance). The linear sieve methods seek to balance model complexity with interpretability and computational feasibility. 

However, the performance of linear sieve methods will significantly deteriorate as the dimension of $\x$ becomes large. Such a curse of dimensionality can be explained as follows. The linear sieve approximation error for the density ratio function is of order $p^{-s/d}$ \citep[Theorem 8]{lorentz1986approximation}, where $s$ is the H\"older-smoothness of $r^*(\x)$, thus it requires $p$ larger than $[1/\epsilon]^{d/s}$ to achieve an $\epsilon$-error approximation accuracy. On the other hand, a large $p$ will significantly enlarge the variance of these estimators; indeed, the variance of the linear sieve estimator is of order $\sqrt{p/n}$ \citep[Theorem 15.1]{li2023nonparametric}.



\paragraph{Deep Neural Network Methods}

Several deep neural network-based methods have been proposed for the DRE problem \cite{namDirectDensityRatio2015,fangRethinkingImportanceWeighting2020,rhodesTelescopingDensityRatioEstimation2020,katoNonNegativeBregmanDivergence2021,choiDensityRatioEstimation2022}, with the aim of improving performance in high-dimensional data. Despite their superior performance, these methods tend to lack good interpretability and theoretical guarantees. Moreover, the effectiveness of deep neural network estimation is intrinsically dependent on the availability of substantial training datasets, and its computational burden is much heavier than the parametric and linear sieve methods.

\section{Projection Pursuit DRE}\label{sec:ppdre}
Projection pursuit (PP) is a statistical technique used
for multidimensional data analysis. The key idea is to approximate a high-dimensional function by progressively projecting it onto efficient low-dimensional spaces that capture the most significant features of the data structure.
It was first developed by \citet{friedmanProjectionPursuitAlgorithm1974} for exploratory data analysis and has been applied to various problems such as
PP-classification \cite{leeProjectionPursuitExploratory2005,dasilvaProjectionPursuitForest2021}, PP-regression \cite{friedmanProjectionPursuitRegression1981,zhan2022ensemble},
and PP-density estimation \cite{friedmanProjectionPursuitDensity1984,aladjem2005projection}. Being inspired by the
strengths of PP in mitigating the curse of dimensionality \citep{huberProjectionPursuit1985}, we propose a PP-based method for estimating the density ratio $r^*(\x)$ and develop valid asymptotic theories.

\subsection{Projection Pursuit Approximation}
We propose to approximate the density ratio function $r^*(\x)$ by the multiplicative
projection pursuit:
\begin{equation}\label{eq:ppa}
  r^*(\x)\approx  r_K(\x)=\prod_{k=1}^{K}f_{k}(\a_{k}^{\top}\x)\,,
\end{equation}
where $K\in\mathbb{N}$ denotes the
number of projections, $\{\a_{k}\}_{k=1}^K$ are unit $d$-dimensional vectors indicating the projection directions, and $\{f_{k}\}_{k=1}^K$ are unknown univariate pursuit functions. 
That is, the PP approximation \eqref{eq:ppa} converts the estimation of $r^*(\x)$, whose direct estimation is challenging in the case of large-dimensional $\x$, to simpler tasks of estimating projected univariate functions $\{f_{k}(\a_{k}^{\top}\x)\}_{k=1}^K$.

Estimation of $\{f_{k}(\a_{k}^{\top}\x)\}_{k=1}^K$ can be carried out iteratively based on the relation $r_k(\x)=r_{k-1}(\x)f_{k}(\a_{k}^{\top}\x)$ for $k\in\{1,...,K\}$, where $r_0(\x)\equiv1$. Specifically, let $\mathbb{E}_q[\cdot]$ and $\mathbb{E}_p[\cdot]$ denote the expectation taken with respect to (w.r.t.) the probability densities $q(\cdot)$ and $p(\cdot)$, respectively. Suppose that the first $k-1$ terms are given, that is $r_{k-1}(\x)=\prod_{m=1}^{k-1}f_{m}(\a_{m}^{\top}\x)$ is given, the $k$th projection direction $\a_{k}$ and the $k$th pursuit function $f_{k}$ can be determined by minimizing the  $L^2$-distance, $\mathbb{E}_q\{[r^*(\x) - r_{k-1}(\x)f(\a^\top \x)]^2\}$. We show in Appendix~\ref{sec:minimization} that minimizing the $L^2$-distance w.r.t. $f$ or $\a$ is equivalent to minimizing 
\begin{align}\label{def:f_ak0}
    H(f,\a):=&\E_{q}[r_{k-1}^{2}(\x)f^{2}(\a^{\top}\x)]\notag\\
    &\qquad -2\E_{p}[r_{k-1}(\x)f(\a^{\top}\x)]\,.
\end{align}

For model identification, we assume $\a_{k}\in \mathbb{S}^+_d$ as in \citet{wangSPLINEESTIMATION2009}, where $\mathbb{S}^+_d$ is the $d$ dimensional upper unit hemisphere, i.e. $\mathbb{S}^+_d=\{\a=(a_1,\ldots,a_d)\in\R^d,\|\a\|=1,\  a_1>0\}$, and $\|\cdot\|$ denotes the Euclidean norm, i.e. for a vector $\boldsymbol{v}$, $\|\boldsymbol{v}\|:=\sqrt{\boldsymbol{v}^\top\boldsymbol{v}}$.

\subsection{Estimation Procedure}
Since the function space for searching $f_k$ is infinitely dimensional, direct optimization based on the sample analogue of \eqref{def:f_ak0} is impossible. We consider seeking the estimator of the pursuit function $f_k(z)$, for $z\in\{\a^\top\x: \a \in \mathbb{S}_d^+, \x\in\mathcal{X}\}$, from the linear sieve class: 
\[
\F_{J_k}:=\left\{\bb^{\top}\boldsymbol{\Phi}_{k}(z)=\sum_{j=1}^{J_k}\b_{j}\phi_{j}(z),\ J_k\in\mathbb{N}\right\}\,,
\]
where $\{\phi_{j}\}_{j=1}^{J_k}$ is a sequence of univariate basis, $\boldsymbol{\Phi}_{k}(z)=(\phi_{1}(z),\ldots,\phi_{J_{k}}(z))^{\top}$, and $\bb=(\b_1,...,\b_{J_k})^{\top}$ is the approximation coefficients. The rationale is that any continuous function can be approximated arbitrarily well by a linear sieve in $\F_{J_k}$ as $J_k$ goes to infinity \citep[see e.g.,][]{chenLargeSampleSieve2007}.
    
For $k=1,\ldots,K$, based on \eqref{def:f_ak0} and the linear sieve class, we define the estimator in the $k$th iteration of $(f_{k},\a_k)$ by
\begin{align}\label{eq:ahatbhat}
\hat{f}_{k}(z):=\hat{\text{\ensuremath{\bb}}}_{k}^{\top}\boldsymbol{\Phi}_{k}(z),\ (\hat{\a}_{k},\hat{\bb}_k):=\arg\min_{\a,\bb} \hat{\mathcal{L}}_{k}(\a,\bb;\lambda)
\end{align}
where
\begin{align}
    \mathcal{\hat{L}}_{k}(\a,\bb;\lambda)&:=\frac{1}{n_{q}}\sum_{i=1}^{n_{q}}\hat{r}_{k-1}^{2}(\x_{i}^{q})\cdot[\text{\ensuremath{\bb}}^{\top}\boldsymbol{\Phi}_{k}(\a^{\top}\x_{i}^{q})]^{2} \label{eq:min} \\
    -&\frac{2}{n_{p}}\sum_{i=1}^{n_{p}}\hat{r}_{k-1}(\x_{i}^{p})\cdot\text{\ensuremath{\bb}}^{\top}\boldsymbol{\Phi}_{k}(\a^{\top}\x_{i}^{p})+\lambda\|\bb\|^2\nonumber
\end{align}
is the regularized empirical loss, where $\hat{r}_{k-1}(\x) := \prod_{m=0}^{k-1}\hat{f}_{m}(\hat{\a}_m^\top\x)$, $\|\bb\|^2:=\bb^{\top}\bb$ is the $\ell_{2}$-penalty on the model complexity, and $\lambda>0$ is a tuning parameter. Here, we use the $\ell_{2}$-penalty to circumvent the over-fitting problem, while maintaining a closed-form solution in $\bb$ for the problem \eqref{eq:min} when keeping $\a$ fixed.


\begin{proposition} \label{prop:betaclosedform}For every fixed $\a\in\R^{d}$, we have
\begin{align*}
	&\hat{\bb}_k(\a) := \underset{\bb}{\arg\min}\  \mathcal{\hat{L}}_{k}(\a,\bb;\lambda)\\
    &={\left[\frac{\boldsymbol{Z}_{k}(\a)^{\top}\boldsymbol{Z}_{k}(\a)}{n_{q}} +\lambda\I_{J_k}\right]^{-1}\left[\frac{\boldsymbol{W}_{k}(\a)}{n_{p}}\right]\,,}
\end{align*}
where
\begin{align*}
	\boldsymbol{Z}_{k}(\a) & :=\{\hat{r}_{k-1}(\x_{i}^{q})\boldsymbol{\Phi}_{k}(\a^{\top}\x_{i}^{q})\}_{i=1}^{n_{q}}\in\R^{n_{q}\times J_{k}},\\
	\boldsymbol{W}_{k}(\a) & :=\sum_{i=1}^{n_{p}}\hat{r}_{k-1}(\x_{i}^{p})\boldsymbol{\Phi}_{k}(\a_{k}^{\top}\x_{i}^{p})\in\R^{J_{k}},
\end{align*}
and $\boldsymbol{I}_{J_k}$ is an identity matrix of size $J_k\times J_k$.
\end{proposition}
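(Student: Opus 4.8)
The plan is to recognize that, for fixed $\a$, the objective $\hat{\mathcal{L}}_{k}(\a,\bb;\lambda)$ in \eqref{eq:min} is a strictly convex quadratic in $\bb$, so its unique minimizer is pinned down by the first-order condition. First I would rewrite the three summands of \eqref{eq:min} as quadratic and linear forms in $\bb$. Writing $\boldsymbol{z}_i := \hat{r}_{k-1}(\x_i^q)\boldsymbol{\Phi}_k(\a^\top\x_i^q)$ for the $i$th row of $\boldsymbol{Z}_k(\a)$, the first term becomes $\frac{1}{n_q}\sum_{i=1}^{n_q}(\bb^\top\boldsymbol{z}_i)^2 = \bb^\top\bigl(\frac{1}{n_q}\sum_i \boldsymbol{z}_i\boldsymbol{z}_i^\top\bigr)\bb = \bb^\top \frac{\boldsymbol{Z}_k(\a)^\top\boldsymbol{Z}_k(\a)}{n_q}\bb$, where the last equality is just the Gram-matrix identity $\boldsymbol{Z}_k(\a)^\top\boldsymbol{Z}_k(\a) = \sum_i \boldsymbol{z}_i\boldsymbol{z}_i^\top$. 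The second term is linear: $-\frac{2}{n_p}\sum_i \hat{r}_{k-1}(\x_i^p)\,\bb^\top\boldsymbol{\Phi}_k(\a^\top\x_i^p) = -\frac{2}{n_p}\bb^\top\boldsymbol{W}_k(\a)$ by the definition of $\boldsymbol{W}_k(\a)$, and the penalty is simply $\lambda\,\bb^\top\I_{J_k}\bb$.

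Collecting these terms I obtain $\hat{\mathcal{L}}_{k}(\a,\bb;\lambda) = \bb^\top\boldsymbol{A}\bb - \frac{2}{n_p}\bb^\top\boldsymbol{W}_k(\a)$ with $\boldsymbol{A} := \frac{\boldsymbol{Z}_k(\a)^\top\boldsymbol{Z}_k(\a)}{n_q}+\lambda\I_{J_k}$. Since $\boldsymbol{Z}_k(\a)^\top\boldsymbol{Z}_k(\a)$ is positive semidefinite and $\lambda>0$, the matrix $\boldsymbol{A}$ is positive definite, hence invertible, and the objective is strictly convex in $\bb$. Taking the gradient and setting it to zero gives $2\boldsymbol{A}\bb - \frac{2}{n_p}\boldsymbol{W}_k(\a) = \0$, and solving yields the claimed closed form $\hat{\bb}_k(\a) = \boldsymbol{A}^{-1}\frac{\boldsymbol{W}_k(\a)}{n_p}$. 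Strict convexity guarantees this stationary point is the unique global minimizer, which completes the argument.

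There is essentially no serious obstacle here: the only point requiring care is the bookkeeping in the first step, namely verifying that the weighted sum of squares $\frac{1}{n_q}\sum_i \hat{r}_{k-1}^2(\x_i^q)[\bb^\top\boldsymbol{\Phi}_k(\a^\top\x_i^q)]^2$ equals $\bb^\top\frac{\boldsymbol{Z}_k(\a)^\top\boldsymbol{Z}_k(\a)}{n_q}\bb$, i.e. that the row-weighting by $\hat{r}_{k-1}(\x_i^q)$ is correctly absorbed into the rows of $\boldsymbol{Z}_k(\a)$. Once the objective has been cast in the standard quadratic form $\bb^\top\boldsymbol{A}\bb - 2\bb^\top\boldsymbol{c}$, the remainder is the textbook ridge-regression normal equation, and invertibility follows for free from the positive-definiteness contributed by the $\lambda\I_{J_k}$ term.
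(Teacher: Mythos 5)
Your proof is correct and follows essentially the same route as the paper's: rewrite $\hat{\mathcal{L}}_k(\a,\bb;\lambda)$ as the quadratic form $\bb^\top\bigl[\tfrac{\boldsymbol{Z}_k(\a)^\top\boldsymbol{Z}_k(\a)}{n_q}+\lambda\I_{J_k}\bigr]\bb-\tfrac{2}{n_p}\bb^\top\boldsymbol{W}_k(\a)$ and set the gradient to zero. Your explicit remarks on positive definiteness of $\boldsymbol{A}$ and strict convexity are a welcome small addition that the paper leaves implicit.
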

\begin{proof}
Using above notation, $\mathcal{\hat{L}}_{k}(\a,\bb;\lambda)$ can be written as a quadratic function of $\bb$:
\begin{align*}
	\mathcal{\hat{L}}_{k}(&\a,\bb;\lambda)  =\frac{1}{n_{q}}\sum_{i=1}^{n_{q}}[\text{\ensuremath{\bb}}^{\top}\hat{r}_{k-1}(\x_{i}^{q})\boldsymbol{\Phi}_{k}(\a_{}^{\top}\x_{i}^{q})]^{2}\\
	& -\frac{2}{n_{p}}\text{\ensuremath{\bb}}^{\top}{\left[\sum_{i=1}^{n_{p}}\hat{r}_{k-1}(\x_{i}^{p})\cdot\boldsymbol{\Phi}_{k}(\a_{}^{\top}\x_{i}^{p})\right ]}+\lambda\bb^{\top}\bb\\
	& =\bb^{\top}\left[\frac{\boldsymbol{Z}_{k}^{\top}(\a)\boldsymbol{Z}_{k}(\a)}{n_{q}}+\lambda \I_{J_k}\right]\bb-\frac{2}{n_{p}}\text{\ensuremath{\bb}}^{\top}\boldsymbol{W}_{k}(\a).
\end{align*}
Differentiating it with respect to $\bb$ and setting the derivative to zero give the desired result.
\end{proof}

By Proposition \ref{prop:betaclosedform} and \eqref{eq:ahatbhat}, the estimators of ${f}_{\a}(z)$ and $\a_k$ can be represented respectively as
\begin{equation}\label{def:f_a_hat}
    \hat{f}_{\a,k} (z):= \hat{\bb}_k(\a)^\top \boldsymbol{\Phi}_k(z)\,,
\end{equation}
and
\begin{align*}
	{\hat{\a}_{k}} =&\underset{\a\in\R^{d}}{\text{argmin}}\  \mathcal{\hat{L}}_{k}(\a,\hat{\bb}_k(\a);\lambda)\\
    =&\underset{\a\in\R^{d}}{\text{argmin}}\ \frac{1}{2n_{q}}\lVert\boldsymbol{Z}_{k}(\a)\hat{\bb}_k(\a)\lVert^{2}\\
	& -\frac{1}{n_{p}}\hat{\bb}_k(\a)^{\top}\boldsymbol{W}_{k}(\a)+\lambda\hat{\bb}_k(\a)^{\top}\hat{\bb}_k(\a)\,.
   \end{align*}
Then, we have
$\hat{f}_k(z)=\hat{f}_{\hat{\a}_k,k}(z)$,
 the estimators of $\{r_k(\x)\}_{k=1}^{K}$ are defined by
\begin{align*}
&\hat{r}_{k}(\x)=\hat{r}_{k-1}(\x)\hat{f}_k(\hat{\a}_{k}^{\top}\x), \ k\in\{1,...,K\}\,,
\end{align*}
and the estimator of the density ratio function $r^*(\x)$ is given by $\hat{r}_{K}(\x)$. In practice, to ensure non-negativity of the estimated density ratio
function, we truncate the negative values of $\hat{f_k}$ to the minimum positive estimated values for every iteration. The complete process of our approach given tuned hyper-parameters is summarized in Algorithm \ref{algo-ppDRE}.

In the following theorem, we focus on each iteration $k$ and show that, given the estimate $\hat{r}_{k-1}(\x)$ of $r_{k-1}(\x)$, as the sample sizes $n_p, n_q \rightarrow \infty$, $\hat{f}_{\a,k}$ and $\hat{\a}_k$ converge respectively to 
\begin{align}\label{def:f_ak}
    &f_{\a,k}:= \underset{f}{\text{argmin}}~H(f,\a)\,,
\end{align}
and 
\begin{equation}\label{def:ak}
\a_{k} := \underset{\a}{\text{argmin}}\ H(f_{\a,k},\a)\,.
\end{equation}


\begin{theorem}\label{theorem}
  Suppose that Assumptions~\ref{as:support} to \ref{as:Hessian} in Appendix~\ref{sec:proof} hold. Then, for each $k=1,\ldots,K$, if $n_p^{-1}\sum^{n_p}_{i=1}\{\hat{r}_{k-1}(\x_i^p) - r_{k-1}(\x_i^p)\}^2 = O_p(\xi_{n,k-1})$, we have
  \begin{align}
  &\sup_{\a\in\mathcal{A}}\sup_{z\in\mathcal{Z}}\left|\hat{f}_{\a, k}(z) - f_{\a,k}(z)\right|\notag\\
  &\quad\quad = O_p\bigg(J_k^{-s}\zeta_0(J_k)+\sqrt{\xi_{n,k-1}}\zeta_0(J_k)^2\notag\\
  &\qquad\qquad\qquad+ \frac{\sqrt{J_k}\zeta_0(J_k)^2}{\sqrt{n_q\wedge n_p}}\bigg)\,,\label{Thm3.2_1}
  \end{align}
  and
  \begin{align}\label{Thm3.2_2}
  \|\hat{\a}_k - \a_k\| =& O_p\bigg(\bigg\{J_k^{-(s-1)}+\sqrt{\xi_{n,k-1}}\\
  &\qquad+\frac{\sqrt{J_k}}{\sqrt{n_q\wedge n_p}}\bigg\}\cdot \sqrt{\tilde{\zeta}_1(J_k)}\bigg)\notag\,,
  \end{align}
  
where $\mathcal{A}\subset\mathbb{S}_d^{+}$ is a compact set containing $\a_k$, $\mathcal{Z}:=\{\a^\top\x:\a\in\mathcal{A}~\text{and}~\x\in\mathcal{X}\}$, $s$ denotes the number of continuous derivatives that $f_{\a,k}(z)$ possesses w.r.t. $z\in\mathcal{Z}$ for any $\a\in\mathcal{A}$, $\zeta_0(J_k)$ is a sequence of constants such that $\sup_{z\in\mathcal{Z}}\|\boldsymbol{\Phi}_k(z)\|\leq \zeta_0(J_k)$, $\tilde{\zeta}_1(J_k)$ is a sequence of constants such that the maximum eigenvalue of $\mathbb{E}[\boldsymbol{\Phi}_k^{(1)}(\a^\top\x)\boldsymbol{\Phi}_k^{(1)}(\a^\top\x)^\top]$ is bounded by $\tilde{\zeta}_1(J_k)$ uniformly in $\a\in\mathcal{A}$, and $n_q\wedge n_p = \min(n_q, n_p)$. 

Finally, we have
{\small
\begin{align*}
    \sup_{\boldsymbol{x}\in\mathcal{X}}\left| \hat{r}_{K}(\boldsymbol{x}) - r_{K}(\boldsymbol{x})\right| &=O_{p}\Bigg(\sum_{\ell=1}^K\Bigg[\Bigg\{J_{\ell}^{-(s-1)}+ \sqrt{\frac{J_{\ell}}{n_q \wedge n_p}}\Bigg\} \\
    &\quad\qquad \times \prod_{i=\ell}^K\Big\{\sqrt{\tilde{\zeta}_1(J_i)}\vee \zeta^2_0(J_i)\Big\}\Bigg]\Bigg)\,,
\end{align*}
}
  where $\tilde{\zeta}_1(J_k)\vee \zeta_0(J_k) = \max\{\tilde{\zeta}_1(J_k), \zeta_0(J_k)\}$.
\end{theorem}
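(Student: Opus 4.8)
\emph{Proof proposal.} The plan is to propagate the per-iteration errors through the multiplicative structure $\hat{r}_k=\hat{r}_{k-1}\hat{f}_k(\hat{\a}_k^\top\x)$, $r_k=r_{k-1}f_k(\a_k^\top\x)$ by a one-step recursion and then unroll it. Writing $\Delta_k(\x):=\hat{r}_k(\x)-r_k(\x)$ and adding and subtracting $r_{k-1}(\x)\hat{f}_k(\hat{\a}_k^\top\x)$, I would first establish
\begin{align*}
\Delta_k(\x)&=\Delta_{k-1}(\x)\,\hat{f}_k(\hat{\a}_k^\top\x)\\
&\quad+r_{k-1}(\x)\big[\hat{f}_k(\hat{\a}_k^\top\x)-f_k(\a_k^\top\x)\big].
\end{align*}
Taking $\sup_{\x\in\mathcal{X}}$ yields $a_k\le M_k\,a_{k-1}+B_{k-1}E_k$, where $a_k:=\sup_\x|\Delta_k(\x)|$, $M_k:=\sup_\x|\hat{f}_k(\hat{\a}_k^\top\x)|$, $B_{k-1}:=\sup_\x|r_{k-1}(\x)|=O(1)$ (a product of bounded population pursuit functions over the compact $\mathcal{X}$), and $E_k:=\sup_\x|\hat{f}_k(\hat{\a}_k^\top\x)-f_k(\a_k^\top\x)|$. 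Since $\hat{r}_0=r_0\equiv1$ we have $a_0=0$, so everything reduces to bounding $E_k$ and $M_k$ and solving the recursion.

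Next I would bound $E_k$ by inserting $f_{\hat{\a}_k,k}(\hat{\a}_k^\top\x)$ and splitting into an estimation term $\mathrm{(I)}:=\sup_\x|\hat{f}_{\hat{\a}_k,k}(\hat{\a}_k^\top\x)-f_{\hat{\a}_k,k}(\hat{\a}_k^\top\x)|$ and a direction-perturbation term $\mathrm{(II)}:=\sup_\x|f_{\hat{\a}_k,k}(\hat{\a}_k^\top\x)-f_{\a_k,k}(\a_k^\top\x)|$. Because $\|\hat{\a}_k-\a_k\|=o_p(1)$ by \eqref{Thm3.2_2} and $\a_k$ is interior to $\mathcal{A}$, the event $\{\hat{\a}_k\in\mathcal{A}\}$ has probability tending to one, so $\mathrm{(I)}$ is dominated by the uniform-in-$\a$ rate \eqref{Thm3.2_1}. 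For $\mathrm{(II)}$, the \emph{population} pursuit function $f_{\a,k}$ is smooth in $(\a,z)$, hence $\a\mapsto f_{\a,k}(\a^\top\x)$ is Lipschitz uniformly on the compact $\mathcal{A}\times\mathcal{X}$; a mean-value bound then gives $\mathrm{(II)}=O_p(\|\hat{\a}_k-\a_k\|)$, controlled again by \eqref{Thm3.2_2}.

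The crucial step is closing the recursion. I would invoke the hypothesis with $\xi_{n,k-1}$ taken to be $a_{k-1}^2$, which is legitimate since $n_p^{-1}\sum_i\{\hat{r}_{k-1}(\x_i^p)-r_{k-1}(\x_i^p)\}^2\le a_{k-1}^2$, so $\sqrt{\xi_{n,k-1}}\le a_{k-1}$. Setting $c_k:=\sqrt{\tilde{\zeta}_1(J_k)}\vee\zeta_0(J_k)^2$, every $\sqrt{\xi_{n,k-1}}$-carrying term produced by \eqref{Thm3.2_1} and \eqref{Thm3.2_2} inside $B_{k-1}E_k$ is at most $a_{k-1}c_k$, and $M_k=O_p(1)\le c_k$ (the basis norms grow, so $c_k\ge1$ eventually); together these give the multiplicative coefficient $c_k$. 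The residual additive terms collect into $d_k:=C\{J_k^{-(s-1)}+\sqrt{J_k/(n_q\wedge n_p)}\}\,c_k$, after using $\zeta_0(J_k)\ge1$ to absorb $J_k^{-s}\zeta_0(J_k)\le J_k^{-(s-1)}\zeta_0(J_k)^2$ and $\sqrt{\tilde{\zeta}_1(J_k)}\le c_k$. Solving $a_k\le c_k a_{k-1}+d_k$ with $a_0=0$ gives $a_K=\sum_{\ell=1}^K d_\ell\prod_{i=\ell+1}^K c_i=\sum_{\ell=1}^K\{J_\ell^{-(s-1)}+\sqrt{J_\ell/(n_q\wedge n_p)}\}\prod_{i=\ell}^K c_i$, which is exactly the asserted rate.

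The main obstacle I anticipate is making this recursion genuinely self-consistent. I must (i) verify that $\mathrm{(II)}=O_p(\|\hat{\a}_k-\a_k\|)$ with a Lipschitz constant in $\a$ that stays bounded, which hinges on $f_{\a,k}$ being the population minimizer \eqref{def:f_ak}--\eqref{def:ak} (so its $z$-derivative is a fixed bounded function rather than a sieve object that could inflate the rate), and (ii) rigorously identify $\xi_{n,k-1}$ with $a_{k-1}^2$ so that the $\sqrt{\xi_{n,k-1}}$ contributions merge into the coefficient $c_k$ instead of spawning a separate, non-telescoping error stream. A secondary technical point is the uniform control of $M_k$ through $\sup_{\a\in\mathcal{A}}\|\hat{\bb}_k(\a)\|$ via the regularized normal equations of Proposition~\ref{prop:betaclosedform}.
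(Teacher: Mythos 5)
Your treatment of the \emph{final} display --- the recursion $\Delta_k=\Delta_{k-1}\hat f_k+r_{k-1}(\hat f_k-f_k)$, the identification of $\xi_{n,k-1}$ with the square of the sup-norm error $a_{k-1}$ (legitimate, since the empirical $L_2$ norm is dominated by the sup norm and $a_0=0$ starts the induction), the split of $E_k$ into an estimation term and direction-perturbation terms of order $\|\hat\a_k-\a_k\|$, and the unrolling $a_K=\sum_\ell d_\ell\prod_{i>\ell}c_i$ with $c_k=\sqrt{\tilde{\zeta}_1(J_k)}\vee\zeta_0(J_k)^2$ --- is essentially the paper's own ``outline of the proof,'' and your bookkeeping of the constants and absorptions (e.g.\ $J_k^{-s}\zeta_0(J_k)\le J_k^{-(s-1)}\zeta_0(J_k)^2$) is correct.

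The genuine gap is that you never prove \eqref{Thm3.2_1} and \eqref{Thm3.2_2}: you invoke them as inputs (``$\mathrm{(I)}$ is dominated by the uniform-in-$\a$ rate \eqref{Thm3.2_1}'', ``by \eqref{Thm3.2_2}''), but both displays are themselves conclusions of the theorem and carry nearly all of its technical weight. For \eqref{Thm3.2_1} the paper splits $\hat f_{\a,k}-f_{\a,k}$ into a sieve bias $f^*_{\a,k}-f_{\a,k}$ of order $J_k^{-s}\zeta_0(J_k)$ (via a convexity argument localizing $\bb^*_k(\a)$ near the approximating coefficients, Lemma~\ref{lemma:betastar_rate}) and a stochastic part whose uniformity over $\a\in\mathcal{A}$ requires a bracketing-entropy bound for the class $\{\x\mapsto b^\top\boldsymbol{\Phi}_k(\a^\top\x)\}$ combined with the maximal inequality, plus uniform eigenvalue control of $\hat{\Sigma}_{J_k,\lambda}(\a)^{-1}$ (Lemma~\ref{lemma:supSigma_rate}); none of this appears in your proposal. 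For \eqref{Thm3.2_2} one needs consistency of $\hat\a_k$ via a uniform law of large numbers and then a mean-value expansion of the first-order condition $\partial_{\a}\hat{\ell}_k(\hat\a_k)=0$; the decisive and non-obvious step there is that the population score vanishes at $\a_k$, which follows only from an envelope-type cancellation ($\partial_1 H(f_{\a},\a_k)\cdot\partial_{\a}f_{\a}\big|_{\a=\a_k}=0$ because $f_{\a,k}$ minimizes $H(\cdot,\a)$), while invertibility of the limiting Hessian rests on Assumption~\ref{as:Hessian}. Without these pieces your recursion has nothing to recurse on, so the proposal establishes only the easiest third of the theorem.
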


The proof is available in Appendix~\ref{sec:proof}. Assumption \ref{as:support} imposes compactness conditions on $\mathcal{X}$ and parameter spaces $\mathcal{A}$. 
Assumption \ref{as:smooth} requires $f_{\a,k}(z)$ to be bounded and bounded away from 0, $s$-times continuously differentiable w.r.t $z\in\mathcal{Z}$ and has a bounded derivative w.r.t. $\a\in\mathcal{A}$, which are some common regularity conditions in the literature. Assumption \ref{as:eigen} excludes near multicollinearity among the basis functions, $\boldsymbol{\Phi}_k(z)$, and regulates the rate of $J_k$ relative to $n_p$ and $n_q$ to guarantee the consistency of our estimators. Such conditions are standard in sieve regression. Assumption~\ref{as:Hessian} requires the Hessian matrix of the sieve approximation of $H(f,\a)$ to be positive definite at its minimum w.r.t. $\a$, which is met when the minimum is in the interior of $\mathcal{A}$.

\begin{algorithm}[ht]
	\caption{ppDRE}
	\label{algo-ppDRE}
	\begin{algorithmic}[1]
		\STATE \textbf{Input}: samples $\{\x_{i}^{p}\}_{i=1}^{n_{p}}$ and $\{\x_{i}^{q}\}_{i=1}^{n_{q}}$; number of basis functions~$J_k$, learning rate $\delta$, ridge penalty parameter $\lambda$, maximum steps $K$.
		
		\STATE \textbf{Initialize}: $\hat{r}_{0}(\x)\equiv1$
		
		\FOR{$k=1$ to $K$}
		\STATE Initialize randomly $\a^{(0)},\bg^{(0)}$, and set $t=1$.
		
		\WHILE{not converge}
		\STATE Calculate \begin{align*}\scriptsize
			\boldsymbol{Z}_{k}^{(t)} & \leftarrow\{\hat{r}_{k-1}(\x_{i}^{q})\boldsymbol{\Phi}_{k}(\a^{(t-1)\top}\x_{i}^{q};\bg^{(t-1)})\}_{i=1}^{n_{q}},\\
			\scriptsize\boldsymbol{W}_{k}^{(t)} & \leftarrow\sum_{i=1}^{n_{p}}\hat{r}_{k-1}(\x_{i}^{p})\boldsymbol{\Phi}_{k}(\a_{k}^{(t-1)\top}\x_{i}^{p};\bg^{(t-1)}).
		\end{align*}
		
		\STATE Update  $$\scriptsize\bb_{k}^{(t)}\leftarrow\left[\frac{1}{n_{q}}\boldsymbol{Z}_{k}^{(t)\top}\boldsymbol{Z}_{k}^{(t)}+\lambda\I_{J_k}\right]^{-1}\left[\frac{1}{n_{p}}\boldsymbol{W}_{k}^{(t)}\right].$$
		\STATE Evaluate the loss function 	$$\scriptsize\mathcal{\hat{L}}_{k}^{(t)}\leftarrow\frac{1}{n_{q}}\lVert\boldsymbol{Z}_{k}^{(t)}\bb_{k}^{(t)}\lVert^{2}-\frac{2}{n_{p}}\text{\ensuremath{\bb_{k}^{(t)\top}}}\boldsymbol{W}_{k}^{(t)}+\lambda\text{\ensuremath{\bb_{k}^{(t)\top}}}\text{\ensuremath{\bb_{k}^{(t)}}}.$$
		\STATE Update $(\a_{k}^{(t)},\bg_{k}^{(t)})$ with stochastic gradient descent algorithm, such as Adam, with learning rate $\delta$.
		\STATE Update $t\leftarrow t+1$
		\ENDWHILE
		\STATE Update $\hat{r}_{k}(\x)\leftarrow\hat{r}_{k-1}(\x)\cdot\bb_{k}^{(t)\top}\boldsymbol{\Phi}_{k}(\a_{k}^{(t)\top}\x;\bg_{k}^{(t)})$
		\ENDFOR
            
		\STATE \textbf{return} $\hat{r}_{K}(\x)$.
	\end{algorithmic}
\end{algorithm}

\begin{remark}[Selection of Tuning Parameters]\label{remark:parameters}

In practice, we suggest using cross-validation (CV) to determine the number of projections $K$ as well as other tuning parameters. Specifically, in experiments and applications, we adopt the Gaussian basis $\phi_{j}(z)=\phi(z;\gamma_j)=\exp\{-(z-\gamma_{j})^{2}/2\}$, where $\gamma_{j}$ is the location parameter of the Gaussian basis. For each $k\in\{1,\ldots,K\}$, we determine $\{\hat{\gamma}_{j}\}_{j=1}^{J_k}$, together with $\hat{\a}_k$ and $\hat{\bb}_k$ jointly by minimizing the loss function $\hat{\mathcal{L}}_k$. We monitor the minimal value of the loss function $\hat{\mathcal{L}}_K$ in a validation set for a gradually growing $K$,  and determine the optimal $K$ until no further improvement is observed. 
\end{remark}

\section{Experiments and Applications}\label{sec:exp}
In this section, we compare our proposed projection pursuit density ratio estimation (ppDRE) method with existing alternatives using experimental and real-world data. The compared methods include two classical linear sieve methods, KLIEP \citep{sugiyamaDirectImportanceEstimation2007} and uLSIF \citep{kanamoriLeastsquaresApproachDirect2009}, a linear sieve method with dimension reduction, $\text{D}^{3}$-LHSS \citep{sugiyamaDirectDensityratioEstimation2011}, a probabilistic classification approach \citep{qinInferencesCaseControlSemiparametric1998, bickelDiscriminativeLearningDiffering2007} based on machine learning classifiers, two latest baselines, fDRE \cite{choiFeaturizedDensityRatio2021} and RRND \cite{nguyen2024OnRegularized}, and a neural network-based density ratio estimation (nnDRE) method, which is a variant of our framework in the sense that it replaces the projection pursuit with a feedforward neural network to model the density ratio. Detailed descriptions of these baselines and implementation specifics are provided in Appendix \ref{appendix-baselines} and Appendix \ref{appendix-params}.

\begin{figure}[t]
	\raggedright
	\centering{}\subfloat[Estimated $\hat{r}(\x)$]{\includegraphics[width=0.19\paperwidth]{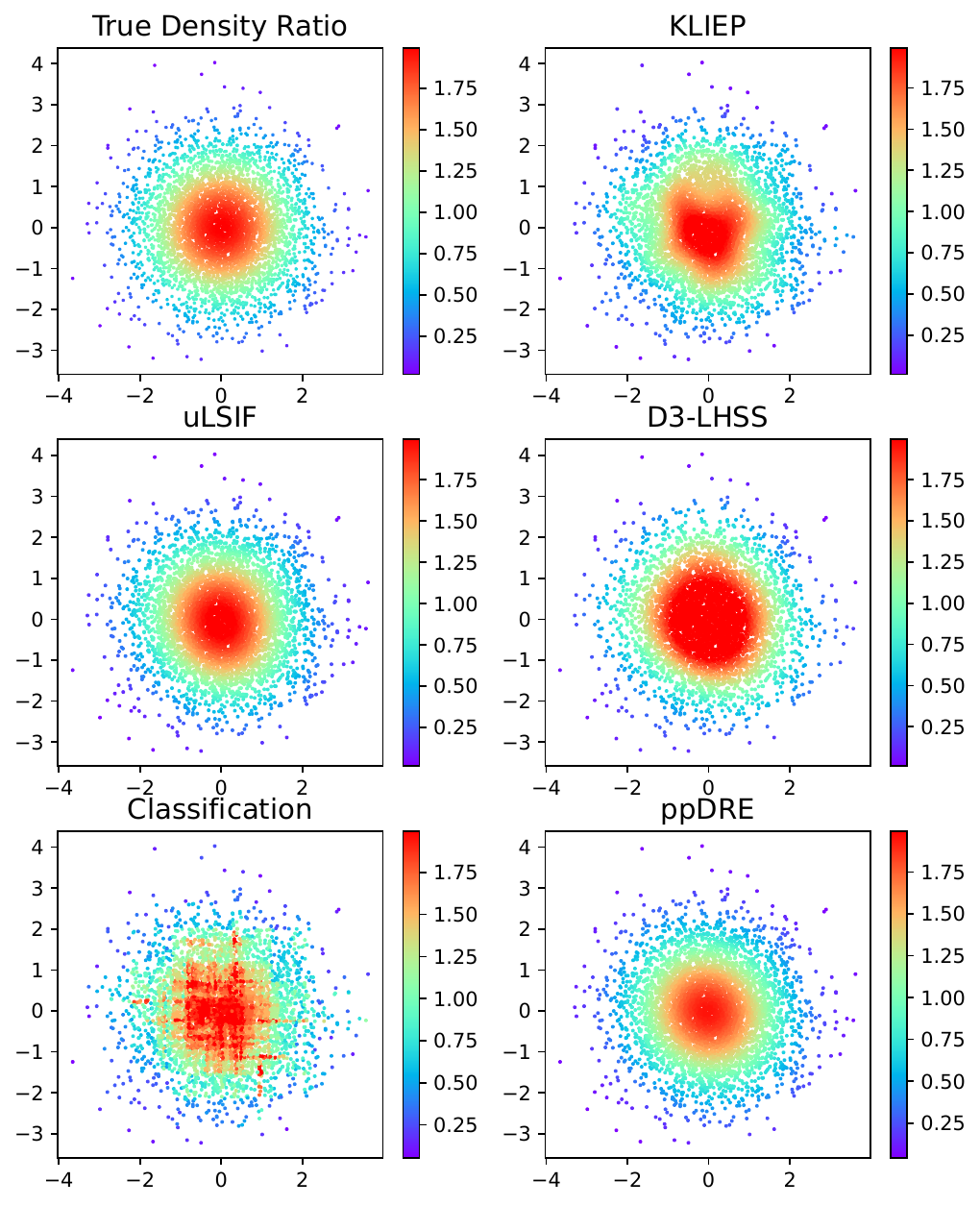}\label{fig-2dsample}}
    \hfill
	\centering{}\subfloat[Contour plot of $\hat{r}(\x)$ ]{\centering{}\includegraphics[width=0.19\paperwidth]{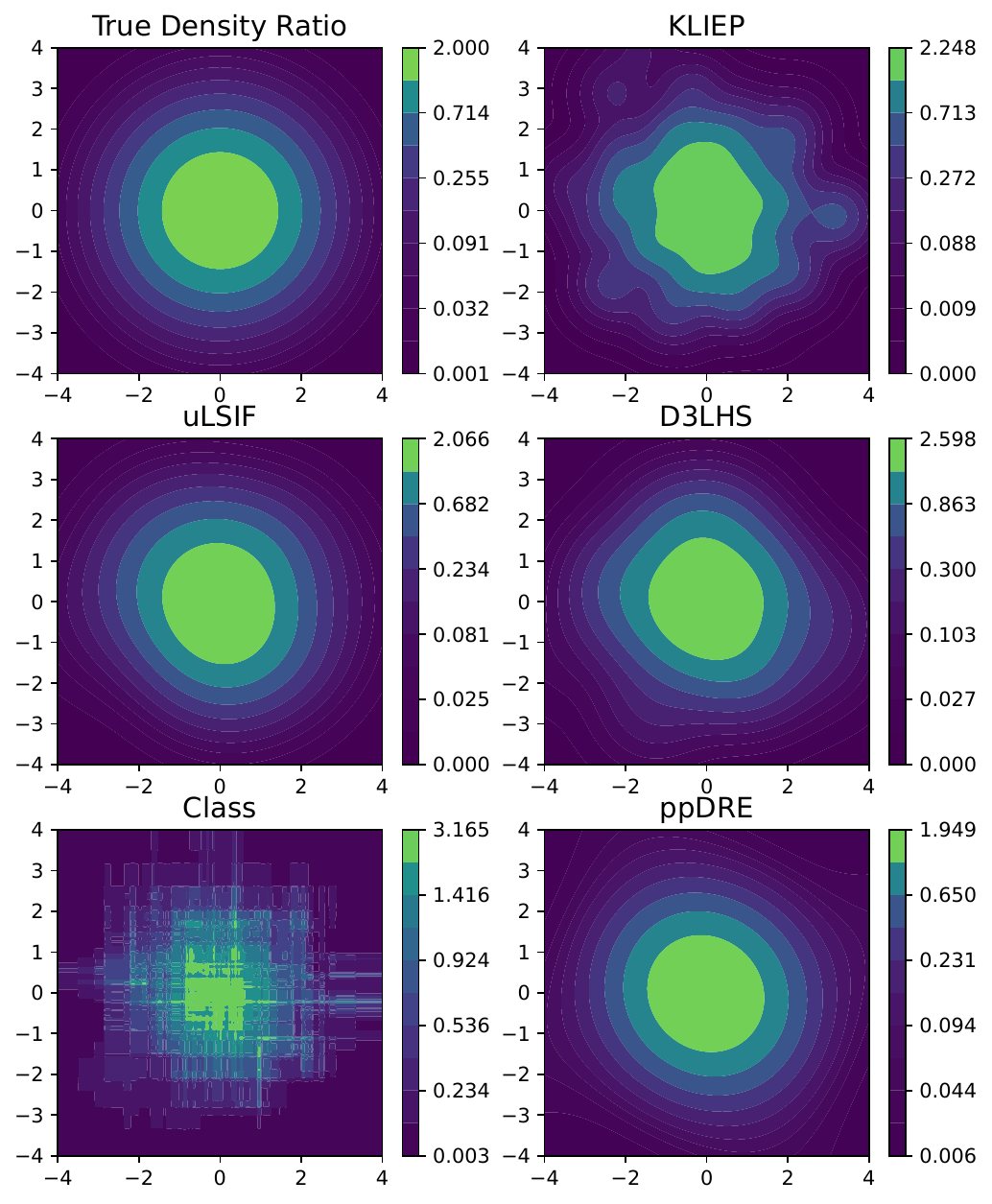}\label{fig-2dcontour}}
    \caption{2-D DRE Experiment. 
    The warmer color represents a higher estimated density ratio value. The top-left plot in each panel shows the true density ratio, while the remaining plots illustrate the estimates using various methods.}
    \label{figs-dre2d}
\end{figure}

\subsection{2-D DRE Experiment}
We first consider a toy example where $p(\x)=\mathcal{N}(\0_{d},\I_{d})$ and $q(\x)=\mathcal{N}(\0_{d},2\I_{d})$, where $\mathcal{N}(\mu,\S)$ denotes a multivariate Gaussian probability density with mean $\mu$ and covariance matrix $\S$, and $\I_{d}$ denotes an identity matrix of size $d$. We consider a low-dimensional case with $d=2$, which facilitates visualization of the estimated results. The sample sizes are $n_p=n_q=5000$. Estimates of this density ratio are
shown in Figure \ref{fig-2dsample}, and the corresponding contour plot is presented in Figure \ref{fig-2dcontour}. These figures demonstrate that the proposed ppDRE method and the uLSIF method significantly outperform the other baseline approaches, yielding estimates that closely align with the true density ratio.

\begin{figure}[t]
\subfloat[$\c=0.5\e_{1}$]{\centering{}\includegraphics[width=0.19\paperwidth]{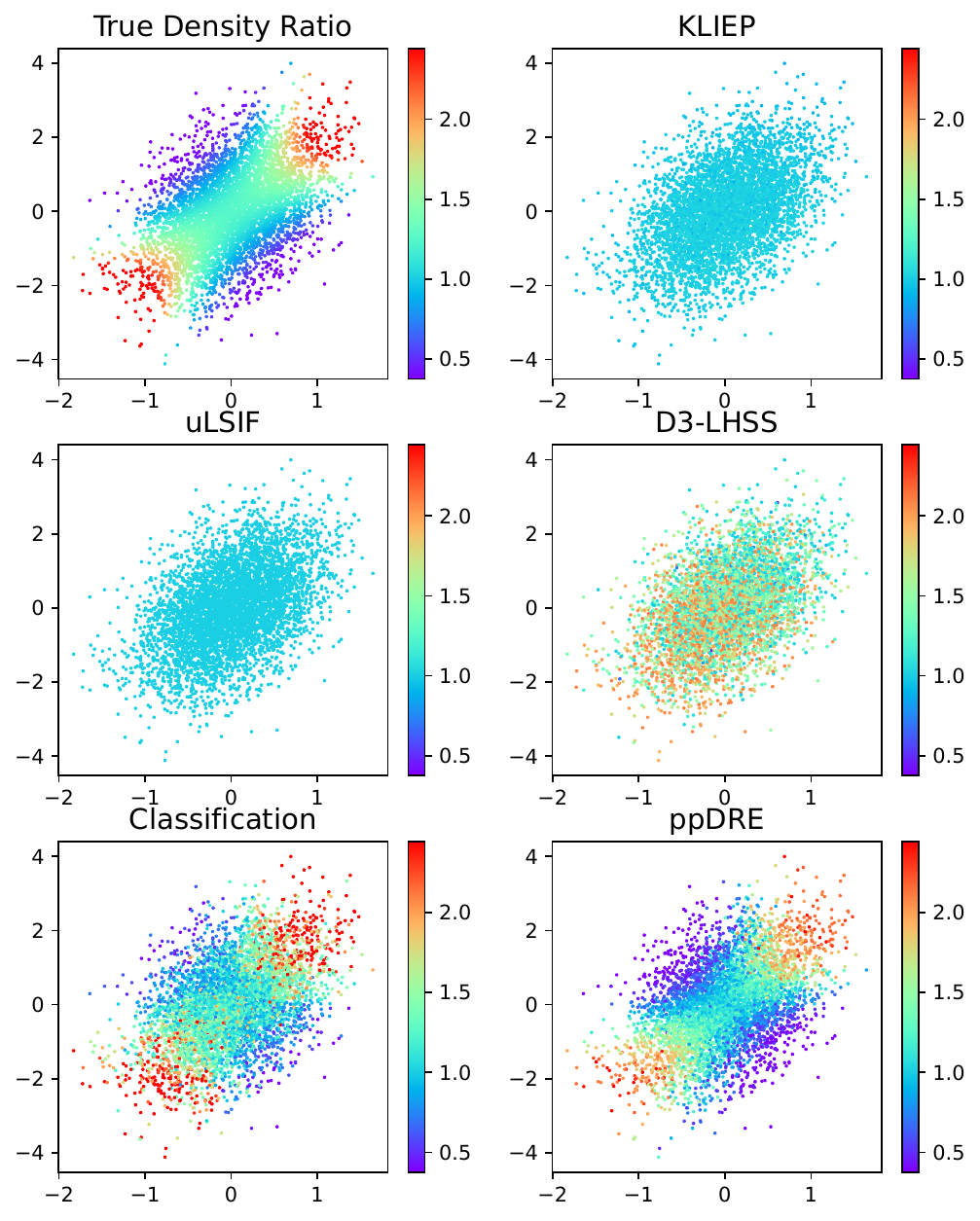}\label{fig-lowswe}}
\subfloat[$\c=2\e_{1}$]{\centering{}\includegraphics[width=0.19\paperwidth]{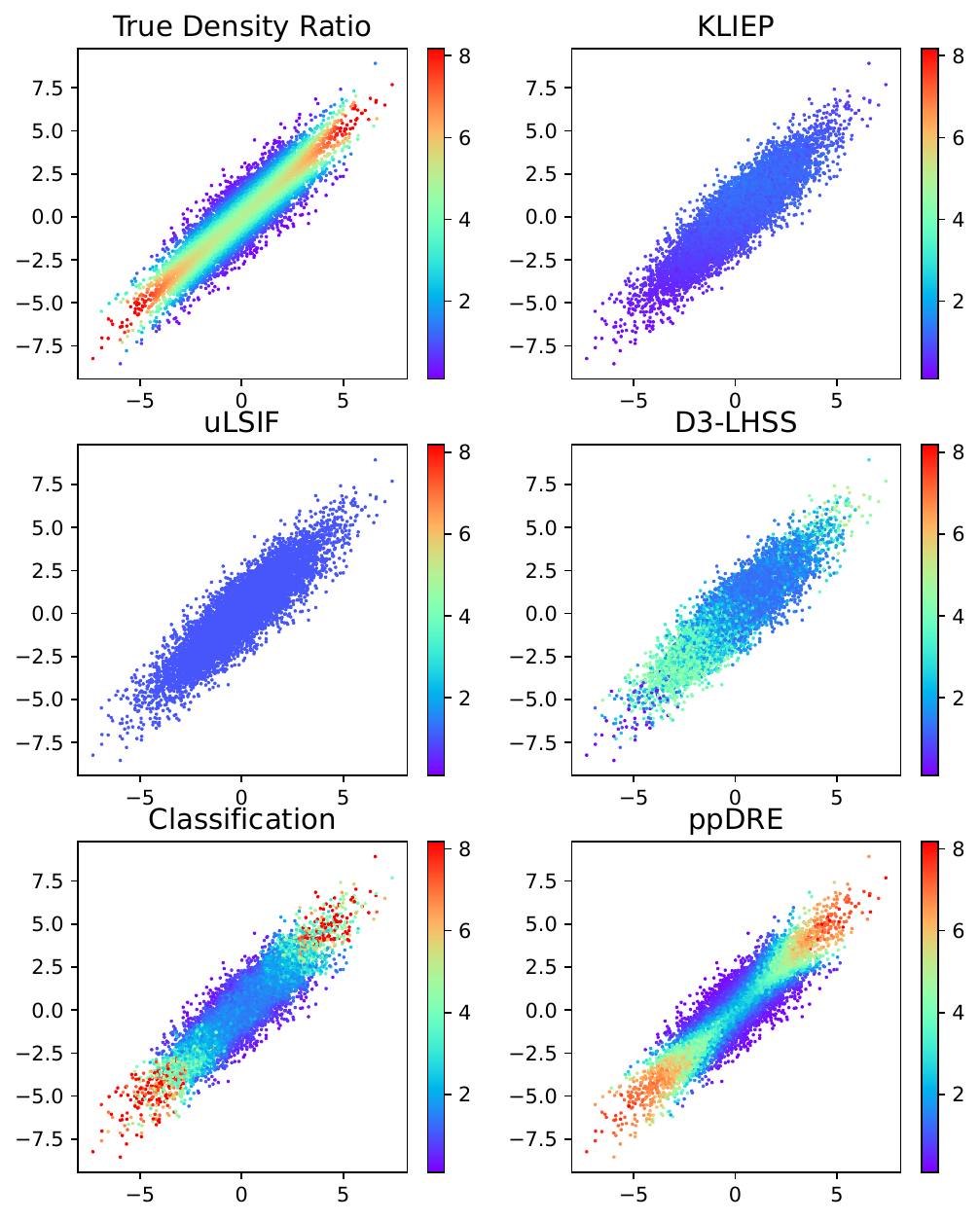}\label{fig-highswe}}
\caption{Stabilized weights estimation ($d_X=10$). The y-axis represents the treatment $t$ and the x-axis represents $\c^{\top}\x$. Each point represents a sample data point, and the color indicates the magnitude of the estimated density ratio. The top-left plot in each panel shows the true density ratio, while other
plots illustrate estimates using various methods.}
\label{figs-swe}
\end{figure}

\subsection{Application in Causal Inference}
We apply our proposed ppDRE method to estimate continuous treatment effects in the framework of \citet{aiUnifiedFrameworkEfficient2021}. Let $T\in\mathbb{R}$ denote the observed continuous treatment status variable, with
a PDF $f_{T}(t)$. Let $Y^{\ast }(t)$ denote the potential response when treatment $t$ is assigned, and let $Y=Y^{\ast }(T)$ denote the observed response. Let $\X\in\mathbb{R}^{d_X}$ denote a vector of observable covariates. To identify the causal effect, we assume the unconfoundedness condition that $Y^*(t)$ and $T$ are conditionally independent given $\X$.  We assume a parametric model $g(t;\bt^*)$, called the \emph{general dose-response function}, for the potential outcome $Y^*(t)$: 
\begin{align}
\bt^{*} :=&\underset{\bt\in\mathbb{R}^{p}}{\text{argmin}}\ \int \mathbb{E}[L(Y^*(t)-g(t;\bt))]f_T(t)dt \notag \\
=&\underset{\bt\in\mathbb{R}^{p}}{\text{argmin}}\ \mathbb{E}[\pi_{0}(T,\boldsymbol{X})L(Y-g(T;\bt))],\label{eq:loss_with_stabilized_weighting}
\end{align}
where $L(\cdot )$ is a user-specified loss function, the second equality holds by the unconfoundedness condition,  $\pi_{0}(t,\x)$ is called the \emph{stabilized weights}, defined by  
\[
\pi_{0}(t,\x):=\frac{f_{T}(t)f_{\X}(\x)}{f_{T,\X}(t,\x)} = \frac{f_{T}(t)}{f_{T|\X}(t|\x)}\,,
\]
where $f_{T,\X}$ is the joint PDF of $T$ and $\X$, $f_{T|\X}$ is the conditional  PDF of $T$ given $\X$, $f_T$ and $f_{\X}$ are marginal PDFs of $T$ and $\X$, respectively.

The causal model \eqref{eq:loss_with_stabilized_weighting}  encompasses a variety of continuous treatment effect parameters of interest. For example, with $\mathcal{L}(v)=v^{2}$, model \eqref{eq:loss_with_stabilized_weighting} gives $g(t;\bt^*)=\mathbb{E}\{Y^{\ast }(t)\}$, the average dose-response function (ADRF). With $\mathcal{L}(v)=v\{\tau -I(v\leq 0)\}$
for some $\tau\in(0,1)$, the model \eqref{eq:loss_with_stabilized_weighting} gives the $\tau$th quantile dose-response function (QDRF) $g(t;\bt^*)=F_{Y^{\ast }(t)}^{-1}(\tau)=\inf \{q:\mathbb{P}\{Y^{\ast
}(t)\geq q\}\leq \tau \}.$ 

Note that, by definition, the stabilized weights $\pi_0(t,\x)$ can be viewed as a density ratio. We compare our method with different estimators of $\pi_{0}(t,\x)$ based on simulated data sets in Section~\ref{sec:sw}. In Section~\ref{sec:TE}, we investigate the application of DRE in both ADRF and QDRF analysis based on a semi-synthetic data set.

\subsubsection{Stabilized Weights Estimation}\label{sec:sw}

In this simulation study, we design the treatment assignment model as $T_{i}=\c^{\top}\X_{i}+\epsilon_{i}$, where $\c\in\R^{d_X}$ is specified below, $\X_{i}\stackrel{\text{i.i.d.}}{\sim}\mathcal{N}(\0_{d_X},\I_{d_X})$ and
$\epsilon_{i}\stackrel{\text{i.i.d.}}{\sim}N(0,1)$. In all simulation scenarios, the sample size is fixed to be $n=5000$.

We first investigate the visual performance of the estimators for a fixed dimension of covariates, $d_X=10$. To facilitate visualization, we consider two choices of the coefficient: $\c=0.5\e_{1}$ and $\c=2\e_{1}$, respectively, where $\e_{1}$ is a vector with 1 in the first component while others are zeros. In both scenarios, only the first component of $\x$ affects the treatment assignment. The estimated density ratios are visualized in Figure \ref{figs-swe}, where the y-axis represents the treatment $t$ and the x-axis is $\c^{\top}\x$. In both scenarios, the density ratio estimates produced by our ppDRE method show the closest alignment with the true density ratios.


\begin{figure}[t]
	\begin{centering}
		\includegraphics[width=0.36\paperwidth]{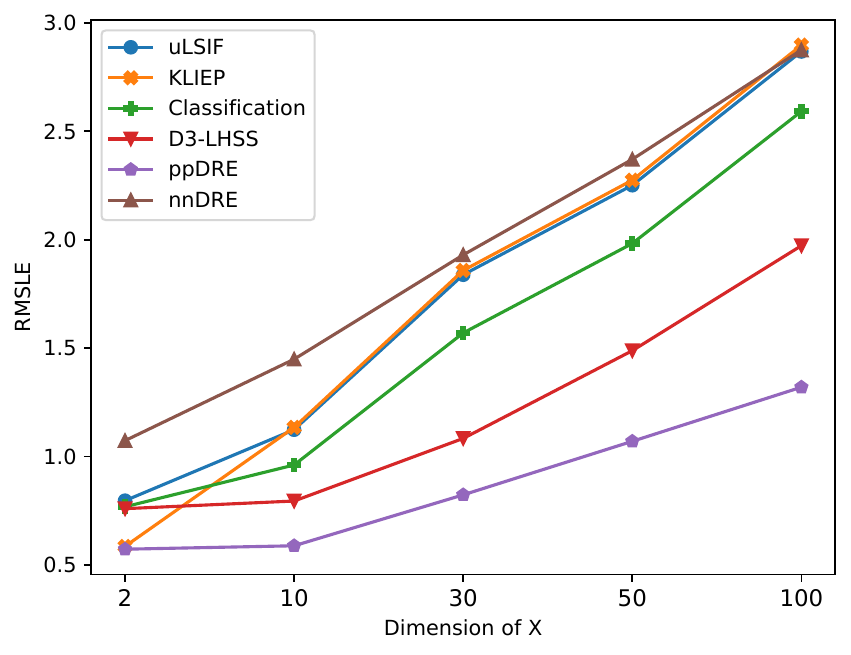}
		\par\end{centering}
	\caption{Average RMSLE over 50 replicates for stabilized weights estimation with varying dimensions of covariates.}
	\label{fig-swe_rmsle}
\end{figure}

\begin{table*}[ht]
\centering
\caption{ASE in dose response function estimation with IHDP-continuous dataset.}\label{tab-ihdp_mse}
\begin{tblr}{
  cells = {c},
  cell{1}{1} = {r=2}{},
  cell{1}{2} = {r=2}{},
  vline{2-3} = {1-10}{0.05em},
  hline{1,11} = {-}{0.08em},
  hline{3} = {-}{0.05em},
}
Method & ARDF &  &  & QDRF &  & \\
 &  & $\tau=0.1$ & $\tau=0.25$ & $\tau=0.5$ & $\tau=0.75$ & $\tau=0.9$\\

uLSIF & 0.112(0.022) & 0.338(0.058) & 0.130(0.027) & 0.045(0.016) & 0.090(0.022) & 0.278(0.064)\\
KLIEP & 0.112(0.022) & 0.339(0.059) & 0.130(0.027) & 0.045(0.016) & 0.089(0.022) & 0.278(0.064)\\
Classification & 0.107(0.028) & 0.341(0.065) & 0.130(0.032) & 0.045(0.018) & 0.088(0.025) & 0.276(0.066)\\
$\text{D}^3$-LHSS & 0.110(0.024) & 0.336(0.058) & 0.128(0.030) & 0.045(0.016) & 0.088(0.022) & 0.279(0.068)\\
fDRE & 0.106(0.026) & 0.331(0.056) & 0.127(0.029) & 0.045(0.016) & 0.088(0.021) & 0.267(0.061) \\
RRND & 0.117(0.022) & \textbf{0.307}(0.053) & 0.125(0.026) & 0.043(0.015) & 0.090(0.021) & 0.408(0.082) \\
nnDRE & 0.107(0.022) & 0.333(0.059) & 0.126(0.027) & 0.043(0.015) & 0.089(0.021) & 0.297(0.066)\\
ppDRE & \textbf{0.091}(0.025) & 0.338(0.066) & \textbf{0.124}(0.029) & \textbf{0.041}(0.016) & \textbf{0.079}(0.022) & \textbf{0.266}(0.060)
\end{tblr}
\end{table*}

We further investigate the estimators' performances in terms of the root mean squared logarithmic error (RMSLE) defined in Appendix \ref{appendix-exp-swe}, for varying dimensions of covariates $d_{X}\in\{2,10,30,50,100\}$. We set $\c=0.5\cdot\boldsymbol{1}_{d_X}$ to be a $d_X$-dimensional vector whose components are all $0.5$. 
Figure \ref{fig-swe_rmsle} presents the average RMSLE values of various estimators over 50 replications. The RMSLE of all estimators increases as the dimension $d_X$ grows, but our proposed ppDRE method consistently outperforms its competitors. Moreover, the advantage of ppDRE is more pronounced for a larger dimension of covariates. Additional experimental results can be found in Appendix \ref{appendix-exp-swe}.

\subsubsection{Dose Response Function Estimation}\label{sec:TE}
This section investigates the performance of estimating both ADRF and QDRF using the semi-synthetic variant of the Infant Health and Development Program (IHDP) dataset. The original IHDP dataset \cite{hillBayesianNonparametricModeling2011} consists of 747 observations, each characterized by $d_X=25$ covariates. Following \citet{nieVCNetFunctionalTargeted2020} and \citet{gaoVariationalFrameworkEstimating2023}, we generate the semi-synthetic IHDP-continuous dataset by leveraging the real-world covariates from the original IHDP dataset to simulate continuous dosages and responses. A detailed description of the data generation process can be found in Appendix \ref{appendix-exp-drf}. Specifically, the randomly assigned treatment $T$ is generated from $\X$ by \eqref{eq:T}, and the potential outcome $Y^*(t)=h(t,\X)+0.5\epsilon$, where $h(t,\X)$ is defined in \eqref{eq:htx} and $\epsilon\sim\mathcal{N}(0,1)$.
Then the true ADRF is $g^*(t)=\E_{\X}[h(t,\X)]$, and the true $\tau$-th QDRF is $g^*(t)=\inf [q:\P\{h(t,\X)+0.5\epsilon\ge q\}\le\tau]$. 

We estimate ADRF and QDRFs at various quantile levels $\tau=\{0.1,\ ,0.25,\ 0.5,\ 0.75,\ 0.9\}$ using a parametric model $g(t;\bt):= \theta_0 + \theta_1 t+ \theta_2 t^2 + \theta_3 t^3 + \theta_4 t^4$, and obtain $\hat{\bt}$ by solving the empirical version of the optimization problem \eqref{eq:loss_with_stabilized_weighting}.
To measure the accuracy of the estimated dose-response functions across the full dosage range, we compute the average squared error (ASE): \vspace{-0.1cm}
\begin{align*}
{\text{ASE}}&=\frac{1}{n}\sum_{i=1}^n\left( g(T_i;\hat{\bt})-g^*(T_i)\right)^2\,.
\end{align*}
The mean and standard error of the ASE computed based on 100 replicates are reported in Table \ref{tab-ihdp_mse}.  The numerical results indicate that our ppDRE method attains the lowest mean ASE in most cases, demonstrating its superior performance compared to the other alternatives.

\begin{table*}[ht]
\centering
\caption{MAE in mutual information estimation for varying dimension $p$ and correlation coefficient $\rho$.}\label{tab-mi}
\begin{tblr}{
  cells = {c},
  cell{1}{1} = {r=2}{},
  cell{1}{2} = {c=2}{},
  cell{1}{4} = {c=2}{},
  cell{1}{6} = {c=2}{},
  vline{2} = {1-10}{0.05em},
  hline{1,11} = {-}{0.08em},
  hline{3} = {-}{0.05em},
}
Method & $p=2$ &  & $p=10$ &  & $p=20$ & \\
 & $\rho=0.2$ & $\rho=0.8$ & $\rho=0.2$ & $\rho=0.8$ & $\rho=0.2$ & $\rho=0.8$\\
uLSIF & 0.044(0.001) & 0.573(0.015) & 0.200(0.008) & 5.093(0.031) & 0.415(0.023) & 10.242(0.088)\\
KLIEP & 0.044(0.000) & 1.099(0.011) & 0.204(0.007) & 4.885(0.019) & 0.516(0.032) & 9.991(0.091)\\
Classification & 0.270(0.006) & 0.169(0.019) & 0.381(0.015) & 2.579(0.062) & 0.272(0.021) & 7.185(0.044)\\
$\text{D}^3$-LHSS & 0.339(0.077) & 0.248(0.162) & 0.282(0.084) & 4.080(0.201) & 0.164(0.132) & 9.241(0.198)\\
fDRE & 0.043(0.000) & 0.907(0.048) & 0.199(0.007) & 4.283(0.014) & 0.487(0.036) & 8.710(0.059) \\
RRND & 0.061(0.001) & 0.926(0.006) & 0.200(0.008) & 5.092(0.031)  & 0.414(0.022) & 10.241(0.089) \\
nnDRE & \textbf{0.038}(0.037) & 1.064(0.275) & 1.183(1.725) & 3.838(1.912) & 1.671(2.202) & 8.629(2.577)\\
ppDRE & 0.051(0.044) & \textbf{0.069}(0.015) & \textbf{0.164}(0.038) & \textbf{0.272}(0.183) & \textbf{0.074}(0.098) & \textbf{2.966}(1.089)
\end{tblr}
\end{table*}

\subsection{Application in Mutual Information Estimation}

The mutual information (MI) is a measure of the mutual dependence between two continuous random vectors $\U$ and $\V$, which is defined by 
\begin{align*}
    \text{MI}_{\U,\V}=\iint f_{\U,\V}(\u,\v)\cdot\log \frac{f_{\U,\V}(\u,\v)}{f_{\U}(\u)f_{\V}(\v)} d\u d\v.
\end{align*}
Given a sample from $f_{\U,\V}(\u, \v)$, we can create another sample following the distribution $f_{\U}(\u)f_{\V}(\v)$ by permuting the $\v$ vectors across the dataset. This enables the application of DRE methods to estimate the MI by  
\begin{equation*}
    \widehat{\text{MI}}_{\U,\V}=\frac{1}{n_p}\sum_{i=1}^{n_p} \log{\hat{r}(\u_i,\v_i)},
\end{equation*}
with $p(\u,\v)=f_{\U,\V}(\u,\v)$ and $q(\u,\v)=f_{\U}(\u)f_{\V}(\v)$ for $r^*(\u,\v)=p(\u,\v)/q(\u,\v)$.

\begin{table*}[htb]
\centering
\caption{NMSE under covariate shift adaptation on various benchmark datasets.}\label{tab-regression}
\begin{tblr}{
  cells = {c},
  vline{2} = {-}{0.05em},
  hline{1,11} = {-}{0.08em},
  hline{2} = {-}{0.05em},
}
Method & Abalone & Billboard Spotify & Cancer Mortality & Computer Activity & Diamond Prices\\
Unweighted & 0.544(0.057) & 0.582(0.044) & 0.704(0.069) & 0.278(0.050) & 0.231(0.082)\\
uLSIF & 0.525(0.057) & 0.582(0.044) & 0.704(0.069) & 0.668(0.360) & 0.239(0.068)\\
KLIEP & 0.531(0.055) & 0.565(0.028) & 0.696(0.058) & 0.302(0.049) & 0.228(0.080)\\
Classification & 0.535(0.049) & 0.579(0.042) & 0.691(0.067) & 0.273(0.051) & 0.183(0.056)\\
$\text{D}^3$-LHSS & 0.520(0.054) & 0.564(0.031) & 0.672(0.066) & 0.318(0.052) & 0.220(0.070)\\
fDRE & 0.586(0.136) & 0.615(0.084) & 0.664(0.064) & 0.294(0.053) & 0.233(0.078) \\
RRND & 0.574(0.144) & 0.616(0.087) & \textbf{0.655}(0.069) & 0.279(0.051) & 0.232(0.082)\\
nnDRE & 0.806(0.829) & 0.822(0.861) & 0.660(0.063) & 0.301(0.033) & 0.360(0.429)\\
ppDRE & \textbf{0.514}(0.049) & \textbf{0.559}(0.026) & 0.661(0.048) & \textbf{0.199}(0.019) & \textbf{0.133}(0.024)
\end{tblr}
\end{table*}

We adopt the experimental setting in \citet{belghaziMutualInformationNeural2018,rhodesTelescopingDensityRatioEstimation2020,choiDensityRatioEstimation2022}. Specifically, we consider two standard multivariate Gaussian random vectors, $\U=(U_1,...,U_p)^{\top}\in\R^p$ and $\V=(V_1,...,V_p)^{\top}\in\R^p$, with component-wise correlation, $\text{corr}(U_i , V_j ) = \delta_{ij} \rho$, where $\rho \in (-1, 1)$ and $\delta_{ij}$ is Kronecker’s delta. 
Performance is measured by the mean absolute error, $\text{MAE}:=|\widehat{\text{MI}}_{\U,\V}-\text{MI}_{\U,\V}|$. We consider three replicates with sample size $n=5000$. The MAE averaged over three runs for various values of $p$ and $\rho$ are reported in Table \ref{tab-mi}. Overall, our proposed ppDRE method demonstrates superior or comparable performance in all circumstances, indicating its robustness and effectiveness in mutual information estimation.

\subsection{Application in Covariate Shift Adaptation}
Covariate shift refers to the change in the distribution of the input variables in the training and the test data sets, i.e. $p^*_{\text{te}}(\x)\ne p^*_{\text{tr}}(\x)$. In the case of covariate shift, learning a parameter, $\bt$ in a model $f(\x;\bt)$ regarding the probability distribution of $y$ given $\x$, using standard learning techniques such as empirical risk minimization (ERM) can become biased \cite{sugiyamaDensityRatioEstimation2012}. To mitigate this issue, the importance-weighted ERM is widely used. The core idea is to re-weight the training samples in order to learn a model that minimizes the loss on the test dataset:
\begin{align}\label{eq:iwERM}
    &\E_{(\x,y)\sim p^*_{\text{te}}(\x,y)}[L(f(\x;\bt),y)]\notag\\
    &=\E_{(\x,y)\sim p^*_{\text{tr}}(\x,y)}\left[\frac{p^*_{\text{te}}(\x)}{p^*_{\text{tr}}(\x)}L(f(\x;\bt),y)\right],\notag
\end{align}
where $L(\cdot)$ is a user-specified loss function, and the density ratio $r^*(\x) = p^*_{\text{te}}(\x) / p^*_{\text{tr}}(\x)$ is referred to as the \emph{importance}, acting as an adjusting weight during the training process.


 To simulate the covariate shift setting, we adopt a biased sampling scheme as described by \citet{stojanovLowDimensionalDensityRatio2019}, on various classical benchmark regression datasets. In this setup, we introduce a sample selection variable $s\in\{0,1\}$, and the probability of a sample being selected for the training set is given by:
\begin{equation*}
    \P(s=1|\x)=\frac{e^v}{(1+e^v)},\quad\text{with}\  v=4\cdot \frac{\omega^\top(\x-\bar{\x})}{\sigma_{\omega^\top(\x-\bar{\x})}},
\end{equation*}
where $\bar{\x}$ is the sample mean of the covariates, $\omega$ is a random projection vector uniformly selected from the interval $[-1, 1]^d$ and $\sigma_{\omega^\top(\x-\bar{\x})}$ is the standard deviation. Subsequently, we learn a kernel ridge regression model by minimizing the importance-weighted empirical risk:
\begin{equation*}
    \hat{\bt}=\underset{\bt}{\text{argmin}}\ \frac{1}{n_{\text{tr}}}\sum_{i=1}^{n_{\text{tr}}}\hat{r}(\x_i^{\text{tr}})\left\{ \left[f(\x_i^{\text{tr}};\bt)-y_i^{\text{tr}}\right]^2+\lambda\|\bt\|^2\right\} ,
\end{equation*}
where $f(\x;\bt)=\sum_{i=1}^{n_{\text{tr}}}\theta_i K(\x,\x_i^{\text{tr}})$,  $K(\x,\x^\prime)=\exp\{-\lVert \x-\x^\prime\rVert^2/d\}$ is the kernel basis, and $\hat{r}$ is an estimator of $r^*$ based on DRE methods.


To evaluate the performance, we compute the normalized mean squared error (NMSE) on the test dataset:\vspace{-0.1cm}
\begin{equation*}
    \text{NMSE}=\frac{1}{n_{\text{te}}}\sum_{i=1}^{n_{\text{te}}}\frac{(y_i^{\text{te}}-\hat{y}_i^{\text{te}})^2}{\sigma_y^2},
\end{equation*} 
where $n_{\text{te}}$ is the number of test samples, $y_i^{\text{te}}$ is the true value for the $i$th sample, $\hat{y}_i^{\text{te}}$ is the predicted value, and $\sigma_y^2$ is the variance of the response variable on the test dataset.
Average NMSE values across 15 random replicates are reported in Table \ref{tab-regression}. More details on the datasets can be found in Appendix \ref{appendix-exp-covariate}.

As shown in Table \ref{tab-regression}, the unweighted regression model performs poorly across all datasets, underscoring the necessity for adjustments for the covariate shift problem. In contrast, our proposed ppDRE method consistently outperforms or matches the performance of the baseline methods across various datasets. This highlights the effectiveness of ppDRE in mitigating the impact of covariate shift, positioning it as a promising approach for regression tasks affected by this issue.

\section{Conclusion}
We propose a novel projection pursuit-based method for estimating the density ratio function, which does not require parametric assumptions, enjoys computational convenience, and can alleviate the curse of dimensionality. The asymptotic consistency and the convergence rates are established to guarantee the validity of the proposed method. Numerical experiments demonstrate that our method outperforms existing alternatives in a variety of applications.

Density ratio estimation based on projection pursuit admits many exciting directions for future work. One particularly promising direction is the extension of ppDRE to independence testing. However, this extension requires rigorous theoretical development, particularly in establishing the estimator's  limiting distributions under the null hypothesis of independence. Furthermore, the method's iterative optimization process may still face challenges with computational complexity as the number of projections $K$ increases. Developing adaptive stopping criteria could alleviate the computational burden by avoiding tuning $K$ with cross validation.

\section*{Acknowledgements}
The research of Zheng Zhang is
supported by the funds from the National Key R\&D Program of China [grant number 2022YFA1008300], the Fundamental Research Funds for the Central Universities, and the Research Funds of Renmin University of China [project number 23XNA025]. Mingming Gong is supported by ARC DE210101624, ARC DP240102088, and WIS-MBZUAI 142571.

\section*{Impact Statement}
This paper presents work whose goal is to advance the field of Machine Learning in high-dimensional data. There are many potential societal consequences of our work, none which we feel must be specifically highlighted here.

\bibliography{ppdre_refs}

\begin{thebibliography}{55}
\providecommand{\natexlab}[1]{#1}
\providecommand{\url}[1]{\texttt{#1}}
\expandafter\ifx\csname urlstyle\endcsname\relax
  \providecommand{\doi}[1]{doi: #1}\else
  \providecommand{\doi}{doi: \begingroup \urlstyle{rm}\Url}\fi

\bibitem[Ai et~al.(2021)Ai, Linton, Motegi, and Zhang]{aiUnifiedFrameworkEfficient2021}
Ai, C., Linton, O., Motegi, K., and Zhang, Z.
\newblock A unified framework for efficient estimation of general treatment models.
\newblock \emph{Quantitative Economics}, 12\penalty0 (3):\penalty0 779--816, 2021.

\bibitem[Ai et~al.(2024)Ai, Sun, Zhang, and Zhu]{aiTestingUnconditionalConditional2024}
Ai, C., Sun, L.-H., Zhang, Z., and Zhu, L.
\newblock Testing unconditional and conditional independence via mutual information.
\newblock \emph{Journal of Econometrics}, 240\penalty0 (2):\penalty0 105335, 2024.

\bibitem[Aladjem(2005)]{aladjem2005projection}
Aladjem, M.
\newblock Projection pursuit mixture density estimation.
\newblock \emph{IEEE Transactions on Signal Processing}, 53\penalty0 (11):\penalty0 4376--4383, 2005.

\bibitem[Belghazi et~al.(2018)Belghazi, Baratin, Rajeshwar, Ozair, Bengio, Courville, and Hjelm]{belghaziMutualInformationNeural2018}
Belghazi, M.~I., Baratin, A., Rajeshwar, S., Ozair, S., Bengio, Y., Courville, A., and Hjelm, D.
\newblock Mutual information neural estimation.
\newblock In \emph{Proceedings of the 35th {{International Conference}} on {{Machine Learning}}}, pp.\  531--540. PMLR, 2018.

\bibitem[Bickel et~al.(2007)Bickel, Br{\"u}ckner, and Scheffer]{bickelDiscriminativeLearningDiffering2007}
Bickel, S., Br{\"u}ckner, M., and Scheffer, T.
\newblock Discriminative learning for differing training and test distributions.
\newblock In \emph{Proceedings of the 24th International Conference on {{Machine}} Learning}, pp.\  81--88, Corvalis Oregon USA, 2007. ACM.

\bibitem[Chen(2007{\natexlab{a}})]{chen2007large}
Chen, X.
\newblock Large sample sieve estimation of semi-nonparametric models.
\newblock \emph{Handbook of Econometrics}, 6\penalty0 (B):\penalty0 5549--5632, 2007{\natexlab{a}}.

\bibitem[Chen(2007{\natexlab{b}})]{chenLargeSampleSieve2007}
Chen, X.
\newblock Large sample sieve estimation of semi-nonparametric models.
\newblock Handbook of {{Econometrics}}, Elsevier, 2007{\natexlab{b}}.

\bibitem[Choi et~al.(2021)Choi, Liao, and Ermon]{choiFeaturizedDensityRatio2021}
Choi, K., Liao, M., and Ermon, S.
\newblock Featurized density ratio estimation.
\newblock In \emph{Proceedings of the {{Thirty-Seventh Conference}} on {{Uncertainty}} in {{Artificial Intelligence}}}, pp.\  172--182. PMLR, 2021.

\bibitem[Choi et~al.(2022)Choi, Meng, Song, and Ermon]{choiDensityRatioEstimation2022}
Choi, K., Meng, C., Song, Y., and Ermon, S.
\newblock Density ratio estimation via infinitesimal classification.
\newblock In \emph{Proceedings of {{The}} 25th {{International Conference}} on {{Artificial Intelligence}} and {{Statistics}}}, pp.\  2552--2573. PMLR, 2022.

\bibitem[{da Silva} et~al.(2021){da Silva}, Cook, and Lee]{dasilvaProjectionPursuitForest2021}
{da Silva}, N., Cook, D., and Lee, E.-K.
\newblock A projection pursuit forest algorithm for supervised classification.
\newblock \emph{Journal of Computational and Graphical Statistics}, 30\penalty0 (4):\penalty0 1168--1180, 2021.

\bibitem[Diaconis \& Shahshahani(1984)Diaconis and Shahshahani]{diaconisNonlinearFunctionsLinear1984}
Diaconis, P. and Shahshahani, M.
\newblock On nonlinear functions of linear combinations.
\newblock \emph{SIAM Journal on Scientific and Statistical Computing}, 5\penalty0 (1):\penalty0 175--191, 1984.

\bibitem[Fang et~al.(2020)Fang, Lu, Niu, and Sugiyama]{fangRethinkingImportanceWeighting2020}
Fang, T., Lu, N., Niu, G., and Sugiyama, M.
\newblock Rethinking importance weighting for deep learning under distribution shift.
\newblock In \emph{Advances in {{Neural Information Processing Systems}}}, volume~33, pp.\  11996--12007. Curran Associates, Inc., 2020.

\bibitem[Friedman \& Tukey(1974)Friedman and Tukey]{friedmanProjectionPursuitAlgorithm1974}
Friedman, J. and Tukey, J.
\newblock A projection pursuit algorithm for exploratory data analysis.
\newblock \emph{IEEE Transactions on Computers}, C-23\penalty0 (9):\penalty0 881--890, 1974.

\bibitem[Friedman \& Stuetzle(1981)Friedman and Stuetzle]{friedmanProjectionPursuitRegression1981}
Friedman, J.~H. and Stuetzle, W.
\newblock Projection pursuit regression.
\newblock \emph{Journal of the American Statistical Association}, 76\penalty0 (376):\penalty0 817--823, 1981.

\bibitem[Friedman et~al.(1984)Friedman, Stuetzle, and Schroeder]{friedmanProjectionPursuitDensity1984}
Friedman, J.~H., Stuetzle, W., and Schroeder, A.
\newblock Projection pursuit density estimation.
\newblock \emph{Journal of the American Statistical Association}, 79\penalty0 (387):\penalty0 599--608, 1984.

\bibitem[Gao et~al.(2023)Gao, Bondell, Huang, and Gong]{gaoVariationalFrameworkEstimating2023}
Gao, E., Bondell, H., Huang, W., and Gong, M.
\newblock A variational framework for estimating continuous treatment effects with measurement error.
\newblock In \emph{The {{Twelfth International Conference}} on {{Learning Representations}}}, 2023.

\bibitem[Gizewski et~al.(2022)Gizewski, Mayer, Moser, Nguyen, Pereverzyev, Pereverzyev, Shepeleva, and Zellinger]{gizewski20222OnaRegularization}
Gizewski, E.~R., Mayer, L., Moser, B.~A., Nguyen, D.~H., Pereverzyev, S., Pereverzyev, S.~V., Shepeleva, N., and Zellinger, W.
\newblock On a regularization of unsupervised domain adaptation in rkhs.
\newblock \emph{Applied and Computational Harmonic Analysis}, 57:\penalty0 201--227, 2022.
\newblock ISSN 1063-5203.

\bibitem[Gruber et~al.(2024)Gruber, Holzleitner, Lehner, Hochreiter, and Zellinger]{gruberOvercomingSaturationDensity2024}
Gruber, L., Holzleitner, M., Lehner, J., Hochreiter, S., and Zellinger, W.
\newblock Overcoming saturation in density ratio estimation by iterated regularization.
\newblock In \emph{Proceedings of the 41st {{International Conference}} on {{Machine Learning}}}, pp.\  16502--16529. PMLR, 2024.

\bibitem[Hido et~al.(2011)Hido, Tsuboi, Kashima, Sugiyama, and Kanamori]{hidoStatisticalOutlierDetection2011}
Hido, S., Tsuboi, Y., Kashima, H., Sugiyama, M., and Kanamori, T.
\newblock Statistical outlier detection using direct density ratio estimation.
\newblock \emph{Knowledge and Information Systems}, 26\penalty0 (2):\penalty0 309--336, 2011.

\bibitem[Hill(2011)]{hillBayesianNonparametricModeling2011}
Hill, J.~L.
\newblock Bayesian nonparametric modeling for causal inference.
\newblock \emph{Journal of Computational and Graphical Statistics}, 20\penalty0 (1):\penalty0 217--240, 2011.

\bibitem[Huber(1985)]{huberProjectionPursuit1985}
Huber, P.~J.
\newblock Projection pursuit.
\newblock \emph{The Annals of Statistics}, 13\penalty0 (2):\penalty0 435--475, 1985.

\bibitem[Kanamori et~al.(2009)Kanamori, Hido, and Sugiyama]{kanamoriLeastsquaresApproachDirect2009}
Kanamori, T., Hido, S., and Sugiyama, M.
\newblock A least-squares approach to direct importance estimation.
\newblock \emph{Journal of Machine Learning Research}, 10:\penalty0 1391--1445, 2009.

\bibitem[Kanamori et~al.(2010)Kanamori, Suzuki, and Sugiyama]{kanamoriTheoreticalAnalysisDensity2010}
Kanamori, T., Suzuki, T., and Sugiyama, M.
\newblock Theoretical analysis of density ratio estimation.
\newblock \emph{IEICE Transactions on Fundamentals of Electronics, Communications and Computer Sciences}, E93-A\penalty0 (4):\penalty0 787--798, 2010.

\bibitem[Kato \& Teshima(2021)Kato and Teshima]{katoNonNegativeBregmanDivergence2021}
Kato, M. and Teshima, T.
\newblock Non-negative bregman divergence minimization for deep direct density ratio estimation.
\newblock In \emph{Proceedings of the 38th {{International Conference}} on {{Machine Learning}}}, pp.\  5320--5333. PMLR, 2021.

\bibitem[Lee et~al.(2005)Lee, Cook, Klinke, and Lumley]{leeProjectionPursuitExploratory2005}
Lee, E.-K., Cook, D., Klinke, S., and Lumley, T.
\newblock Projection pursuit for exploratory supervised classification.
\newblock \emph{Journal of Computational and Graphical Statistics}, 14\penalty0 (4):\penalty0 831--846, 2005.

\bibitem[Li \& Racine(2023)Li and Racine]{li2023nonparametric}
Li, Q. and Racine, J.~S.
\newblock \emph{Nonparametric econometrics: theory and practice}.
\newblock Princeton University Press, 2023.

\bibitem[Liu et~al.(2013)Liu, Yamada, Collier, and Sugiyama]{liuChangepointDetectionTimeseries2013}
Liu, S., Yamada, M., Collier, N., and Sugiyama, M.
\newblock Change-point detection in time-series data by relative density-ratio estimation.
\newblock \emph{Neural Networks}, 43:\penalty0 72--83, 2013.

\bibitem[Liu et~al.(2017)Liu, Suzuki, Relator, Sese, Sugiyama, and Fukumizu]{liuSupportConsistencyDirect2017}
Liu, S., Suzuki, T., Relator, R., Sese, J., Sugiyama, M., and Fukumizu, K.
\newblock Support consistency of direct sparse-change learning in {{Markov}} networks.
\newblock \emph{The Annals of Statistics}, 45\penalty0 (3), 2017.

\bibitem[Lorentz(1986)]{lorentz1986approximation}
Lorentz, G.
\newblock \emph{Approximation of functions}.
\newblock Chelsea Publishing Company, 1986.

\bibitem[Matsushita et~al.(2023)Matsushita, Otsu, and Takahata]{matsushitaEstimatingDensityRatio2023}
Matsushita, Y., Otsu, T., and Takahata, K.
\newblock Estimating density ratio of marginals to joint: Applications to causal inference.
\newblock \emph{Journal of Business \& Economic Statistics}, 41\penalty0 (2):\penalty0 467--481, 2023.

\bibitem[Meng \& Wong(1996)Meng and Wong]{mengSIMULATINGRATIOSNORMALIZING1996}
Meng, X.-L. and Wong, W.~H.
\newblock Simulating ratios of normalizing constants via a simple identity: a theoretical exploration.
\newblock \emph{Statistica Sinica}, 1996.

\bibitem[Nagumo \& Fujisawa(2024)Nagumo and Fujisawa]{nagumoDensityRatioEstimation2024}
Nagumo, R. and Fujisawa, H.
\newblock Density ratio estimation with doubly strong robustness.
\newblock In \emph{Proceedings of the 41st {{International Conference}} on {{Machine Learning}}}, pp.\  37260--37276. PMLR, 2024.

\bibitem[Nam \& Sugiyama(2015)Nam and Sugiyama]{namDirectDensityRatio2015}
Nam, H. and Sugiyama, M.
\newblock Direct density ratio estimation with convolutional neural networks with application in outlier detection.
\newblock \emph{IEICE Transactions on Information and Systems}, E98.D\penalty0 (5):\penalty0 1073--1079, 2015.

\bibitem[Newey \& McFadden(1994)Newey and McFadden]{newey1994large}
Newey, W.~K. and McFadden, D.
\newblock Large sample estimation and hypothesis testing.
\newblock \emph{Handbook of Econometrics}, 4:\penalty0 2111--2245, 1994.

\bibitem[Nguyen et~al.(2024)Nguyen, Zellinger, and Pereverzyev]{nguyen2024OnRegularized}
Nguyen, D.~H., Zellinger, W., and Pereverzyev, S.
\newblock On regularized radon-nikodym differentiation.
\newblock \emph{Journal of Machine Learning Research}, 25\penalty0 (1), January 2024.
\newblock ISSN 1532-4435.

\bibitem[Nie et~al.(2020)Nie, Ye, Liu, and Nicolae]{nieVCNetFunctionalTargeted2020}
Nie, L., Ye, M., Liu, Q., and Nicolae, D.
\newblock {{VCNet}} and functional targeted regularization for learning causal effects of continuous treatments.
\newblock In \emph{International {{Conference}} on {{Learning Representations}}}, 2020.

\bibitem[Qin(1998)]{qinInferencesCaseControlSemiparametric1998}
Qin, J.
\newblock Inferences for case-control and semiparametric two-sample density ratio models.
\newblock \emph{Biometrika}, 85\penalty0 (3):\penalty0 619--630, 1998.

\bibitem[Que \& Belkin(2013)Que and Belkin]{qichao2013InverseDensity}
Que, Q. and Belkin, M.
\newblock Inverse density as an inverse problem: the fredholm equation approach.
\newblock In Burges, C., Bottou, L., Welling, M., Ghahramani, Z., and Weinberger, K. (eds.), \emph{Advances in Neural Information Processing Systems}, volume~26. Curran Associates, Inc., 2013.

\bibitem[Rhodes et~al.(2020)Rhodes, Xu, and Gutmann]{rhodesTelescopingDensityRatioEstimation2020}
Rhodes, B., Xu, K., and Gutmann, M.~U.
\newblock Telescoping density-ratio estimation.
\newblock In \emph{Advances in {{Neural Information Processing Systems}}}, volume~33, pp.\  4905--4916. Curran Associates, Inc., 2020.

\bibitem[Shimodaira(2000)]{shimodairaImprovingPredictiveInference2000}
Shimodaira, H.
\newblock Improving predictive inference under covariate shift by weighting the log-likelihood function.
\newblock \emph{Journal of Statistical Planning and Inference}, 90\penalty0 (2):\penalty0 227--244, 2000.

\bibitem[Sinha et~al.(2020)Sinha, O'~Kelly, Tedrake, and Duchi]{sinhaNeuralBridgeSampling2020}
Sinha, A., O'~Kelly, M., Tedrake, R., and Duchi, J.~C.
\newblock Neural bridge sampling for evaluating safety-critical autonomous systems.
\newblock In \emph{Advances in {{Neural Information Processing Systems}}}, volume~33, pp.\  6402--6416. Curran Associates, Inc., 2020.

\bibitem[Stojanov et~al.(2019)Stojanov, Gong, Carbonell, and Zhang]{stojanovLowDimensionalDensityRatio2019}
Stojanov, P., Gong, M., Carbonell, J., and Zhang, K.
\newblock Low-dimensional density ratio estimation for covariate shift correction.
\newblock In \emph{Proceedings of the {{Twenty-Second International Conference}} on {{Artificial Intelligence}} and {{Statistics}}}, pp.\  3449--3458. PMLR, 2019.

\bibitem[Sugiyama et~al.(2007)Sugiyama, Nakajima, Kashima, Buenau, and Kawanabe]{sugiyamaDirectImportanceEstimation2007}
Sugiyama, M., Nakajima, S., Kashima, H., Buenau, P., and Kawanabe, M.
\newblock Direct importance estimation with model selection and its application to covariate shift adaptation.
\newblock In \emph{Advances in {{Neural Information Processing Systems}}}, volume~20. Curran Associates, Inc., 2007.

\bibitem[Sugiyama et~al.(2008)Sugiyama, Suzuki, Nakajima, Kashima, Von~B{\"u}nau, and Kawanabe]{sugiyamaDirectImportanceEstimation2008}
Sugiyama, M., Suzuki, T., Nakajima, S., Kashima, H., Von~B{\"u}nau, P., and Kawanabe, M.
\newblock Direct importance estimation for covariate shift adaptation.
\newblock \emph{Annals of the Institute of Statistical Mathematics}, 60\penalty0 (4):\penalty0 699--746, 2008.

\bibitem[Sugiyama et~al.(2010)Sugiyama, Kawanabe, and Chui]{sugiyamaDimensionalityReductionDensity2010}
Sugiyama, M., Kawanabe, M., and Chui, P.~L.
\newblock Dimensionality reduction for density ratio estimation in high-dimensional spaces.
\newblock \emph{Neural Networks}, 23\penalty0 (1):\penalty0 44--59, 2010.

\bibitem[Sugiyama et~al.(2011)Sugiyama, Yamada, Von~B{\"u}nau, Suzuki, Kanamori, and Kawanabe]{sugiyamaDirectDensityratioEstimation2011}
Sugiyama, M., Yamada, M., Von~B{\"u}nau, P., Suzuki, T., Kanamori, T., and Kawanabe, M.
\newblock Direct density-ratio estimation with dimensionality reduction via least-squares hetero-distributional subspace search.
\newblock \emph{Neural Networks}, 24\penalty0 (2):\penalty0 183--198, 2011.

\bibitem[Sugiyama et~al.(2012)Sugiyama, Suzuki, and Kanamori]{sugiyamaDensityRatioEstimation2012}
Sugiyama, M., Suzuki, T., and Kanamori, T.
\newblock \emph{Density ratio estimation in machine learning}.
\newblock Cambridge University Press, 1 edition, 2012.

\bibitem[Suzuki et~al.(2009)Suzuki, Sugiyama, and Tanaka]{suzuki2008MutualInformation}
Suzuki, T., Sugiyama, M., and Tanaka, T.
\newblock Mutual information approximation via maximum likelihood estimation of density ratio.
\newblock In \emph{2009 IEEE International Symposium on Information Theory}, pp.\  463--467, 2009.
\newblock \doi{10.1109/ISIT.2009.5205712}.

\bibitem[Tsuboi et~al.(2009)Tsuboi, Kashima, Hido, Bickel, and Sugiyama]{tsuboiDirectDensityRatio2009}
Tsuboi, Y., Kashima, H., Hido, S., Bickel, S., and Sugiyama, M.
\newblock Direct density ratio estimation for large-scale covariate shift adaptation.
\newblock \emph{Journal of Information Processing}, 17:\penalty0 138--155, 2009.

\bibitem[Vaart \& Wellner(1996)Vaart and Wellner]{vaart1996weak}
Vaart, A.~W. and Wellner, J.~A.
\newblock \emph{Weak convergence and empirical processes}.
\newblock Springer, 1996.

\bibitem[Van~der Vaart(2000)]{van2000asymptotic}
Van~der Vaart, A.~W.
\newblock \emph{Asymptotic statistics}, volume~3.
\newblock Cambridge University Press, 2000.

\bibitem[Wang \& Yang(2009)Wang and Yang]{wangSPLINEESTIMATION2009}
Wang, L. and Yang, L.
\newblock Spline estimation of single-index models.
\newblock \emph{Statistica Sinica}, 19\penalty0 (2):\penalty0 765--783, 2009.
\newblock ISSN 10170405, 19968507.

\bibitem[Yamada \& Sugiyama(2011)Yamada and Sugiyama]{yamadaDirectDensityRatioEstimation2011}
Yamada, M. and Sugiyama, M.
\newblock Direct density-ratio estimation with dimensionality reduction via hetero-distributional subspace analysis.
\newblock \emph{Proceedings of the AAAI Conference on Artificial Intelligence}, 25\penalty0 (1):\penalty0 549--554, 2011.

\bibitem[Yamada et~al.(2013)Yamada, Suzuki, Kanamori, Hachiya, and Sugiyama]{yamadaRelativeDensityRatioEstimation2013}
Yamada, M., Suzuki, T., Kanamori, T., Hachiya, H., and Sugiyama, M.
\newblock Relative density-ratio estimation for robust distribution comparison.
\newblock \emph{Neural Computation}, 25\penalty0 (5):\penalty0 1324--1370, 2013.

\bibitem[Zhan et~al.(2025)Zhan, Zhang, and Xia]{zhan2022ensemble}
Zhan, H., Zhang, M., and Xia, Y.
\newblock Ensemble projection pursuit for general nonparametric regression.
\newblock \emph{The Annals of Statistics}, 2025.

\end{thebibliography}
\bibliographystyle{icml2025}

\newpage
\appendix
\onecolumn

\section{Baseline Methods}\label{appendix-baselines}

This section presents the baseline methods used in our numerical studies.

\paragraph{uLSIF}

The unconstrained Least-Squares Importance Fitting (uLSIF) method, proposed by  \citet{kanamoriLeastsquaresApproachDirect2009}, models the density ratio $r^*(\x)$ as follows:
\begin{equation}\label{eq:kernel-ratio}
    r^*(\x)\approx \bt^{\top}\psi(\x)=\sum_{\ell=1}^{n_p}\theta_\ell K(\x,\x_\ell^p),
\end{equation}
where $K(\x,\x^\prime)$ is the Gaussian kernel basis, and the definitions of $\psi(\x)$ and $\bt$ are straightforward. The model parameter $\bt$ is learned by minimizing the squared loss with a ridge penalty. For implementation, we employ the functions provided by the Python package \emph{densratio}\footnote{\href{https://github.com/hoxo-m/densratio_py}{https://github.com/hoxo-m/densratio\_py}}.

\paragraph{KLIEP}
The  Kullback-Leibler importance estimation procedure (KLIEP) method, proposed by \citet{sugiyamaDirectImportanceEstimation2008}, uses the same Gaussian kernel model for approximating $r^*(\x)$ as that in the uLSIF method  \eqref{eq:kernel-ratio}, but the model parameter $\bt$ is identified by minimizing the unnormalized Kullback-Leibler (UKL) divergence:
\begin{equation*}
    \text{UKL}^*(r)=\E_q[r(\x)]-\E_p[\log r(\x)].
\end{equation*}
With the fact that $\E_q[r(\x)]=1$, the estimation is implemented as follows:
\begin{equation*}
    \max_{\bt}\quad \frac{1}{n_p}\sum_{i=1}^{n_p}\log(\bt^\top\boldsymbol{\psi}(\x_i^p))\qquad \text{s.t.}\quad \frac{1}{n_p}\sum_{i=1}^{n_q}\bt^\top\boldsymbol{\psi}(\x_i^q)=1\quad \text{and}\quad \bt\ge\boldsymbol{0},
\end{equation*}
where the inequality for vectors is applied in the element-wise manner. The matlab code of the KLIEP method is available online\footnotemark[2].

\footnotetext[2]{\href{https://www.ms.k.u-tokyo.ac.jp/sugi/software.html}{https://www.ms.k.u-tokyo.ac.jp/sugi/software.html}}

\paragraph{Probabilistic Classification}
By assigning the label $y = +1$ to the sample from $p(\x)$ and $y = -1$ to the sample from $q(\x)$ respectively \cite{qinInferencesCaseControlSemiparametric1998,bickelDiscriminativeLearningDiffering2007}, the density ratio function $r^*(\x)=p(\x)/q(\x)$ can be expressed by
\begin{equation*}
    r^*(\x)=\frac{\mathbb{P}(\x|y=+1)}{\mathbb{P}(\x|y=-1)}=\frac{\mathbb{P}(y=-1)\mathbb{P}(y=+1|\x)}{\mathbb{P}(y=+1)\mathbb{P}(y=-1|\x)}.
\end{equation*}
Given an estimator of the posterior probability, $\hat{p}(y|\x)$, the density ratio estimator $\hat{r}(\x)$ can be constructed as 
\begin{equation*}
    \hat{r}(\x)=\frac{n_q}{n_p}\cdot \frac{\hat{p}(y=+1|\x)}{\hat{p}(y=-1|\x)}.
\end{equation*}
In our experiments, we use the LightGBM model as the classifier, which is an ensemble learning algorithm based on gradient boosting decision trees. 

\paragraph{$\text{D}^3$-LHSS}  The Direct
Density-ratio estimation with Dimensionality reduction via Least-squares Heterodistributional Subspace Search ($\text{D}^3$-LHSS) method is proposed by \citet{sugiyamaDirectDensityratioEstimation2011}. This method assumes that the density ratio can be identified in a low-dimensional space specified by a projection matrix $\U\in\R^{m\times d}.$  Given the projection matrix $\hat{\U}$ obtained by the LHSS algorithm, the estimator of the  density ratio is given by 
\begin{equation*}
    \hat{r}(\x)=\sum_{\ell=1}^b \hat{\theta}_\ell\psi_\ell(\hat{\U}\x),
\end{equation*}
where $\{\hat{\theta}_\ell\}_{\ell=1}^b$ are the learned using the uLSIF method on the heterodistributional subspace corresponding to $\hat{\U}$. 
For more details, we refer to \citet{sugiyamaDirectDensityratioEstimation2011}, whose matlab implementation is available online\footnotemark[2].

\paragraph{fDRE} 

The featurized density ratio estimation (fDRE) framework, proposed by \citet{choiFeaturizedDensityRatio2021}, addresses distributional discrepancy challenges through feature learning. By employing a normalizing flow model \( f_\theta: \mathcal{X} \to \mathcal{Z} \) with invertible transformation properties, this method embeds heterogeneous data distributions into a unified feature space where density ratios remain preserved. The crucial invariance property is formally expressed as:
\[
r^*(\x)=\frac{p(\x)}{q(\x)}=\frac{p^\prime(f_\theta(\x))}{q^\prime(f_\theta(\x))},
\]
where $p^\p,q^\p$ are the densities of $f_\theta(\x_p)$ and $f_\theta(\x_q)$ respectively, $\x_p\sim p(\x)$, and $\x_q\sim q(\x)$. 
This preservation enables direct application of classical density ratio estimators, such  as KLIEP, in the transformed space \( \mathcal{Z} \), often achieving superior numerical stability compared to native space estimation.

\paragraph{RRND}  The regularized Radon-Nikodym differentiation estimation (RRND) method, proposed by \citet{nguyen2024OnRegularized}, introduces a kernel-based regularization framework within reproducing kernel Hilbert spaces (RKHS) for accurate pointwise estimation of Radon-Nikodym derivatives, which are equivalent to density ratio functions.  
This approach can extend the conventional kernel uLSIF through an iterative Lavrentiev regularization scheme, in which the core algorithm operates through successive approximations as follows:
\[
\beta^{\lambda,0}_{\X}=0,\qquad \beta_{\X}^{\lambda,l}=(\lambda \boldsymbol{I}+S^*_{X_p}S_{X_p})(S^*_{X_q}S_{X_q}\boldsymbol{1}+\lambda\beta_{\X}^{\lambda,l-1}),\  l\in \mathbb{N},
\]
where $\beta_{\X}^{\lambda,l}$ is the $l$-th iteration of the approximation of $r^*(\x)$.
The methodology leverages two sample operators:
\[
S_{X_p}f=\{f(\x_1^p),\ldots,f(\x_{n_p}^p)\},\qquad S_{X_q}f=\{f(\x_1^q),\ldots,f(\x_{n_q}^q)\},
\]
and two adjoint operators defined through kernel embeddings:
\[
S^*_{X_p}u(\cdot)=\frac{1}{n_p}\sum_{j=1}^{n_p}K(\cdot,\x_j^p)u_j,\  u\in\R^{n_p},\qquad S^*_{X_q}v(\cdot)=\frac{1}{n_q}\sum_{j=1}^{n_q}K(\cdot,\x_j^q)v_j,\  v\in\R^{n_q}.
\]
For complete theoretical analysis and convergence properties, readers are directed to the original work by \citet{nguyen2024OnRegularized}.

\paragraph{nnDRE} We present a neural network-based density ratio estimation (nnDRE) method. It differs from our proposed ppDRE method in that it utilizes a feedforward neural network to model the density ratio, i.e.
\begin{equation*}
    r^*(\x)\approx r^{\text{nn}}(\x;\bt)=\exp\{F(\x|\bt)\},
\end{equation*}
where $F (\cdot|\bt)$ represents the neural network with parameter $\bt$. The estimation is proceeded by minimizing the squared loss based on the neural networks:
\begin{equation*}
    \text{SQ}^*(\bt)=\E_q\{\left[r^{\text{nn}}(\x;\bt)\right]^2\}-2\E_p\{r^{\text{nn}}(\x;\bt)\}.
\end{equation*}

\section{Implementation Details and Hyperparameter Selection Process}\label{appendix-params}

To ensure rigorous and fair comparisons, we implemented the following protocols.

For our proposed ppDRE method, as mentioned in Remark \ref{remark:parameters}, the optimal hyperparameters are identified by the minimal validation loss in CV. For clarity, we detail our approach as follows: across all experiments, we utilized 5-fold CV with random sampling. A grid search was conducted over a predefined set of parameter ranges, which are outlined in Table \ref{tab:params-ppdre}.

\begin{table}[ht]
\centering 
\caption{Search Grid for ppDRE}\label{tab:params-ppdre}
\begin{tabular}{ccc}
\toprule
Parameter      & Description                     & Search Space             \\
\midrule
\( K \)        & Number of PP iterations         & \{5, 10, 15\}            \\
\( J_k \)      & Number of basis functions       & \{20, 50, 70, 100, 150\} \\
\( \lambda \)  & $\ell_2$-regularization strength      & \{0.5, 1, 5, 10\}        \\
\( \delta \)   & Gradient descent learning rate & \{0.001, 0.01, 0.1\}     \\
\bottomrule
\end{tabular}
\end{table}

For open-source baseline methods (e.g., KLIEP, uLSIF, $D^3$-LHSS), we utilized their official implementations (e.g., Python densratio package for uLSIF) and default hyperparameter search grids as recommended in their original papers or standard toolkits. For instance, uLSIF's hyperparameters ($\sigma$, $\lambda$) were selected from the grid 1e-3:10:1e9, consistent with its standard implementation. These methods inherently incorporate CV-based hyperparameter selection in their standard workflows. We preserved these built-in CV mechanisms without modification.

For probabilistic classification approach and nnDRE, we implemented 5-fold CV (aligned with our ppDRE method) to identify the optimal hyperparameters within the search spaces outlined in Table \ref{tab:class-nn-params}.


\begin{table}
\centering
\caption{Search Grid for Probabilistic Classification approach and nnDRE}
\label{tab:class-nn-params}
\begin{tblr}{
  width = 0.55\linewidth,
  colspec = {Q[200]Q[210]Q[400]},
  cells = {c},
  cell{2}{1} = {r=3}{},
  cell{5}{1} = {r=3}{},
  hline{1,8} = {-}{0.08em},
  hline{2,5} = {-}{0.05em},
}
Method & Parameter & Search Space\\
$\quad$ Classification & n\_estimators & \{100, 300\}\\
 & learning\_rate & \{0.0001, 0.001, 0.01\}\\
 & num\_leaves & \{20, 30\}\\
nnDRE & depth & \{2, 3\}\\
 & width & \{8, 32, 64\}\\
 & learning\_rate & \{0.0001, 0.001, 0.01\}
\end{tblr}
\end{table}

For fDRE, we have implemented a version that employs KLIEP as the second-stage DRE method. Adhering to the guidelines provided by the official open-source code\footnotemark[3] and the original research paper \cite{choiFeaturizedDensityRatio2021}, we trained the masked autoregressive flow (MAF) models with the configurations presented in Table \ref{tab:fdre-params}. For the KLIEP method in the second stage, we have retained the original hyperparameter settings as per the open-source implementation.

\footnotetext[3]{\href{https://github.com/ermongroup/f-dre}{https://github.com/ermongroup/f-dre}}

\begin{table}[ht]
\centering
\caption{Hyperparameter setting for the normalizing flow model in fDRE}\label{tab:fdre-params}
\begin{tabular}{ccccc}
\toprule
Dataset             & n\_blocks & n\_hidden & hidden\_size & n\_epochs \\
\midrule
IHDP     & 5        & 1        & 100         & 100      \\
Regression Benchmarks   & 5        & 1        & 100         & 100      \\
MI Gaussians        & 5        & 1        & 100         & 200      \\
\bottomrule
\end{tabular}
\end{table}

For RRND,  in the absence of open-source code, we implemented the algorithm by adhering to the implementation details outlined in the Numerical Illustrations section in \citet{nguyen2024OnRegularized}. The kernel function is assigned as $K(x,x^\prime)=1+\exp\{-(x-x^\prime)^2/2\}$. Utilizing a 5-fold CV, the hyperparameter $\lambda$ is chosen based on the quasi-optimality criterion, and the optimal iteration step $k$ is determined by minimizing squared distance loss function in the validation set. Following the configurations in \citet{nguyen2024OnRegularized}, the search grids are $k\in\{1,2,\ldots,10\}$ and $\lambda\in\{\lambda_\ell=\lambda_0\rho^\ell,\ell=1,\ldots,9\}$ with $\lambda_0=0.9$. The decay factor $\rho=(\lambda_w/\lambda_0)^{1/l}$ is derived from the lower bound $\lambda_w=(n^{-1/2}+m^{-1/2})$, where $n,m$ are the sample sizes of the numerator and denominator distributions respectively.

\section{Additional Numerical Results}

\subsection{Stabilized Weights Estimation}\label{appendix-exp-swe}

\paragraph{Metrics} We use the
mean squared error (RMSE) and the root mean squared logarithmic
error (RMSLE) to evaluate the performance, which are defined as follows:
\begin{equation*}
	\text{RMSE} =\sqrt{\frac{1}{n}\sum_{i=1}^{n}(\hat{r}(\x_{i})-r^{*}(\x_{i}))^{2}}, \qquad
	\text{RMSLE}=\sqrt{\frac{1}{n}\sum_{i=1}^{n}(\log\hat{r}(\x_{i})-\log r^{*}(\x_{i}))^{2}}.
\end{equation*}

\paragraph{Results} We present additional numerical results, including the box plot of RMSLE in Figure \ref{fig-swe_rmsle_box} and the RMSE values in Table \ref{tab-swe_rmse}. The box plot of RMSLE in Figure \ref{fig-swe_rmsle_box} shows the variation of all methods. In Table \ref{tab-swe_rmse}, due to the extremely large RMSE in some data replications, we report the median of RMSE values in 50 replications, which offers evidence supporting the superior performance of the ppDRE method. 

\begin{figure}[htbp]
	
	\begin{centering}
		\includegraphics[width=0.6\paperwidth]{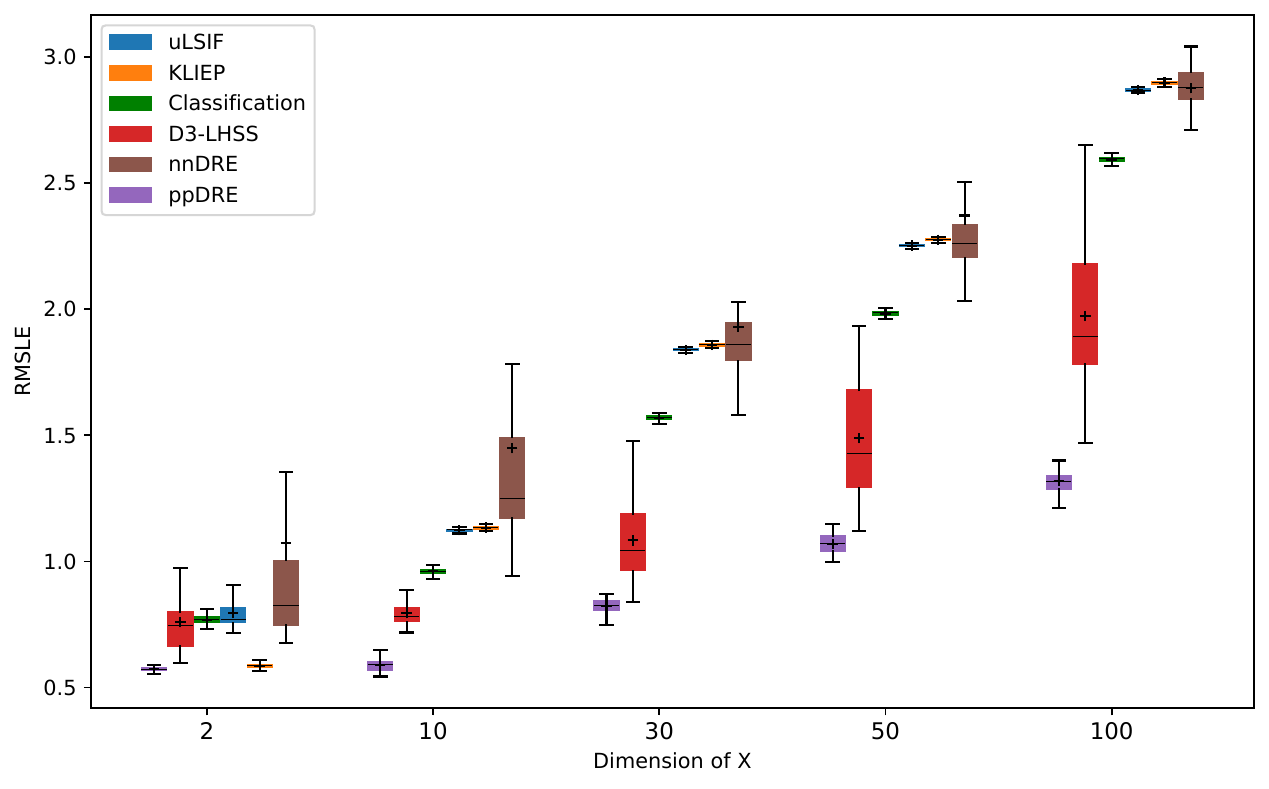}
		\par\end{centering}
	\caption{Boxplot of RMSLE in stabilized weights estimation for different covariate dimension settings $d_X\in\{2,10,30,50,100\}$ estimation across 50 data replications}
	\label{fig-swe_rmsle_box}
\end{figure}

\begin{table}[htbp]
	\centering
	\caption{Median RMSE in stabilized weights estimation for different covariate dimension settings $d_X\in\{2,10,30,50,100\}$.}\label{tab-swe_rmse}
	\begin{tblr}{
			width = \linewidth,
			colspec = {Q[288]Q[123]Q[123]Q[123]Q[123]Q[123]},
			cells = {c},
			cell{1}{2} = {c=5}{0.615\linewidth},
			vline{2} = {-}{},
			hline{1,9} = {-}{0.08em},
			hline{2-3} = {-}{},
		}
		(Median) RMSE & Dimension of Covariates &  &  &  & \\
		Method & 2 & 10 & 30 & 50 & 100\\
		uLSIF & 1.864 & 4.216 & 2.452 & 1.875 & 1.321\\
		KLIEP & 1.827 & 4.222 & 2.464 & 1.898 & 1.39\\
		Classification & 1.93 & 4.217 & 2.413 & 1.809 & 1.208\\
		$\text{D}^3$-LHSS & 2.604 & 8.793 & 2.637 & 2.454 & 2.054\\
            nnDRE & 2.181 & 4.739 & 2.772 & 2.227 & 1.549\\
		ppDRE & \textbf{1.742} & \textbf{3.835} & \textbf{1.777} & \textbf{1.12} & \textbf{0.9}
	\end{tblr}
\end{table}

\subsection{Computational Cost Evaluation}

Through controlled numerical experimentation, we systematically evaluate the computational costs of all competing methodologies. Maintaining the experimental protocol established in Section~\ref{sec:sw}, we conduct dimensional scalability analyses over $d_X \in \{2,10,30,50,100\}$ and measure absolute wall-clock execution times. The experiments were carried out on a computing node utilizing dual AMD EPYC 7713 processors, providing a total of 128 CPU cores. Table~\ref{tab:computation} presents the mean computational duration across three statistically independent trials, with temporal measurements recorded in minutes.

\begin{table}[h]
\centering
\caption{Computation Time Comparison}
\label{tab:computation}
\begin{tabular}{cccccc}
\toprule
Method & $d=2$ & $d=10$ & $d=30$ & $d=50$ & $d=100$ \\
\midrule
uLSIF          & 0.38          & 0.35           & 0.38           & 0.43           & 0.46           \\
KLIEP          & 0.36          & 0.39           & 0.49           & 0.57           & 0.72           \\
Classification          & 0.21          & 0.16           & 0.19           & 0.23           & 0.40           \\
fDRE     & 3.78          & 3.88           & 3.92           & 4.28           & 7.25           \\
RRND          & 3.78 & 3.87 & 3.77 & 3.82 & 3.81 \\
nnDRE          & 5.91          & 3.84           & 8.76           & 6.02           & 8.63           \\
$D^3$-LHSS          & 5.95          & 11.25          & 10.60          & 10.79          & 10.97          \\
ppDRE          & 0.32          & 0.91           & 5.64           & 10.78          & 15.22          \\
\bottomrule
\end{tabular}
\end{table}

We obtain the following key observations: (i) when the dimension of the data is low or moderately large ($d_X=2,10$),  the computation time of our PPDRE method is comparable to that of the uLSIF, KLIEP, and classification methods, which are suitable for low-dimensional data.  Our computation time is much less than that of the fDRE, RRND, nnDRE and $D^3$-LHSS methods, some of which are also designed for high-dimensional data. (ii) when the dimension of the data is large ($d_X=30,50,100$), the methods for low-dimensional data fail to work due to the curse of dimensionality. Our PPDRE method consistently outperforms all baseline methods in estimation accuracy, and the computation time is comparable to that of existing methods for high-dimensional data. The observed increase in computation time is expected, as higher dimensions naturally require more iterations to reach convergence.

\section{Datasets}

\subsection{Dataset in Dose Response Function Estimation}\label{appendix-exp-drf}

\paragraph{IHDP-continuous Dataset}
The original IHDP dataset contains 25 covariates with binary treatments and continuous outcomes. We follow similar procedure as employed by \citet{nieVCNetFunctionalTargeted2020} and \citet{gaoVariationalFrameworkEstimating2023},  disregard the treatments and outcomes, and use the covariates to generate continuous dosages and treatments. Let $X_{1},\ldots,X_{d_X}$ denote the first to the last element of $\X$, where $d_X=25$. The generating procedure for the semi-synthetic IHDP-continuous dataset is as follows:
\begin{align}
\scriptsize
&\text{\bf Assigned treatment:}~ T =(1+\exp(\tilde{T}))^{-1}\,,\label{eq:T}\\
&\text{where}~ \tilde{T}  =\frac{X_{1}}{1+X_{2}}+\frac{\max\{X_{3},X_{4},X_{5}\}}{0.2+\min\{X_{3},X_{4},X_{5}\}}+\tanh\left(5\frac{\sum_{i\in I_{1}}X_{i}}{\lvert I_{1}\lvert}\right)-2+0.5\epsilon\,,\notag\\
&\text{\bf Potential outcome:}~Y^*(t)  =h(t,\X)+0.5\epsilon\,,\notag \\
&\text{where}~h(t,\X) =(1.2-t^{2})\sin(2\pi t-2)\left\{0.5\tanh\left(5\frac{\sum_{i\in I_{2}}X_{i}}{\lvert I_{2}\lvert}\right)+1.5\exp\left(\frac{0.2(X_{1}-X_{5})}{0.1+\min\{X_{2},X_{3},X_{4}\}}\right)\right\}\,,\label{eq:htx}
\end{align}
where $\epsilon\sim \mathcal{N}(0,1)$, $I_1=\{3,6,7,8,9,10,11,12,13,14\}$ and $I_2=\{15,16,17,18,19,20,21,22,23,24\}$.





\subsection{Datasets in Covariate Shift Adaptation}\label{appendix-exp-covariate}


In this section, we introduce the basic information and the regression tasks of the benchmark datasets that are used in the covariate shift adaption experiments. These datasets are readily accessible via the Ready Tensor platform. The specific numbers of observations and features for each dataset are detailed in Table \ref{tab-reg-dataset}.

\begin{table}[h]
\centering
\caption{Observation and Feature Numbers of Regression Benchmark Datasets}\label{tab-reg-dataset}
\begin{tblr}{
  cells = {c},
  vline{2} = {-}{0.05em},
  hline{1,4} = {-}{0.08em},
  hline{2} = {-}{0.05em},
}
Dataset & Abalone & Billboard Spotify & Cancer Mortality & Computer Activity & Diamond Prices\\
Observation & 4177 & 8930 & 3047 & 8192 & 6000\\
Feature & 8 & 18 & 31 & 21 & 7
\end{tblr}
\end{table}

\paragraph{Abalone} The Abalone dataset, accessible online\footnotemark[4], is a popular dataset used in machine learning and statistics to predict the age of abalone from physical measurements. Abalone age is determined by cutting the shell, staining it, and counting the number of growth rings. However, this is a destructive process, so a non-destructive method based on physical measurements is desirable. 

\footnotetext[4]{\href{https://www.dcc.fc.up.pt/~ltorgo/Regression/DataSets.html}{https://www.dcc.fc.up.pt/\~{}ltorgo/Regression/DataSets.html}}

\paragraph{Billboard Spotify} The Billboard Spotify dataset represents a comprehensive collection of audio features for songs that made their mark on various Billboard charts that span the years 1961 through 2022. For each song, the corresponding audio features were sourced from the Spotify API. The regression task for the Billboard Spotify dataset is to predict the danceability of a song, which describes how suitable a track is for dancing based on a combination of musical elements including tempo, rhythm stability, beat strength, and overall regularity. 

\paragraph{Cancer Mortality} The Cancer Mortality dataset, aggregated from authoritative sources including the American Community Survey, clinicaltrials.gov, and cancer.gov, is designed for regression tasks to predict cancer mortality rates in US counties from 2010 to 2016. It incorporates 2013 census data, including county-level features such as population, income, households, and other demographic attributes. 

\paragraph{Computer Activity} The Computer Activity databases consist of a collection of metrics related to the activity of computer systems. The data was collected from a Sun Sparcstation 20/712 with 128 Mbytes of memory running in a multi-user university department. This dataset, available  online\footnotemark[4], is utilized to estimate the proportion of time during which central processing units (CPUs) are engaged in user mode operations, based on the recorded system activity measures. 

\paragraph{Diamond Prices} The Diamond Prices dataset is accessible via the PyCaret library within the Python programming environment. The objective of this dataset is to predict diamond prices based on a set of attributes, including carat weight, cut, color, clarity, polish, symmetry, and the report issued by the grading agency that assessed the diamond’s qualities.

\section{$L^2$-distance Minimization}\label{sec:minimization}

In this section, we show that minimizing the $L^2$-distance $\mathbb{E}_q\{[r^*(\x) - r_{k-1}(\x)f(\a^\top\x)]^2\}$ w.r.t. $f$ or $\a$ is equivalent to minimizing $H(f,\a):=\mathbb{E}_q\{r^2_{k-1}(\x)f^2(\a^\top\x)\}- 2\mathbb{E}_p \{r_{k-1}(\x)f(\a^\top\x)\}$.

Note that, by definition, $r^*(\x) = p(\x)/q(\x)$. Then,
\begin{align*}
    &\mathbb{E}_q\{[r^*(\x) - r_{k-1}(\x)f(\a^\top\x)]^2\} \\ =&\mathbb{E}_q\{[r^*(\x)]^2\}+ \mathbb{E}_q\{[r_{k-1}(\x)f(\a^\top\x)]^2\}- 2\int_{\mathcal{X}} r_{k-1}(\x)f(\a^\top\x)r^*(\x)q(\x)\,d\x \\
    =&\mathbb{E}_q\{[r^*(\x)]^2\}+ \mathbb{E}_q\{[r_{k-1}(\x)f(\a^\top\x)]^2\}- 2\int_{\mathcal{X}} r_{k-1}(\x)f(\a^\top\x)p(\x)\,d\x\\
    =&\mathbb{E}_q\{[r^*(\x)]^2\}+ \mathbb{E}_q\{[r_{k-1}(\x)f(\a^\top\x)]^2\}- 2\mathbb{E}_p \{r_{k-1}(\x)f(\a^\top\x)\}\,.
\end{align*}

Since the first term $\mathbb{E}_q\{[r^*(\x)]^2\}$ is independent of either $f$ or $\a$, we have that minimizing $\mathbb{E}_q\{[r^*(\x) - r_{k-1}(\x)f(\a^\top\x)]^2\}$ w.r.t. $f$ or $\a$ is equivalent to minimizing $H(f,\a)$.
 
\section{Proof of Theorem~\ref{theorem}}\label{sec:proof}

\subsection{Notations and Assumptions}\label{sec:notations}
To prove the theorem, we require the following notations and assumptions.

\textbf{Notations of derivative.} For any univariate function $f$, we let $f^{(j)}$ denote its $j$th derivative. For any multivariate function $g(\bb,\a)$, we denote $\partial_1 g(\bb,\a)$ to be its partial derivative with respect to w.r.t. the first argument $\bb$, and $\partial_2 g(\bb,\a)$ to be its partial derivative (w.r.t.) the second argument $\a$. 



\textbf{Notations regarding matrices.} Consider $j=0,1,2$. We let $\Omega^{(j)}_{J_k}(\a):=\mathbb{E}[\boldsymbol{\Phi}^{(j)}_k(\a^\top\x)\boldsymbol{\Phi}^{(j)}_k(\a^\top\x)^\top]$, for $j=0,1,2$, where the expectation can be taken w.r.t. the probability density $p(\cdot)$ or $q(\cdot)$. 

For the corresponding empirical version, let $\{\x_1,\ldots,\x_n\}$ be i.i.d. from either the probability density $p(\cdot)$ (resp. $n=n_p$) or $q(\cdot)$ (resp. $n=n_q$), we define $\boldsymbol{P}^{(j)}(\a) = \{\boldsymbol{\Phi}^{(j)}_k(\a^\top\x_1),\ldots, \boldsymbol{\Phi}^{(j)}_k(\a^\top\x_{n})\} \in\mathbb{R}^{n\times J_k}$, and $\hat{\Omega}^{(j)}_{J_k}(\a)= n^{-1}\boldsymbol{P}^{(j)}(\a)^\top \boldsymbol{P}^{(j)}(\a)$.

Similarly, we define ${\Sigma}_{J_k}(\a) = \mathbb{E}_q \left[r^2_{k-1}(\x)\boldsymbol{\Phi}_k(\a^\top\x)\boldsymbol{\Phi}_k(\a^\top\x)^\top\right]$, $\hat{\Sigma}_{J_k}(\a) = n_q^{-1} \boldsymbol{Z}_k(\a)^\top \boldsymbol{Z}_k(\a)$, $\hat{\Sigma}_{J_k,\lambda}(\a) = \hat{\Sigma}_{J_k}(\a) + \lambda \boldsymbol{I}_{J_k}$, where $\boldsymbol{Z}_k(\a)$ is defined in Proposition~\ref{prop:betaclosedform}.

Moreover, we let $\eta_{\min}(\Omega)$ and $\eta_{\max}(\Omega)$ denote the minimum and maximum eigenvalues of any matrix $\Omega$, respectively.


\textbf{Notations regarding minimization criteria.} Recall the definitions of $H(f,\a)$ and $\hat{\mathcal{L}}_k(\a,\bb;\lambda)$ from \eqref{def:f_ak0} and \eqref{eq:min}, respectively. By definition, for any $\a\in\mathcal{A}$,
\begin{equation}
\hat{f}_{\a,k}(\a^\top\x) = \hat{\bb}_k(\a)^\top\boldsymbol{\Phi}_k(\a^\top\x)\,, \quad \text{and} \quad 
\hat{\bb}_k(\a) = \{\hat{\Sigma}_{J_k,\lambda}(\a)\}^{-1}\frac{1}{n_p}\sum^{n_p}_{i=1}\hat{r}_{k-1}(\x_i^p)\boldsymbol{\Phi}_k(\a^\top\x_i^p)\,.\label{def:fhat_bbhat}
\end{equation}

We define $H^*(\bb,\a):= \E_{q}[r_{k-1}^{2}(\x)\cdot\{\text{\ensuremath{\bb}}^{\top}\boldsymbol{\Phi}_{k}(\a^{\top}\x)\}^2]-2\E_{p}[r_{k-1}(\x)\cdot\text{\ensuremath{\bb}}^{\top}\boldsymbol{\Phi}_{k}(\a^{\top}\x)]$ to be the approximation of $H(f,\a)$ and the theoretical counterpart of $\hat{\mathcal{L}}_k$. We then define $\bb^*_k(\a):=\underset{\bb}{\text{argmin}}~H^*(\bb,\a)$ for any $\a\in\mathcal{A}$, $\a^*_k = \underset{\a\in\mathcal{A}}{\text{argmin}}~H^*(\bb_k^*(\a),\a)$, and $f^*_{\a,k}(\a^\top\x) = \bb^*_k(\a)^\top\boldsymbol{\Phi}_k(\a^\top\x)$ for any $\x\in\mathcal{X}$.

\begin{assumption}\label{as:support}
    (a) The support $\mathcal{X}$ of $\x$ is a compact subset of $\mathbb{R}^d$. (b) The parameters $\{\a_k\}_{k=1}^K$ defined in \eqref{def:ak} are in the interior of a compact set $\mathcal{A}\subset \mathbb{S}_d^{+}$. 
\end{assumption}
\begin{assumption}\label{as:smooth}
    (a) For every $\a \in \mathcal{A}$,
    there exists a $\bb_{k,j}(\a)\in\mathbb{R}^{J_k}$ such that
    \begin{equation}\label{sieveapprox}
    \sup_{\a\in\mathcal{A}}\sup_{z\in\mathcal{Z}} \left|f^{(j)}_{\a,k}(z) - \{\bb_{k,j}(\a)\}^\top\boldsymbol{\Phi}^{(j)}_{k}(z)\right| < C_{0,j} J_k^{-(s-j)}\,,
    \end{equation}
    for some constants $C_{0,j}, s>0$, and $0\leq j <s$,    
    where $\mathcal{Z}:= \{\a^\top\x : \a \in \mathcal{A}~\text{and}~\x\in \mathcal{X}\}$; (b) For every $z\in\mathcal{Z}$, $f_{\a,k}(z)$ is continuously differentiable w.r.t. $\a$, and $\sup_{z\in\mathcal{Z}}\big\{|\partial_{\a}f_{\a,k}(z)|\big|_{\a=\a_k}\big\}<\infty$; (c) $f_{\a,k}(z)$ are uniformly bounded and bounded away from $0$.
\end{assumption}


\begin{assumption}\label{as:eigen}
    (a) The eigenvalues of $\Omega^{(0)}_{J_k}(\a)$ are bounded and bounded away from zero uniformly in $J_k$ and $\a\in\mathcal{A}$. (b) There exist sequences of constants $\tilde{\zeta}_j(J_k)$, $j=1,2$, such that the eigenvalues of $\Omega^{(j)}_{J_k}(\a)$ are bounded by  $\tilde{\zeta}_j(J_k)$ uniformly in $J_k$ and $\a\in\mathcal{A}$ (c) There exist sequences of constants $\zeta_j(J_k)$, $j=0,1,2$, such that $\sup_{j\leq m}\sup_{z\in\mathcal{Z}}\|\boldsymbol{\Phi}^{(j)}_k(z)\|\leq \zeta_m(J_k)$ for $m=0,1,2$ and all $k$. (d)  As $n_p, n_q\rightarrow\infty$, $J_k\rightarrow\infty$, $\zeta_0(J_k)^2\sqrt{J_k/(n_p\wedge n_q)}\rightarrow 0$, $J_k^{-(s-1)}\max\{\zeta_2(J_k),\zeta_0(J_k)^2\}\rightarrow 0$, $\zeta_2(J_k)\sqrt{J_k/(n_p\wedge n_q)}\rightarrow 0$ and $\lambda = O(\sqrt{J_k/n_q})$. 
\end{assumption}

\begin{assumption}\label{as:Hessian}
    The minimum eigenvalue of $\partial^2_{\a} H^*(\bb^*_k(\a),\a)\big|_{\a=\a_k^*}$ is bounded away from 0 uniformly in $J_k$.
\end{assumption}

Assumption \ref{as:support} imposes compactness conditions on the densities and parameter spaces. This condition is convenient for deriving uniform convergence rates. Assumption~\ref{as:smooth}  includes regularity conditions. Assumption~\ref{as:smooth}~(a) can be satisfied if $f_{\a,k}(z)$ is $s$-times continuously differentiable w.r.t. $z\in\mathcal{Z}$ for any $\a\in\mathcal{A}$ \citep{lorentz1986approximation}. Assumption \ref{as:eigen} rules out near multicollinearity in the approximating basis
functions. This condition is familiar in the sieve regression literature \citep{chen2007large}. Assumption~\ref{as:Hessian} requires the Hessian matrix of the approximation of the theoretical minimization criteria to be positive definite at its minimum, $\a_k^*$, which is satisfied if $\a^*_k$ is in the interior of $\mathcal{A}$.

\subsection{Outline of the proof}
We prove the results in an inductive way. 
Note that the initial estimate $f_0(\x;\bb_0)\equiv 1$, $r_{k}(\x) = 1\cdot \Pi_{m=1}^{k} f_{\a_m,m}(\a_m^\top\x)$ and $\hat{r}_{k}(\x) = f_0(\x,\bb_0)\Pi_{m=1}^{k}\hat{f}_{\hat{\a}_m,m}(\hat{\a}_m^\top \x)$, and $\hat{f}_{\a,k}$ is estimated based on $\hat{r}_{k-1}$. Let $r_{0}(\x) = 1$ and $\hat{r}_{0}(\x) = f_{0}(\x;\bb_0)=1$. The following results hold for $k=1$.
\begin{align}
&\frac{1}{n}\sum_{i=1}^n|\hat{r}_{k-1}(\x_i) - r_{k-1}(\x_i) |^2=O_p(\xi_{n,k-1})\,.\label{rhatL2norm}
\end{align} 


We shall show that, for any $k\in \{1,2,\ldots, K\}$, given \eqref{rhatL2norm} holds, we have
\begin{align}
    &\sup_{\x\in\mathcal{X}}|\hat{f}_{\hat{\a}_k,k}(\hat{\a}_k^\top\x) - f_{\a_k,k}(\a_k^\top\x) |=O_{P}\left(\{J_k^{-(s-1)}+\sqrt{\xi_{n,k-1}}+\sqrt{J_k/(n_p\wedge n_q)}\}\cdot \{\sqrt{\tilde{\zeta}_1(J_k)}\vee \zeta^2_0(J_k) \}\right)\,,\label{eq:sup_fhatahat-fa}
\end{align}
where $\tilde{\zeta}_1(J_k)$ is the rate of the maximum eigenvalue of $\Omega^{(1)}_{J_k}(\a)$ and $\sqrt{\tilde{\zeta}_1(J_k)}\vee \zeta_0(J_k) = \max\{\sqrt{\tilde{\zeta}_1(J_k)}, \zeta_0(J_k)\}$.  Then, using the fact that $\xi_{n,0}=0$, we can inductively derive that
$$
\sup_{\boldsymbol{x}\in\mathcal{X}}\left| \hat{r}_{K}(\boldsymbol{x}) - r_{K}(\boldsymbol{x})\right| =O_{p}\Bigg(\sum_{\ell=1}^K\Bigg[\Bigg\{J_{\ell}^{-(s-1)}+ \sqrt{\frac{J_{\ell}}{n_q \wedge n_p}}\Bigg\}\cdot \prod_{i=\ell}^K\Big\{\sqrt{\tilde{\zeta}_1(J_i)}\vee \zeta^2_0(J_i)\Big\}\Bigg]\Bigg)\,,
$$
which establishes the last statement of Theorem~\ref{theorem}. 

\textbf{Proof of \eqref{eq:sup_fhatahat-fa}.} To prove \eqref{eq:sup_fhatahat-fa}, we will need to establish the results in \eqref{Thm3.2_1} and \eqref{Thm3.2_2}, which are relegated to Sections~\ref{sec:proof_3.2_1} and \ref{sec:proof_3.2_2}, respectively. Specifically, we decompose
\begin{align}
  \hat{f}_{\hat{\boldsymbol{a}}_k,k}(\hat{\boldsymbol{a}}_k^\top\boldsymbol{x}) - f_{\boldsymbol{a}_k,k}(\boldsymbol{a}_k^\top\boldsymbol{x}) =& \hat{f}_{\hat{\boldsymbol{a}}_k,k}(\hat{\boldsymbol{a}}_k^\top\boldsymbol{x}) - {f}_{\hat{\boldsymbol{a}}_k,k}(\hat{\boldsymbol{a}}_k^\top\boldsymbol{x}) \label{eq:EL_fhatahat-fa_1}\\
  &+ {f}_{\hat{\boldsymbol{a}}_k,k}({\boldsymbol{a}}_k^\top\boldsymbol{x}) - f_{\boldsymbol{a}_k,k}(\boldsymbol{a}_k^\top\boldsymbol{x}) \label{eq:EL_fhatahat-fa_2}\\
  &+ {f}_{\hat{\boldsymbol{a}}_k,k}(\hat{\boldsymbol{a}}_k^\top\boldsymbol{x}) - f_{\hat{\boldsymbol{a}}_k,k}(\boldsymbol{a}_k^\top\boldsymbol{x})\label{eq:EL_fhatahat-fa_3} \,.
\end{align}

For \eqref{eq:EL_fhatahat-fa_1},  we have $\sup_{\boldsymbol{x}\in \mathcal{X}}|\hat{f}_{\hat{\boldsymbol{a}}_k,k}(\hat{\boldsymbol{a}}_k^\top\boldsymbol{x})-{f}_{\hat{\boldsymbol{a}}_k,k}(\hat{\boldsymbol{a}}_k^\top\boldsymbol{x})|\leq \sup_{\boldsymbol{a}\in\mathcal{A}}\sup_{\boldsymbol{x}\in \mathcal{X}}|\hat{f}_{{\boldsymbol{a}},k}({\boldsymbol{a}}^\top\boldsymbol{x})-{f}_{{\boldsymbol{a}},k}({\boldsymbol{a}}^\top\boldsymbol{x})|$.

For \eqref{eq:EL_fhatahat-fa_2}, under Assumption~\ref{as:smooth}~(b), we can use Taylor's expansion to expand $f_{\hat{\boldsymbol{a}}_k,k}$ around $f_{\boldsymbol{a}_k,k}$ and obtain $\sup_{\boldsymbol{x}\in\mathcal{X}}|f_{\hat{\boldsymbol{a}}_k,k}(\boldsymbol{a}_k^\top\boldsymbol{x})-f_{{\boldsymbol{a}}_k,k}(\boldsymbol{a}_k^\top\boldsymbol{x})|=O_p(\|\hat{\boldsymbol{a}}_k - \boldsymbol{a}_k\|)$.

For \eqref{eq:EL_fhatahat-fa_3}, we can first decompose it as
$$
\eqref{eq:EL_fhatahat-fa_3} = \left\{{f}_{{\boldsymbol{a}}_k,k}(\hat{\boldsymbol{a}}_k^\top\boldsymbol{x}) - f_{{\boldsymbol{a}}_k,k}(\boldsymbol{a}_k^\top\boldsymbol{x})\right\} +\left\{{f}_{\hat{\boldsymbol{a}}_k,k}(\hat{\boldsymbol{a}}_k^\top\boldsymbol{x}) - f_{{\boldsymbol{a}}_k,k}(\hat{\boldsymbol{a}}_k^\top\boldsymbol{x})\right\} + \left\{f_{{\boldsymbol{a}}_k,k}(\boldsymbol{a}_k^\top\boldsymbol{x}) - f_{\hat{\boldsymbol{a}}_k,k}(\boldsymbol{a}_k^\top\boldsymbol{x})\right\}\,.
$$
The supremum norm of the first term can be bounded by $O_p(\|\hat{\boldsymbol{a}}_k - \boldsymbol{a}_k\|)$, using Taylor's expansion expanding $f_{\boldsymbol{a}_k,k}(\hat{\boldsymbol{a}}_k^\top\boldsymbol{x})$ around $f_{\boldsymbol{a}_k,k}({\boldsymbol{a}}_k^\top\boldsymbol{x})$. The second and the third term can also be bounded by $O_p(\|\hat{\boldsymbol{a}}_k - \boldsymbol{a}_k\|)$ using the same argument for \eqref{eq:EL_fhatahat-fa_2}. Thus, we have
$$
\sup_{\boldsymbol{x}\in\mathcal{X}}|\hat{f}_{\hat{\boldsymbol{a}}_k,k}(\hat{\boldsymbol{a}}_k^\top\boldsymbol{x}) - {f}_{{\boldsymbol{a}}_k,k}({\boldsymbol{a}}_k^\top\boldsymbol{x})|=O_p\left(\sup_{\boldsymbol{a}\in\mathcal{A}}\sup_{\boldsymbol{x}\in\mathcal{X}}|\hat{f}_{{\boldsymbol{a}}_k,k}({\boldsymbol{a}}_k^\top\boldsymbol{x}) - {f}_{{\boldsymbol{a}}_k,k}({\boldsymbol{a}}_k^\top\boldsymbol{x})|+\|\hat{\boldsymbol{a}}_k-\boldsymbol{a}_k\|\right)\,.
$$
Then, using \eqref{Thm3.2_1} and \eqref{Thm3.2_2}, and noting that $\zeta_0(J_k)\leq J_k$, we then have \eqref{eq:sup_fhatahat-fa} holds. The outline of the proof is completed.

\subsection{Proof of \eqref{Thm3.2_1}}\label{sec:proof_3.2_1}
Recalling the definitions of $f_{\a,k}$ in \eqref{def:f_ak} and $f^*_{\a,k}$ in Section \ref{sec:notations}, we decompose 
\begin{align}
   \sup_{\a\in\mathcal{A}}\sup_{z\in\mathcal{Z}} \left|\hat{f}_{\a,k}(z) - f_{\a,k}(z)\right|\leq & \sup_{\a\in\mathcal{A}}\sup_{z\in\mathcal{Z}} \left|f^*_{\a,k}(z) - f_{\a,k}(z)\right|\label{eq:fstar-f}\\
   &+\sup_{\a\in\mathcal{A}}\sup_{z\in\mathcal{Z}} \left|\hat{f}_{\a,k}(z) -f^{*}_{\a,k}(z) \right|\label{eq:ftilde-fdoublestar}\,.
\end{align}
We derive the convergence rates for \eqref{eq:fstar-f} and \eqref{eq:ftilde-fdoublestar}.

\textbf{Rate of \eqref{eq:fstar-f}.} 


We obtain the rate
$$\eqref{eq:fstar-f} = O(J_k^{-s}\zeta_0(J_k))$$ from Lemma~\ref{lemma:fstar_rate}.


\textbf{Rate for \eqref{eq:ftilde-fdoublestar}.} By definition, we have 
$$
\bb^*_k(\a) = \{\Sigma_{J_k}(\a)\}^{-1}\mathbb{E}_p[r_{k-1}(\x)\boldsymbol{\Phi}_k(\a^\top\x)].
$$

Recalling the representation of $\hat{f}_{\a,k}$ in \eqref{def:fhat_bbhat}, we decompose
\begin{align}
    &\sup_{\a\in\mathcal{A}}\sup_{\x\in\mathcal{X}}|\hat{f}_{\a,k}(\a^\top\x) - f^*_{\a,k}(\a^\top\x)|\notag\\
    =& \sup_{\a\in\mathcal{A}}\sup_{\x\in\mathcal{X}}\left|\boldsymbol{\Phi}_k(\a^\top\x)^\top \{\hat{\Sigma}_{J_k,\lambda}(\a)\}^{-1}\frac{1}{n_p}\sum^{n_p}_{i=1}\hat{r}_{k-1}(\x_i^p)\boldsymbol{\Phi}_k(\a^\top\x_i^p) - \boldsymbol{\Phi}_k(\a^\top\x)^\top \{\Sigma_{J_k}(\a)\}^{-1}\mathbb{E}_p[r_{k-1}(\x)\boldsymbol{\Phi}_k(\a^\top\x)] \right|\notag\\
    \leq& \sup_{\a\in\mathcal{A}}\sup_{\x\in\mathcal{X}}\left|\boldsymbol{\Phi}_k(\a^\top\x)^\top \{\hat{\Sigma}_{J_k,\lambda}(\a)\}^{-1}\frac{1}{n_p}\sum^{n_p}_{i=1}\{\hat{r}_{k-1}(\x_i^p)-r_{k-1}(\x_i^p)\}\boldsymbol{\Phi}_k(\a^\top\x_i^p)\right|\label{decomp:ftilde-fstar1}\\
    +&\sup_{\a\in\mathcal{A}}\sup_{\x\in\mathcal{X}} \bigg|\boldsymbol{\Phi}_k(\a^\top\x)^\top \{\hat{\Sigma}_{J_k,\lambda}(\a)\}^{-1}\frac{1}{n_p}\sum^{n_p}_{i=1}r_{k-1}(\x_i^p)\boldsymbol{\Phi}_k(\a^\top\x_i^p)  - \boldsymbol{\Phi}_k(\a^\top\x)^\top \{\hat{\Sigma}_{J_k,\lambda}(\a)\}^{-1}\mathbb{E}_p[r_{k-1}(\x)\boldsymbol{\Phi}_k(\a^\top\x)]\bigg|\label{decomp:ftilde-fstar2}\\
    &+ \sup_{\a\in\mathcal{A}}\sup_{\x\in\mathcal{X}} \left|\boldsymbol{\Phi}_k(\a^\top\x)^\top \left\{\{\hat{\Sigma}_{J_k,\lambda}(\a)\}^{-1}-\{{\Sigma}_{J_k}(\a)\}^{-1}\right\}\mathbb{E}_p[r_{k-1}(\x)\boldsymbol{\Phi}_k(\a^\top\x)]\right|\,.\label{decomp:ftilde-fstar3}
\end{align}
We derive the rate for \eqref{decomp:ftilde-fstar1} to \eqref{decomp:ftilde-fstar3} one by one.

\textbf{Rate for \eqref{decomp:ftilde-fstar1}.} Note that the term in the absolute value sign in \eqref{decomp:ftilde-fstar1} is the $L_2$-projection of $\{\hat{r}_{k-1}(\x^p) - r_{k-1}(\x^p)\}$ to the linear space spanned by the sieve basis $\bm \Phi_k(\a^\top\x)$. Given that $\mathcal{A}$ and $\mathcal{X}$ are compact sets and the sieve basis $\boldsymbol{\Phi}_k$ is a local basis, we have $\eqref{decomp:ftilde-fstar1} \leq \sup_{\a\in\mathcal{A}}\sup_{\x\in\mathcal{X}}|\|\boldsymbol{\Phi}_k(\a^\top\x)\|\sqrt{\sum^{n_p}_{i=1}\{\hat{r}_{k-1}(\x_i^p) - r_{k-1}(\x_i^p)\}^2/n_p}= O_p(\zeta_0(J_k)\sqrt{\xi_{n,k-1}})$ .

\textbf{Rate for \eqref{decomp:ftilde-fstar2}.} 
Define $b_{\a}(z) = \boldsymbol{\Phi}_k(z)^\top \{\hat{\Sigma}_{J_k,\lambda}(\a)\}^{-1}/\|\boldsymbol{\Phi}_k(z)^\top \{\hat{\Sigma}_{J_k,\lambda}(\a)\}^{-1}\|$ and $\mathbb{S}_{J_k}$ to be the $J_k$ dimensional unit ball $\mathbb{S}_{J_k}:=\{b\in\mathbb{R}^{J_k}:\|b\|=1\}$. Since, under Assumption~\ref{as:eigen}, 
 $$ 
 \sup_{\a \in \mathcal{A}} \sup_{ \x \in \mathcal{X}} \|\boldsymbol{\Phi}_k(z)^\top \{\hat{\Sigma}_{J_k,\lambda}(\a)\}^{-1} \|   = O_{P} ( \zeta_{0}(J_k) ),
 $$ the term \eqref{decomp:ftilde-fstar2}  can be bounded as follows:
 {
 \begin{align}\label{eq:variance-supw-1-1}
\eqref{decomp:ftilde-fstar2}= &  O_P\left\{ \zeta_{0}(J_k) \right\}\cdot  \sup_{\a \in \mathcal{A}}\sup_{ \x\in \mathcal{X} }  \left|  \frac{1}{n_p}\sum_{i=1}^{n_p} b_{\a}(\a^\top\x)^{\top } r_{k-1}(\x_i^p)\boldsymbol{\Phi}_k(\a^\top\x_i^p) - \mathbb{E}_p[ b_{\a}(\a^\top\x)^{\top } r_{k-1}(\x)\boldsymbol{\Phi}_k(\a^\top\x)]\right| \notag\\
	\leq &  O_P\left\{ \zeta_{0}(J_k) \right\} \cdot   \sup_{(b,\a) \in \mathbb{S}_{J_k} \times \mathcal{A}} \left| \frac{1}{n_p}\sum_{i=1}^{n_p} b^{\top } r_{k-1}(\x_i^p)\boldsymbol{\Phi}_k(\a^\top\x_i^p) - \mathbb{E}_p[b^{\top }r_{k-1}(\x)\boldsymbol{\Phi}_k(\a^\top\x)]   \right| .
\end{align}}

Consider  the measurable function class $\mathcal{H}_k\coloneqq \{ \x \mapsto b^{\top} \boldsymbol{\Phi}_k(\a^{\top}\x) \} : (b,\a) \in \mathbb{S}_{J_k} \times \mathcal{A}\}$ with its envelope denoted by $H_k(\x)\coloneqq \sup_{(b,\a) \in \mathbb{S}_{J_k} \times \mathcal{A}}  \| b^{\top}\boldsymbol{\Phi}_k(\a^{\top}\x)\|= \sup_{\a \in \mathcal{A}} \|\boldsymbol{\Phi}_k(\a^{\top}\x) \|$. We denote $\| H \|_{f} := \{ \int H(x)^2f(x)dx\}^{1/2}$ for any function $H$ and density $f$. Then $ \| H_k\|_{p} = O(\zeta_{0}(J_k))$. Note that by the maximal inequality (e.g., \citealp[Theorem 2.14.2 ]{vaart1996weak}), 
\begin{align*}
		& \mathbb{E} \left[ \sup_{(b,\a) \in \mathbb{S}_{J_k} \times \mathcal{A}} \left| \frac{1}{n_p}\sum_{i=1}^{n_p} b^{\top } r_{k-1}(\x_i^p)\boldsymbol{\Phi}_k(\a^\top\x_i^p) - \mathbb{E}_p[b^{\top }r_{k-1}(\x)\boldsymbol{\Phi}_k(\a^\top\x)]   \right| \right] \\
		\leq & \frac{C}{\sqrt{n_p}}  \cdot \int_{0}^{1}  \sqrt{1+ \log N_{[]} (\epsilon\cdot\| H_k\|_{p}  , \mathcal{H}_{k}, \|\cdot\|_{p} ) } d \epsilon \cdot\| H_k\|_{p},
	\end{align*}
where $N_{[ ]}(\epsilon\| H_k\|_{p}, \mathcal{H}_k , \|\cdot\|_{p} )$ is the minimum number of  $\epsilon\| H_k\|_{p}$-brackets needed to cover $\mathcal{H}_{k}$ \citep[Section 2.1]{vaart1996weak}. Since  $\| H_k \|_{p} \leq \| H_k \|_{\infty}$, we have 
\begin{align}\label{eq:bn-inequality}
	N_{[]} (\epsilon , \mathcal{H}_{k}, \|\cdot\|_{p} ) \leq 	N_{[]} (\epsilon , \mathcal{H}_{k}, \|\cdot\|_{\infty} ).
\end{align}
Then, by Lemma \ref{lemma:entropy} we deduce that
\begin{align*}
	& \int_{0}^{1}  \sqrt{1+ \log N_{[]} (\epsilon\cdot\|H_k\|_{p}  , \mathcal{H}_{k}, \|\cdot\|_{p} ) } d \epsilon \\
	\leq &  C\int_{0}^{1}  \sqrt{1  -J_k\log\left(\frac{\epsilon\cdot\|H_k\|_{p}}{\zeta_{0}(J_k)} \right) }d \epsilon \\
	\leq &  C\sqrt{J_k} \int_{0}^{1} \sqrt{\frac{1}{J_k}- \log (\epsilon) } d\epsilon
	=   O( \sqrt{J_k}).
\end{align*}
Hence,
$$
\mathbb{E} \left[ \sup_{(b,\a) \in \mathbb{S}_{J_k} \times \mathcal{A}} \left| \frac{1}{n_p}\sum_{i=1}^{n_p} b^{\top } r_{k-1}(\x_i^p)\boldsymbol{\Phi}_k(\a^\top\x_i^p) - \mathbb{E}_p[b^{\top }r_{k-1}(\x)\boldsymbol{\Phi}_k(\a^\top\x)]   \right| \right]=O_p\left(\frac{\sqrt{J_k}\zeta_0(J_k)}{n_p}\right)\,.
$$
Consequently,  by \eqref{eq:variance-supw-1-1}, we have 
\begin{align}\label{eq:variance-supw-1-r}
	\eqref{decomp:ftilde-fstar2} = O_P\left(\frac{\sqrt{J_k}\zeta_{0}(J_k)^2 }{\sqrt{n_p}} \right).
\end{align}

\textbf{Rate for \eqref{decomp:ftilde-fstar3}.} 

Note that, under Assumption~\ref{as:eigen} and recalling the definition of $\eta_{\max}(\cdot)$ and the results in Lemma~\ref{lemma:supSigma_rate},
\begin{align}
  &\eqref{decomp:ftilde-fstar3}\leq \zeta_0(J_k) \sup_{\a\in\mathcal{A}}\eta_{\max}\left(\{\hat{\Sigma}_{J,\lambda}(\a)\}^{-1}\{{\Sigma}_{J_k}(\a)\}^{-1}\right)\left\|\left\{\Sigma_J(\a)- \hat{\Sigma}_{J,\lambda}(\a)\right\}\mathbb{E}_p\left[r_{k-1}(\x)\boldsymbol{\Phi}_k(\a^\top\x)\right]\right\|\notag\\
  &= O_p(\zeta_0(J_k))\sup_{\a\in\mathcal{A}}\left[\sqrt{\eta_{\max}\left(\left\{\Sigma_J(\a)- \hat{\Sigma}_{J_k}(\a)\right\}^2\right)}+\lambda\right]\left\|\mathbb{E}_p\left[r_{k-1}(\x)\boldsymbol{\Phi}_k(\a^\top\x)\right]\right\|\notag\\
  &\leq  O_p(\zeta_0(J_k))\sup_{\a\in\mathcal{A}}\left\{\lambda +\left\|\Sigma_J(\a)- \hat{\Sigma}_{J_k}(\a)\right\|\right\}\left\|\mathbb{E}_p\left[r_{k-1}(\x)\boldsymbol{\Phi}_k(\a^\top\x)\right]\right\|\notag\\
  &= O_p\left(\sqrt{\xi_{n,k-1}}\zeta_0(J_k)^2+\frac{\sqrt{J_k}\zeta_0(J_k)^2}{\sqrt{n_q}}\right)\cdot\sup_{\a\in\mathcal{A}}\left\|\mathbb{E}_p\left[r_{k-1}(\x)\boldsymbol{\Phi}_k(\a^\top\x)\right]\right\|\,,\label{eq:12_1} 
\end{align}
where the last equality follows from Lemma~\ref{lemma:supSigma_rate} and the rate of $\lambda$ in Assumption~\ref{as:eigen}.



Let  $V(\a):=\mathbb{E}_p\left[\mathbb{E}_p\left\{r_{k-1}(\x)|\a^\top\x\right\}\boldsymbol{\Phi}_k(\a^\top\x)\right]$. Under Assumption~\ref{as:eigen}, we have the eigenvalues of $\Omega_{J,p}:=\mathbb{E}_p\{\boldsymbol{\Phi}_k(\a^\top\x)\boldsymbol{\Phi}_k(\a^\top\x)^\top\}$ are bounded and bounded away from 0 uniformly in $\a\in\mathcal{A}$. Then  
\begin{align*}
    \sup_{\a\in\mathcal{A}}\|V(\a)\|^2 =& \sup_{\a\in\mathcal{A}} V(\a)^\top V(\a)\\
    =& O(1)\cdot \sup_{\a\in\mathcal{A}} V(\a)^\top \Omega_{J,p}(\a)^{-1}\Omega_{J,p}(\a)\Omega_{J,p}(\a)^{-1}V(\a)\\
    =& O(1) \cdot \sup_{\a\in\mathcal{A}}\mathbb{E}_p\left[\{V(\a)^\top \Omega_{J,p}(\a)^{-1}\boldsymbol{\Phi}_k(\a^\top\x)\}^2\right]\,,
\end{align*}
which is the $L_2$-projection of $\mathbb{E}_p\left\{r_{k-1}(\x)|\a^\top\x\right\}$ to the space linearly spanned by $\boldsymbol{\Phi}_k(\a^\top\x)$. Therefore,
$$
\sup_{\a\in\mathcal{A}}\|V(\a)\|^2  \leq \mathbb{E}_p[\mathbb{E}_p\left\{r_{k-1}(\x)|\a^\top\x\right\}^2] = O(1)\,.
$$


Combining the results of \eqref{decomp:ftilde-fstar1} to \eqref{decomp:ftilde-fstar3}, we have $$\eqref{eq:ftilde-fdoublestar} =O_p\left(\sqrt{\xi_{n,k-1}}\zeta_0^2(J_k)+\frac{\zeta_0^2(J_k)\sqrt{J_k}}{\sqrt{n_p\wedge n_q}}\right)\,. $$
Consequently, we have \eqref{Thm3.2_1}.

\subsection{Proof of \eqref{Thm3.2_2}}\label{sec:proof_3.2_2}

\textbf{Consistency.} We first show $\hat{\a}_k \overset{p}{\rightarrow} \a_k$ as $n_p, n_q \rightarrow \infty$. By definition, $\hat{\a}_k$ (resp. $\a_k$) is the unique minimizer of $\hat{\mathcal{L}}_k(\a,\hat{\bb}(\a);\lambda)$ (resp. $H(f_{\a,k}, \a)$). From the theory of $M$-estimation \citep[Theorem~5.7]{van2000asymptotic}, if the following condition holds:
$$
\sup_{\a\in\mathcal{A}}\left|\hat{\mathcal{L}}_k(\a,\hat{\bb}(\a);\lambda) - H(f_{\a,k}, \a)\right|\overset{p}{\rightarrow} 0
$$
as $n_p, n_q \rightarrow \infty$, then we have $\hat{\a}_k \overset{p}{\rightarrow} \a_k$. Since
\begin{align*}
    \hat{\mathcal{L}}_k(\a,\hat{\bb}_k(\a),\lambda) = \frac{1}{n_q}\sum^{n_q}_{i=1}\hat{r}_{k-1}^2(\x_i^q)\hat{f}^2_{\a,k}(\a^\top\x_i^q) - \frac{2}{n_p}\sum^{n_p}_{i=1}\hat{r}_{k-1}(\x_i^p)\hat{f}_{\a,k}(\a^\top\x_{i}^p)+\lambda \hat{\bb}_k(\a)^\top\hat{\bb}_k(\a)\,,
\end{align*}
we can decompose
\begin{align}
   &\sup_{\a\in\mathcal{A}}\left|\hat{\mathcal{L}}_k(\a,\hat{\bb}(\a);\lambda) - H(f_{\a,k}, \a)\right| \notag \\
   \leq& \sup_{\a\in\mathcal{A}}\left|\frac{1}{n_q}\sum^{n_q}_{i=1}r_{k-1}^2(\x_i^q)f^2_{\a,k}(\a^\top\x_i^q) - \mathbb{E}_q[r_{k-1}^2(\x)f^2_{\a,k}(\a^\top\x)]\right|\notag\\
   &\qquad + 2\sup_{\a\in\mathcal{A}}\left|\frac{1}{n_p}\sum^{n_p}_{i=1}r_{k-1}(\x_i^p)f_{\a,k}(\a^\top\x_{i}^p)-\mathbb{E}_p[r_{k-1}(\x)f_{\a,k}(\a^\top\x)]\right|\label{dec:Lhat-H_1}\\
   &+\sup_{\a\in\mathcal{A}}\left|\frac{1}{n_q}\sum^{n_q}_{i=1}\left[\hat{r}_{k-1}^2(\x_i^q)\hat{f}^2_{\a,k}(\a^\top\x_i^q) - r_{k-1}^2(\x_i^q)f^2_{\a,k}(\a^\top\x_i^q)\right]\right|\notag\\ 
   &\qquad +2\sup_{\a\in\mathcal{A}}\left|\frac{1}{n_p}\sum^{n_p}_{i=1}\left[\hat{r}_{k-1}(\x_i^p)\hat{f}_{\a,k}(\a^\top\x_{i}^p)-r_{k-1}(\x_i^p)f_{\a,k}(\a^\top\x_i^p)\right]\right|\label{dec:Lhat-H_2}\\
   &+ \lambda \sup_{\a\in\mathcal{A}}\hat{\bb}_k(\a)^\top\hat{\bb}_k(\a)\,.\label{dec:Lhat-H_3}
\end{align}

\textbf{Rate for \eqref{dec:Lhat-H_1}.} As imposed in Assumptions~\ref{as:support} and \ref{as:smooth}, $\mathcal{A}$ is compact and $f_{\a,k}(\a^\top\x)$ is continuous in $\a$, applying the uniform law of large numbers \citep[Lemma~2.4]{newey1994large}, we have
$$
\eqref{dec:Lhat-H_1} = O_p(1/\sqrt{n_q}+1/\sqrt{n_p}) = o_p(1)\,.
$$

\textbf{Rate for \eqref{dec:Lhat-H_2}.} 
Note that
\begin{align*}
    \left|\frac{1}{n_q}\sum^{n_q}_{i=1}\left[\hat{r}_{k-1}^2(\x_i^q)\hat{f}^2_{\a,k}(\a^\top\x_i^q) - r_{k-1}^2(\x_i^q)f^2_{\a,k}(\a^\top\x_i^q)\right]\right| \leq & \left|\frac{1}{n_q}\sum^{n_q}_{i=1}\left[\{\hat{r}_{k-1}^2(\x_i^q)-{r}_{k-1}^2(\x_i^q)\}\hat{f}^2_{\a,k}(\a^\top\x_i^q) \right]\right|\\
    &+ \left|\frac{1}{n_q}\sum^{n_q}_{i=1}\left[{r}_{k-1}^2(\x_i^q)\{\hat{f}^2_{\a,k}(\a^\top\x_i^q) - f^2_{\a,k}(\a^\top\x_i^q)\}\right]\right|\,.
\end{align*}
We can apply the same decomposition to the second term of \eqref{dec:Lhat-H_2}.

For any $k\in\{1,2,\ldots, K\}$, given \eqref{rhatL2norm} that $n_q^{-1}\sum^{n_q}_{i=1}|\hat{r}_{k-1}(\x_i^q) - r_{k-1}(\x_i^q)|^2=O_p(\xi_{n,k-1})$, using \eqref{Thm3.2_1} and Cauchy-Schwarz inequality, we can conclude that
$$
\eqref{dec:Lhat-H_2} = O_p\left(J_k^{-s}\zeta_0(J_k)+\sqrt{\xi_{n,k-1}}\zeta_0^2(J_k)+\zeta_0^2(J_k) \sqrt{J_k/(n_q\wedge n_p)}\right) = o_p(1)\,.
$$



\textbf{Rate for \eqref{dec:Lhat-H_3}.} Using the rate of $\lambda$ in Assumption~\ref{as:eigen} and Lemma~\ref{lemma:bb_norms}, we have $\eqref{dec:Lhat-H_3}=o_p(1)$.


Consequently, we have $\hat{\a}_k - \a_k \overset{p}{\rightarrow} 0$.

\textbf{Convergence Rate.} Let $\hat{\ell}_k(\a) = \hat{\mathcal{L}}_k(\a,\hat{\bb}_k(\a);\lambda)$. Since $\hat{\a}_k$ is the unique global minimizer of the context function $\hat{\ell}_k(\a)$, by the first order condition, we have $\partial_{\a}\hat{\ell}_{k}(\hat{\a}_k)= 0$. 
By applying the mean value theorem, we obtain
\begin{align*}
0= \partial_{\a}\hat{\ell}_{k}(\hat{\a}_k) =   	\partial_{\a}\hat{\ell}_{k}(\a_k) +  	\partial_{\a}^2\hat{\ell}_{k}(\bar{\a}_k) (\hat{\a}_k - \a_{k}),
\end{align*}
where $\bar{\a}$ lies between $\hat{\a}_k$ and $\a_k$. Hence,
\begin{align}\label{eq:hatbeta-beta0}
\hat{\a}_k - \a_k = -	\partial_{\a}^2\hat{\ell}_{k}(\bar{\a}_k)^{-1} \cdot \partial_{\a}\hat{\ell}_{k}(\a_{k}) .
\end{align}

{Equation~\eqref{eq:hatbeta-beta0} implies that the convergence rate of 	$\hat{\a}_k - \a_{k}$ is determined by the Hessian matrix $\partial_{\a}^2\hat{\ell}_{k}(\bar{\a}_k)$ and the score function $\partial_{\a}\hat{\ell}_{k}(\a_{k})$.  We derive the rate of these terms one by one. }
Recalling the definition of $\hat{\ell}_{k}(\a)$ at the beginning of the subsection and applying the chain rule, we obtain
\begin{align}
\partial_{\a}\hat{\ell}_{k}(\a)= &	\partial_{1}\hat{\mathcal{L}}_{k}(\a, \hat{ \bb}_{k}(\a);\lambda) + 	\partial_{2}\hat{\mathcal{L}}_{k}(\a, \hat{ \bb}_{k}( \a ); \lambda) \partial_{\a} \hat{ \bb}_{k}( \a )\notag\\
=& 	\partial_{1}\hat{\mathcal{L}}_{k}(\a, \hat{\bb}_{k}( \a ); \lambda) 		 \label{eq:Gpartial-beta}\,,
\end{align}
where for any bivariate function $g(x,y)$, $\partial_1 g(x,y)$ and $\partial_2 g(x,y)$ denotes the partial derivative of $g$ w.r.t. to the first and the second arguments, $x$ and $y$, respectively. The second equality follows from the fact that $\hat{\bb}_k(\a)$ is the unique global minimizer of the convex function $\hat{\mathcal{L}}_{k}(\a, \hat{\bb}_{k}(\a);\lambda)$ for any $\a\in\mathcal{A}$, so  $\partial_{2} \hat{\mathcal{L}}_{k}(\a, \hat{\bb}_{k}(\a);\lambda) = 0$ for any $\a \in \mathcal{A}$. Then, we have 
\begin{align}
\partial_{\a}^2\hat{\ell}_{k}(\a) = 	\partial_{1}^2\hat{\mathcal{L}}_{k}(\a, \hat{ \bb}_{k}( \a ); \lambda) + 	\partial_{2}\partial_{1}\hat{\mathcal{L}}_{k}(\a, \hat{ \bb}_{k}( \a ); \lambda) \cdot\partial_{\a} \hat{ \bb}_{k}( \a ) \,.
\end{align}

\noindent\textbf{Asymptotics of the Hessian matrix $\partial_{\a}^2\hat{\ell}_{k}(\bar{\a}_k)$.}
Taking derivative on both sides of \\$	\partial_{2} \hat{\mathcal{L}}_{k}( \a,\hat{\bb}_{k}(\a);\lambda ) = 0$ w.r.t. $\a$ yields
\begin{align*}	\partial_{1}\partial_{2} \hat{\mathcal{L}}_{k}( \a,\hat{\bb}_{k}(\a); \lambda ) + \partial_{2}^2 \hat{\mathcal{L}}_{k}( \a,\hat{\bb}_{k}(\a); \lambda ) \partial_{\a}\hat{\bb}_{k}(\a)= 0.
\end{align*}
Since $	\partial_{2}^2 \hat{\mathcal{L}}_{k}( \a,\hat{\bb}_{k}(\a); \lambda ) =  \hat{\Sigma}_{J_k,\lambda}(\a)$, substituting this expression into the above equation, we obtain
\begin{align}\label{eq:partial-lambdabeta}	\partial_{\a}\hat{\bb}_{k}(\a) = -\hat{\Sigma}_{J_k,\lambda}(\a)^{-1}	\partial_{1}\partial_{2} \hat{\mathcal{L}}_{k}( \a,\hat{\bb}_k(\a); \lambda )\,.
\end{align}
Thus
\begin{align*}
\partial_{\a}^2\hat{\ell}_{k}(\a) = \partial_{1}^2\hat{\mathcal{L}}_{k}(\a, \hat{ \bb}_{k}( \a ); \lambda) -	\partial_{2}\partial_{1}\hat{\mathcal{L}}_{k}(\a, \hat{ \bb}_{k}( \a );\lambda) \hat{\Sigma}_{J_k,\lambda}(\a)^{-1} 	\partial_{1}\partial_{2} \hat{\mathcal{L}}_{k}( \a,\hat{\bb}_{k}(\a);\lambda )\,.
\end{align*}

Note that 
\begin{align*}
    \partial_{1}^2\hat{\mathcal{L}}_{k}(\a, \hat{ \bb}_{k}( \a ); \lambda)=& \frac{2}{n_q}\sum^{n_q}_{i=1} \hat{r}_{k-1}^2(\x_i^q) \{\hat{f}_{\a,k}^{(1)}(\a^\top\x_i^q)\}^2 \x_i^q(\x_i^q)^\top\\
    &+\frac{2}{n_q}\sum^{n_q}_{i=1}\hat{r}_{k-1}^2(\x_i^q) \hat{f}_{\a,k}(\a^\top\x_i^q)\hat{f}_{\a,k}^{(2)}(\a^\top\x_i^q)\x_i^q(\x_i^q)^\top - \frac{2}{n_p}\sum^{n_p}_{i=1}\hat{r}_{k-1}(\x_i^p)\hat{f}_{\a,k}^{(2)}(\a^\top\x_i^p)\x_i^p(\x_i^p)^\top\,, \notag
\end{align*}
and
\begin{align*}
  &\partial_{2}\partial_{1}\hat{\mathcal{L}}_{k}(\a, \hat{ \bb}_{k}( \a );\lambda)^\top = \partial_{1}\partial_{2}\hat{\mathcal{L}}_{k}(\a, \hat{ \bb}_{k}( \a );\lambda) = \frac{2}{n_q}\sum^{n_q}_{i=1} \hat{r}_{k-1}^2(\x_i^q) \hat{f}_{\a,k}^{(1)}(\a^\top\x_i^q)\boldsymbol{\Phi}_k(\a^\top\x_i^q)(\x_i^q)^\top\\
  &+\frac{2}{n_q}\sum^{n_q}_{i=1}\hat{r}_{k-1}^2(\x_i^q) \hat{f}_{\a,k}(\a^\top\x_i^q)\boldsymbol{\Phi}_k^{(1)}(\a^\top\x_i^q)(\x_i^q)^\top - \frac{2}{n_p}\sum^{n_p}_{i=1}\hat{r}_{k-1}(\x_i^p)\boldsymbol{\Phi}_k^{(1)}(\a^\top\x_i^p)(\x_i^p)^\top\,. \notag
\end{align*}

Note that $\partial_{2}\partial_{1}\hat{\mathcal{L}}_{k}(\a, \hat{ \bb}_{k}( \a );\lambda)^\top = \partial_{1}\partial_{2}\hat{\mathcal{L}}_{k}(\a, \hat{ \bb}_{k}( \a );\lambda)$. Using calculations similar to those in the derivation for \eqref{dec:Lhat-H_2}, the results of Lemmas~\ref{lemma:supSigma_rate} and \ref{lemma:ftilde_rate}, and the rate of $\lambda$ in Assumption~\ref{as:eigen}, we have

\begin{align*}
    \bigg\|\partial_{1}^2\hat{\mathcal{L}}_{k}(\a, \hat{ \bb}_{k}( \a ); \lambda)-& \bigg(2\mathbb{E}_q\left[ {r}_{k-1}^2(\x) \{{f}_{\a,k}^{* (1)}(\a^\top\x)\}^2 \x(\x)^\top\right]\\
    &\quad +2\mathbb{E}_q\left[{r}_{k-1}^2(\x) {f}_{\a,k}^*(\a^\top\x){f}_{\a,k}^{* (2)}(\a^\top\x)\x_(\x)^\top\right] - 2\mathbb{E}_p\left[{r}_{k-1}(\x){f}_{\a,k}^{* (2)}(\a^\top\x)\x(\x)^\top\right]\bigg)\bigg\|\\
    =& o_p(1)\,, \notag
\end{align*}
\begin{align}\label{eq:partial21Lhat}
  \bigg\|\partial_{1}\partial_{2}\hat{\mathcal{L}}_{k}(\a, \hat{ \bb}_{k}( \a );\lambda) -&\bigg( 2\mathbb{E}_q\left[{r}_{k-1}^2(\x) {f}_{\a,k}^{* (1)}(\a^\top\x)\boldsymbol{\Phi}_k(\a^\top\x)(\x)^\top\right]\\
  &\quad +2\mathbb{E}_q\left[{r}_{k-1}^2(\x) {f}_{\a,k}^*(\a^\top\x)\boldsymbol{\Phi}_k^{(1)}(\a^\top\x)(\x)^\top\right] - 2\mathbb{E}_p\left[{r}_{k-1}(\x)\boldsymbol{\Phi}_k^{(1)}(\a^\top\x)(\x)^\top\right]\bigg)\bigg\|\notag\\
  =&o_p(1)\,,\notag
\end{align}
and
$$
\|\hat{\Sigma}_{J_k,\lambda}(\a) - \Sigma_{J_k}(\a)\| =o_p(1)\,.
$$
Consequently, we have, 
$$
\partial^2_{\a}\hat{\ell}_k({\bar{\a}}) = \partial^2_{\a}H^*(\bb^*_k(\bar{\a}),\bar{\a})+o_p(1) = \partial^2_{\a}H^*(\bb^*_k({\a}_k),{\a}_k)+o_p(1)\,,
$$
where the last equality follows from the fact that $\bar{\a}$ lies between $\hat{\a}_k$ and $\a_k$, and $\hat{\a}_k - \a_k\overset{p}{\to} 0$ and the continuous mapping theory. 

Using arguments similar to the proof of the consistency of $\hat{\a}_k$ to $\a_k$, we have $\a^*_k - \a_k \rightarrow 0$. Thus,
$$
\partial^2_{\a}\hat{\ell}_k({\bar{\a}}) =  \partial^2_{\a}H^*(\bb^*_k({\a}^*_k),{\a}^*_k)+o_p(1)\,.
$$

Thus, under Assumption~\ref{as:Hessian}, $\partial^2_{\a}\hat{\ell}_k({\bar{\a}})$ is asymptotically invertible and $\|\partial^2_{\a}\hat{\ell}_k({\bar{\a}})^{-1}\|=O_p(1)$.

\textbf{Asymptotics of the score function $\partial_{\a}\hat{\ell}_k(\a_k)$.} Note that
\begin{align*}
   \partial_{\a}\hat{\ell}_k(\a_k) =& \partial_1\hat{\mathcal{L}}_k(\a_k,\hat{\bb}_k(\a_k);\lambda) \\
   =& \frac{2}{n_q}\sum^{n_q}_{i=1} \hat{r}_{k-1}^2(\x_i^q)\hat{f}_{\a_k,k}(\a_k^\top\x_i^q) \hat{f}_{\a_k,k}^{(1)}(\a_k^\top\x_i^q)(\x_i^q)^\top  - \frac{2}{n_p}\sum^{n_p}_{i=1}\hat{r}_{k-1}(\x_i^p)\hat{f}_{\a_k,k}^{(1)}(\a_k^\top\x_i^p)(\x_i^p)^\top\,.
\end{align*}
Using \eqref{rhatL2norm}, Lemmas~\ref{lemma:fstar_rate} and \ref{lemma:ftilde_rate}, and the arguments similar to those in the proof of Lemma~\ref{lemma:betatilde_rate}, we have
\begin{align}
    \partial_{\a}\hat{\ell}_k(\a_k) =&  2\mathbb{E}_q\left[ {r}_{k-1}^2(\x)f_{\a_k,k}(\a_k^\top\x) {f}_{\a_k,k}^{ (1)}(\a_k^\top\x)(\x)^\top\right]  - 2\mathbb{E}_p\left[{r}_{k-1}(\x){f}_{\a_k,k}^{(1)}(\a_k^\top\x)(\x)^\top\right]\label{Asymp_ScoreFun}\\
    &+ O_p\left(\{J_k^{-(s-1)}+\sqrt{\xi_{n,k-1}}\}\cdot \sqrt{\tilde{\zeta}_1(J_k)}+\sqrt{\tilde{\zeta}_1(J_k)\cdot J_k/(n_p\wedge n_q)}\right)\,.\notag
\end{align}

By the definition in \eqref{def:f_ak}, we have, for any $\a\in\mathcal{A}$,
\begin{equation}\label{df_H}
2\mathbb{E}_q[r_{k-1}^2(\x) f_{\a}(\a^\top\x)\varphi(\a^\top\x)] -2\mathbb{E}_p[r_{k-1}(\x) \varphi(\a^\top\x)] =0  
\end{equation}
for all integrable $\varphi: \mathbb{R} \mapsto \mathbb{R}$.
Moreover, by the definition in \eqref{def:ak} and chain rule, we have
\begin{equation}\label{da_H}
\partial_1 H(f_{\a},\a_k)\cdot \partial_{\a}f_{\a}\Big|_{\a=\a_k} + \partial_2 H(f_{\a_k},\a)\Big|_{\a=\a_k} = 0\,.
\end{equation}
Note that, for any $\a\in\mathcal{A}$,
$$
\partial_1 H(f_{\a},\a_k)\cdot \partial_{\a}f_{\a} = 2\mathbb{E}_q[r_{k-1}^2(\x) f_{\a}(\a_k^\top\x)\partial_{\a}f_{\a}(\a_k^\top\x)] -2\mathbb{E}_p[r_{k-1}(\x) \partial_{\a}f_{\a}(\a_k^\top\x)]\,.
$$
Using \eqref{df_H}, we have 
$$
\partial_1 H(f_{\a},\a_k)\cdot \partial_{\a}f_{\a}\Big|_{\a=\a_k}=0\,.
$$
Then \eqref{da_H} implies that
$$
\partial_{2}H(f_{\a_k},\a)\Big|_{\a=\a_k} = 2\mathbb{E}_q\left[ {r}_{k-1}^2(\x)f_{\a_k,k}(\a_k^\top\x) {f}_{\a_k,k}^{ (1)}(\a_k^\top\x)(\x)^\top\right]  - 2\mathbb{E}_p\left[{r}_{k-1}(\x){f}_{\a_k,k}^{(1)}(\a_k^\top\x)(\x)^\top\right]=0\,,
$$
which, combined with \eqref{Asymp_ScoreFun}, implies that the score function
$$
\partial_{\a}\hat{\ell}_k(\a_k)=O_p\left(\{J_k^{-(s-1)}+\sqrt{\xi_{n,k-1}}\}\cdot \sqrt{\tilde{\zeta}_1(J_k)}+\sqrt{\tilde{\zeta}_1(J_k)\cdot J_k/(n_p\wedge n_q)}\right)\,. 
$$

Then we have
$$
\|\hat{\a}_k - \a_k\| = O_p\left(\{J_k^{-(s-1)}+\sqrt{\xi_{n,k-1}}\}\cdot \sqrt{\tilde{\zeta}_1(J_k)}+\sqrt{\tilde{\zeta}_1(J_k)\cdot J_k/(n_p\wedge n_q)}\right)\,.
$$

\subsection{Technical Lemmas}

\begin{lemma}\label{lemma:entropy}\ 
For any $k\in\{1,2,\ldots,K\}$, $	\log N_{[]} (\epsilon , \mathcal{H}_{k}, \|\cdot\|_{\infty} ) = O\left[J_k \log \left\{{\epsilon}^{-1}/{\zeta_{0}(J_k)} \right\} \right] $.
\end{lemma}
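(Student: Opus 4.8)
The plan is to exploit that $\mathcal{H}_k$ is a finite-dimensional parametric family indexed by $(b,\a)\in\mathbb{S}_{J_k}\times\mathcal{A}$, so that a uniform sup-norm Lipschitz bound in the index reduces bracketing numbers to covering numbers of the index set. First I would bound $\sup_{\x\in\mathcal{X}}|f_{b,\a}(\x)-f_{b',\a'}(\x)|$ for $f_{b,\a}(\x):=b^\top\boldsymbol{\Phi}_k(\a^\top\x)$. Splitting into a $b$-increment and an $\a$-increment, Cauchy--Schwarz and Assumption~\ref{as:eigen}(c) give $\sup_\x|(b-b')^\top\boldsymbol{\Phi}_k(\a^\top\x)|\le\zeta_0(J_k)\|b-b'\|$. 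For the $\a$-increment I would apply the fundamental theorem of calculus to $t\mapsto(b')^\top\boldsymbol{\Phi}_k(w_t)$, where $w_t=(\a'+t(\a-\a'))^\top\x$: its $t$-derivative equals $\{(b')^\top\boldsymbol{\Phi}_k^{(1)}(w_t)\}\,(\a-\a')^\top\x$, bounded in absolute value by $\zeta_1(J_k)\,C_{\mathcal{X}}\|\a-\a'\|$ using $\|b'\|=1$, $\|\boldsymbol{\Phi}_k^{(1)}\|\le\zeta_1(J_k)$ (Assumption~\ref{as:eigen}(c)), and $C_{\mathcal{X}}:=\sup_{\x\in\mathcal{X}}\|\x\|<\infty$ (compactness, Assumption~\ref{as:support}). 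Integrating over $t\in[0,1]$ yields
\[
\sup_{\x\in\mathcal{X}}|f_{b,\a}(\x)-f_{b',\a'}(\x)|\le\zeta_0(J_k)\|b-b'\|+C_{\mathcal{X}}\zeta_1(J_k)\|\a-\a'\|.
\]

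Next I would cover the index set. The unit sphere $\mathbb{S}_{J_k}\subset\R^{J_k}$ admits a $\delta_1$-net of cardinality at most $(3/\delta_1)^{J_k}$ by the standard volumetric bound, while the compact set $\mathcal{A}\subset\mathbb{S}_d^{+}\subset\R^d$ admits a $\delta_2$-net of cardinality at most $(C/\delta_2)^d$ with $d$ fixed. For each pair of net points $(b_i,\a_j)$ I form the bracket $[f_{b_i,\a_j}-r,\,f_{b_i,\a_j}+r]$ with $r:=\zeta_0(J_k)\delta_1+C_{\mathcal{X}}\zeta_1(J_k)\delta_2$; by the Lipschitz bound every $f_{b,\a}$ lies in one such bracket, whose $\|\cdot\|_\infty$-width is $2r$. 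Choosing $\delta_1=\epsilon/\{4\zeta_0(J_k)\}$ and $\delta_2=\epsilon/\{4C_{\mathcal{X}}\zeta_1(J_k)\}$ makes $2r=\epsilon$, whence
\[
N_{[]}(\epsilon,\mathcal{H}_k,\|\cdot\|_\infty)\le\Big(\tfrac{12\,\zeta_0(J_k)}{\epsilon}\Big)^{J_k}\Big(\tfrac{4C\,C_{\mathcal{X}}\,\zeta_1(J_k)}{\epsilon}\Big)^{d}.
\]

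Taking logarithms, the sphere factor contributes $J_k\log\{12\zeta_0(J_k)/\epsilon\}$ and the $\a$-factor only $d\log\{4C\,C_{\mathcal{X}}\zeta_1(J_k)/\epsilon\}$. Since $d$ is fixed and $\zeta_1(J_k)$ grows at most polynomially in $J_k$, the second term is dominated by the first, giving $\log N_{[]}(\epsilon,\mathcal{H}_k,\|\cdot\|_\infty)=O[J_k\log\{\zeta_0(J_k)/\epsilon\}]$, which is of the asserted order and is exactly the form invoked in the integral bound following \eqref{eq:variance-supw-1-1} (there evaluated at radius $\epsilon\|H_k\|_p$).

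The main obstacle is securing the two Lipschitz constants uniformly over $\a\in\mathcal{A}$ and $\x\in\mathcal{X}$ while keeping the dimension $J_k$ confined to the $b$-direction: only the sphere covering may carry a $J_k$-th power, whereas the $\a$-covering must remain a fixed-dimensional ($d$) contribution lest it spoil the rate. This rests on the envelope bounds $\|\boldsymbol{\Phi}_k\|\le\zeta_0(J_k)$ and $\|\boldsymbol{\Phi}_k^{(1)}\|\le\zeta_1(J_k)$ together with the joint compactness of $\mathcal{X}$ and $\mathcal{A}$; the integral form of the mean value theorem is what lets me pass the $\a$-increment through a single uniform $\zeta_1(J_k)$ rather than an uncontrolled coordinatewise supremum of $|\phi_j^{(1)}|$. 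The remaining net construction is routine.
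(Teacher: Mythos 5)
Your proof is correct and takes essentially the same route as the paper's: a sup-norm Lipschitz bound in the index $(b,\boldsymbol{a})$ --- with $\zeta_0(J_k)$ controlling the $b$-increment via Cauchy--Schwarz and $\zeta_1(J_k)\sup_{\boldsymbol{x}\in\mathcal{X}}\|\boldsymbol{x}\|$ controlling the $\boldsymbol{a}$-increment via the mean value theorem --- followed by covering $\mathbb{S}_{J_k}$ and $\mathcal{A}$ separately, multiplying the covering numbers, and noting the fixed-dimensional $\mathcal{A}$-contribution is dominated. Your only departures are cosmetic: explicit volumetric net bounds in place of the paper's generic compactness bounds, and slightly more careful bracket-width bookkeeping (the paper's brackets $[f-\epsilon, f+\epsilon]$ actually have width $2\epsilon$, a constant it silently absorbs).
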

\begin{proof}
 For a fixed $\epsilon > 0$, we  choose   $\epsilon/\{ 2\zeta_{0}(J_k)\}$-balls  centering at $b_{1}, \ldots, b_{M_1} \in \mathbb{S}_{J_k}$ such that $\mathbb{S}_{J_k}$ can be covered, and  $\epsilon/\{2 \sup_{\boldsymbol{x}\in \mathcal{X}}\|\boldsymbol{x}\|\zeta_{1}(J_k)\}$-balls centering at $\a_{1}, \ldots,\a_{M_2}$ such that  $\mathcal{A}$ can be covered. Using the triangle inequality and the mean value theorem, we deduce that, for any $(b,\a)\in \mathbb{S}_{J_k}\times\mathcal{A}$, there exist $j\in\{1,\cdots,M_1\}$ and $\ell\in\{1,\cdots,M_2\}$, such that
\begin{align*}
&  \left| 	b^{\top} \boldsymbol{\Phi}_k(\a^{\top}\boldsymbol{x}) - 	b_{j}^{\top} \boldsymbol{\Phi}_k(\a_\ell^{\top}\boldsymbol{x}) \right| \\ \leq& \left\| b - b_{j} \right\|\cdot\left\| \boldsymbol{\Phi}_k(\a^{\top}\boldsymbol{x})\right\| +\zeta_{1}(J_k) \cdot \| \boldsymbol{x}\| \cdot\| \a- \a_{\ell} \| \\
\leq &  \left\| b - b_{j} \right\|\cdot \zeta_{0}(J_k) + \zeta_{1}(J_k) \cdot \| \boldsymbol{x}\| \cdot\| \a- \a_{\ell} \| \leq \epsilon.
\end{align*}
Hence, $ \{ [ b_{j}^{\top} \boldsymbol{\Phi}_k(\a_\ell^{\top}\boldsymbol{x}) - 	\epsilon ,  b_{j}^{\top} \boldsymbol{\Phi}_k(\a_\ell^{\top}\boldsymbol{x}) + 	\epsilon]\}_{j,\ell}$ form a set of $\epsilon$-brackets that cover the space $\mathcal{H}_{k}$. Because $j$ ranges from 1 to $M_1$, and $\ell$ ranges from 1 to $M_2$, we have a total of $M_1 \times M_2$ brackets. Therefore, 	{\small\begin{align*}
N_{[]} (\epsilon , \mathcal{H}_{k}, \|\cdot\|_{\infty} )\leq & {M_1\times M_2} \\
    = &  N (\epsilon/ \{2\zeta_{0}(J_k) \} , \mathbb{S}_{J_k}, \|\cdot\| ) \times  N (\epsilon /\{2 \sup_{\boldsymbol{x}\in \mathcal{X}}\|\boldsymbol{x}\|\zeta_{1}(J_k)\}, \mathcal{A}, \|\cdot\| ).
\end{align*}}
Since  $\mathbb{S}_{J_k}$ is a compact set in $\mathbb{R}^{J_k}$ and $\mathcal{A}$ is a compact set in $\mathbb{R}^{d}$,  $N (\epsilon  , \mathbb{S}_{J_k}, \|\cdot\| ) \leq C\epsilon^{-C J_k}$ and $N (\epsilon  , \mathcal{A}, \|\cdot\| ) \leq C\epsilon^{-Cd} $.  Hence
\begin{align*}
\log N_{[]} (\epsilon , \mathcal{H}_{k}, \|\cdot\|_{\infty} ) 
=& O\left[-d \log \{ \epsilon/\zeta_{1}(J_k)\}-  J_k \log \left\{{\epsilon}/{\zeta_{0}(J_k)} \right\} \right] \\
   =& O\left[-  J_k \log \left\{{\epsilon}/{\zeta_{0}(J_k)} \right\} \right].
\end{align*}
\end{proof}

\begin{lemma}\label{lemma:supSigma_rate} For any $k\in\{1,2,\ldots,K\}$, suppose \eqref{rhatL2norm} holds. Under Assumption~\ref{as:eigen}, for $j=0,1,2$, we have
$$
\sup_{\a\in\mathcal{A}}\|\hat{\Omega}^{(j)}_{J_k}(\a) - \Omega^{(j)}_{J_k,p}(\a)\| = O_p(\zeta_j(J_k)\sqrt{J_k/n})\,.
$$
Similarly, we have
$$
\sup_{\a\in\mathcal{A}}\|\hat{\Sigma}_{J_k}(\a) - \Sigma_{J_k}(\a)\| = O_p\left({\sqrt{\xi_{n,k-1}}\zeta_0(J_k)}+\frac{\zeta_0(J_k)\sqrt{J_k}}{\sqrt{n_p\wedge n_q}}\right)\,. 
$$

Furthermore, we have
\begin{align*}
\eta_{\min}\{\hat{\Omega}^{(j)}_{J_k}(\a)\} \geq& \eta_{\min}\{\Omega^{(j)}_{J_k}(\a)\} - O_p(\zeta_j(J_k)\sqrt{J_k/n})\,,
\\
\eta_{\min}\{\hat{\Sigma}_{J_k}(\a)\} \geq& \eta_{\min}\{\Sigma_{J_k}(\a)\} - O_p\left(\sqrt{\xi_{n,k-1}}\zeta_0(J_k)+\frac{\zeta_0(J_k)\sqrt{J_k}}{\sqrt{n_p\wedge n_q}}\right)\,,\\
\eta_{\min}\{\hat{\Sigma}_{J_k,\lambda}(\a)\} \geq& \eta_{\min}\{\Sigma_{J_k}(\a)\} - O_p\left(\sqrt{\xi_{n,k-1}}\zeta_0(J_k)+\frac{\zeta_0(J_k)\sqrt{J_k}}{\sqrt{n_p\wedge n_q}}\right)\,,
\end{align*}
and
\begin{align*}
   \eta_{\max} \{\hat{\Omega}^{(j)}_{J_k}(\a)\}\leq& \eta_{\max} \{{\Omega}^{(j)}_{J_k}(\a)\}+O_p(\zeta_j(J_k)\sqrt{J_k/n})\,, \\
   \eta_{\max} \{\hat{\Sigma}_{J_k}(\a)\}\leq& \eta_{\max} \{{\Sigma}_{J_k}(\a)\}+O_p\left(\sqrt{\xi_{n,k-1}}\zeta_0(J_k)+\frac{\zeta_0(J_k)\sqrt{J_k}}{\sqrt{n_p\wedge n_q}}\right)\,, \\
   \eta_{\max} \{\hat{\Sigma}_{J_k,\lambda}(\a)\}\leq& \eta_{\max} \{{\Sigma}_{J_k}(\a)\}+O_p\left(\sqrt{\xi_{n,k-1}}\zeta_0(J_k)+\frac{\zeta_0(J_k)\sqrt{J_k}}{\sqrt{n_p\wedge n_q}}\right)\,,
\end{align*}
uniformly in $\a\in\mathcal{A}$.
\end{lemma}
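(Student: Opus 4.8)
The plan is to express each operator-norm deviation as the supremum of a recentered empirical process over quadratic forms, bound it via the maximal inequality using the bracketing-entropy estimate of Lemma~\ref{lemma:entropy}, and then pass to eigenvalues through Weyl's inequality. Throughout, write $(P_n-P)g := n^{-1}\sum_i g(\x_i) - \E[g(\x)]$ for the recentered empirical average (with sample and expectation under $p$ or $q$ as appropriate). First I would treat $\sup_{\a}\|\hat{\Omega}^{(j)}_{J_k}(\a)-\Omega^{(j)}_{J_k}(\a)\|$ by writing the operator norm as a supremum over the unit sphere,
\begin{align*}
\sup_{\a\in\mathcal{A}}\|\hat{\Omega}^{(j)}_{J_k}(\a)-\Omega^{(j)}_{J_k}(\a)\| = \sup_{(b,\a)\in\mathbb{S}_{J_k}\times\mathcal{A}}\left|(P_n-P)\{b^\top\boldsymbol{\Phi}_k^{(j)}(\a^\top\x)\}^2\right|,
\end{align*}
and bounding the right-hand side by the maximal inequality \citep[Theorem~2.14.2]{vaart1996weak} applied to the class $\{\x\mapsto\{b^\top\boldsymbol{\Phi}^{(j)}_k(\a^\top\x)\}^2:(b,\a)\in\mathbb{S}_{J_k}\times\mathcal{A}\}$. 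Since squaring is Lipschitz with constant $2\zeta_j(J_k)$ on the range $[-\zeta_j(J_k),\zeta_j(J_k)]$ guaranteed by Assumption~\ref{as:eigen}(c), the bracketing entropy of this class is of the same order as that of the linear class in Lemma~\ref{lemma:entropy}, namely $O(J_k\log\{\cdot\})$, so the entropy integral is $O(\sqrt{J_k})$; pairing this with the $L^2$-size of the class, controlled through the variance bound $\E[\{b^\top\boldsymbol{\Phi}^{(j)}_k\}^4]\le\zeta_j(J_k)^2\,b^\top\Omega^{(j)}_{J_k}(\a)b=O(\zeta_j(J_k)^2)$ under Assumption~\ref{as:eigen}(a)--(b), delivers the stated rate $O_p(\zeta_j(J_k)\sqrt{J_k/n})$ uniformly in $\a$.

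Next, for $\sup_{\a}\|\hat{\Sigma}_{J_k}(\a)-\Sigma_{J_k}(\a)\|$ I would decompose
\begin{align*}
\hat{\Sigma}_{J_k}(\a)-\Sigma_{J_k}(\a) =&\ \underbrace{\frac{1}{n_q}\sum_{i}\{\hat{r}^2_{k-1}-r^2_{k-1}\}(\x_i^q)\,\boldsymbol{\Phi}_k(\a^\top\x_i^q)\boldsymbol{\Phi}_k(\a^\top\x_i^q)^\top}_{A(\a)}\\
&+ \underbrace{(P_n-P)\{r^2_{k-1}(\x)\boldsymbol{\Phi}_k(\a^\top\x)\boldsymbol{\Phi}_k(\a^\top\x)^\top\}}_{B(\a)}.
\end{align*}
The term $B(\a)$ is handled exactly as above, since $r_{k-1}$ is uniformly bounded under Assumption~\ref{as:smooth}(c), giving $\sup_{\a}\|B(\a)\|=O_p(\zeta_0(J_k)\sqrt{J_k/n_q})$. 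For $A(\a)$ I factor $\hat{r}^2_{k-1}-r^2_{k-1}=(\hat{r}_{k-1}-r_{k-1})(\hat{r}_{k-1}+r_{k-1})$ (both factors bounded, the estimator by construction and the truth by Assumption~\ref{as:smooth}(c)), write $\|A(\a)\|=\sup_b|n_q^{-1}\sum_i\{\hat{r}^2_{k-1}-r^2_{k-1}\}(\x_i^q)\{b^\top\boldsymbol{\Phi}_k(\a^\top\x_i^q)\}^2|$, and apply the Cauchy--Schwarz inequality over $i$ to separate $\{n_q^{-1}\sum_i|\hat{r}_{k-1}-r_{k-1}|^2\}^{1/2}=O_p(\sqrt{\xi_{n,k-1}})$ (from \eqref{rhatL2norm}) against $\{n_q^{-1}\sum_i\{b^\top\boldsymbol{\Phi}_k\}^4\}^{1/2}\le\zeta_0(J_k)\{b^\top\hat{\Omega}^{(0)}_{J_k}(\a)b\}^{1/2}$. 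The latter is $O_p(\zeta_0(J_k))$ by the bound on $\eta_{\max}(\hat{\Omega}^{(0)}_{J_k})$, so $\sup_{\a}\|A(\a)\|=O_p(\sqrt{\xi_{n,k-1}}\zeta_0(J_k))$; combining with $B$ and using $\sqrt{J_k/n_q}\le\sqrt{J_k/(n_p\wedge n_q)}$ yields the claimed rate.

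Finally, the eigenvalue inequalities follow from Weyl's inequality, $|\eta_{\min}(M)-\eta_{\min}(N)|\vee|\eta_{\max}(M)-\eta_{\max}(N)|\le\|M-N\|$, applied with the deviation bounds just obtained; for $\hat{\Sigma}_{J_k,\lambda}=\hat{\Sigma}_{J_k}+\lambda\I_{J_k}$ the added $\lambda$ shifts both extreme eigenvalues by $\lambda=O(\sqrt{J_k/n_q})$, which is absorbed into the rate under Assumption~\ref{as:eigen}(d). One must respect the order of the argument: the eigenvalue control for $\hat{\Omega}^{(0)}_{J_k}$ has to be established first, because the Cauchy--Schwarz step bounding $A(\a)$ relies on $\sup_{\a}\eta_{\max}(\hat{\Omega}^{(0)}_{J_k}(\a))=O_p(1)$, which itself follows from the $j=0$ deviation bound together with Assumptions~\ref{as:eigen}(a) and (d). I expect the main obstacle to be the uniform-in-$\a$ concentration of the first step: securing the correct power $\zeta_j(J_k)$ rather than the crude $\zeta_j(J_k)^2$ coming from the envelope alone requires the variance-refined form of the maximal inequality, and the $\sqrt{J_k}$ factor hinges on transporting the bracketing-entropy estimate of Lemma~\ref{lemma:entropy} from the linear class to the quadratic class via the Lipschitz property of squaring.
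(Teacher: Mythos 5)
Your proposal is correct in substance, and its skeleton coincides with the paper's: the same splitting of $\hat{\Sigma}_{J_k}(\a)-\Sigma_{J_k}(\a)$ into an $\hat{r}_{k-1}$-error term plus a recentered empirical Gram term, the same Cauchy--Schwarz treatment of the former using \eqref{rhatL2norm} together with $\sup_{\a\in\mathcal{A}}\eta_{\max}\{\hat{\Omega}^{(0)}_{J_k}(\a)\}=O_p(1)$ (established first, exactly in the order you insist on), and the same endgame via the Weyl-type min/max over the unit sphere plus the observation that $\hat{\Sigma}_{J_k,\lambda}=\hat{\Sigma}_{J_k}+\lambda\I_{J_k}$ with $\lambda=O(\sqrt{J_k/n_q})$. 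Incidentally, your explicit factorization $\hat{r}^2_{k-1}-r^2_{k-1}=(\hat{r}_{k-1}-r_{k-1})(\hat{r}_{k-1}+r_{k-1})$ cleanly repairs a typo in the paper's own decomposition, which writes $r_{k-1}$ where $r^2_{k-1}$ is meant. Where you genuinely diverge is the concentration step for $\hat{\Omega}^{(j)}_{J_k}-\Omega^{(j)}_{J_k}$ and your term $B(\a)$: the paper uses no empirical-process machinery here, instead bounding the (larger) Frobenius norm by a one-line second-moment computation, $\mathbb{E}\big[\|\hat{\Omega}^{(j)}_{J_k}(\a)-\Omega^{(j)}_{J_k}(\a)\|^2\big]\leq n^{-1}\zeta_j(J_k)^2\,\mathrm{tr}\{\Omega^{(j)}_{J_k}(\a)\}=O(\zeta_j(J_k)^2J_k/n)$, followed by Chebyshev. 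That route is far shorter and sidesteps everything you flag as delicate, but it controls $\sup_{\a}\mathbb{E}[\cdot]$ rather than $\mathbb{E}[\sup_{\a}\cdot]$, so strictly speaking it yields the rate pointwise in $\a$, and the uniformity asserted in the lemma is not actually derived; your bracketing-entropy route genuinely delivers the uniform claim. The prices you pay are three, and you correctly anticipate the first: (i) the plain maximal inequality with the sup-norm envelope $\zeta_j(J_k)^2$ of the squared class gives only $\zeta_j(J_k)^2\sqrt{J_k/n}$, so the variance-localized form (e.g. \citealp[Lemma 3.4.2]{vaart1996weak}) is indispensable, and it leaves residual logarithmic factors to absorb; (ii) transporting Lemma~\ref{lemma:entropy} to the derivative classes requires Lipschitz continuity of $\boldsymbol{\Phi}^{(j)}_k(\a^\top\x)$ in $\a$, i.e. a constant $\zeta_{j+1}(J_k)$, and for $j=2$ Assumption~\ref{as:eigen}(c) provides no $\zeta_3(J_k)$, so your uniform argument for $j=2$ has a small hole that the paper's pointwise computation avoids; (iii) your variance bound $\mathbb{E}[\{b^\top\boldsymbol{\Phi}^{(j)}_k\}^4]=O(\zeta_j(J_k)^2)$ rests on bounded eigenvalues of $\Omega^{(j)}_{J_k}$, which Assumption~\ref{as:eigen}(a) guarantees only for $j=0$; for $j=1,2$ part~(b) gives the bound $\tilde{\zeta}_j(J_k)$, producing an extra $\sqrt{\tilde{\zeta}_j(J_k)}$ --- though the paper's trace step $\mathrm{tr}\{\Omega^{(j)}_{J_k}\}=O(J_k)$ commits exactly the same elision, so on this point you are merely matching the paper's level of rigor rather than falling below it.
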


\begin{proof}
Under Assumption~\ref{as:eigen}, we have the eigenvalues of  $\Omega^{(j)}_{J_k}$ are bounded for $j=0,1,2$ uniformly in $\a\in\mathcal{A}$. We can then derive that
\begin{align*}
    \sup_{\a\in\mathcal{A}}\mathbb{E}\left[\|\hat{\Omega}^{(j)}_{J_k}(\a) - \Omega^{(j)}_{J_k}(\a)\|^2\right] = & \sup_{\a\in\mathcal{A}}\sum_{j=1}^{J_k}\sum_{\ell=1}^{J_k} \mathbb{E} \left[\left\{\frac{1}{n}\sum^{n}_{i=1}\phi^{(j)}_{j}(\a^\top\x_i)\phi^{(j)}_{\ell}(\a^\top\x_i)-\mathbb{E}[\phi^{(j)}_{j}(\a^\top\x)\phi^{(j)}_{\ell}(\a^\top\x)]\right\}^2\right]\\
    \leq &\sup_{\a\in\mathcal{A}}\sum_{j=1}^{J_k}\sum_{\ell=1}^{J_k} \frac{1}{n}\mathbb{E}[\{\phi^{(j)}_{j}(\a^\top\x)\}^2\{\phi^{(j)}_{\ell}(\a^\top\x)\}^2]\\
    = & \sup_{\a\in\mathcal{A}}\frac{1}{n} \mathbb{E}\left[\sum_{j=1}^{J_k}\{\phi^{(j)}_{j}(\a^\top\x)\}^2\sum_{\ell=1}^{J_k} \{\phi^{(j)}_{\ell}(\a^\top\x)\}^2\right]\\
    =& \sup_{\a\in\mathcal{A}}\frac{1}{n} \mathbb{E}\left[\boldsymbol{\Phi}^{(j)}_k(\a^\top\x)^\top\boldsymbol{\Phi}^{(j)}_k(\a^\top\x)\boldsymbol{\Phi}^{(j)}_k(\a^\top\x)^\top\boldsymbol{\Phi}^{(j)}_k(\a^\top\x)\right]\\
    \leq& \frac{\zeta_j(J_k)^2}{n}\sup_{\a\in\mathcal{A}}\text{tr}\left(\mathbb{E}\left[\boldsymbol{\Phi}^{(j)}_k(\a^\top\x)\boldsymbol{\Phi}^{(j)}_k(\a^\top\x)^\top\right]\right)=O\left(\frac{\zeta_j(J_k)^2J_k}{n}\right)\,.
\end{align*}
Thus, 
\begin{equation*}
\sup_{\a\in\mathcal{A}}\|\hat{\Omega}^{(j)}_{J_k}(\a) - \Omega^{(j)}_{J_k}(\a)\| = O_p(\zeta_j(J_k)\sqrt{J_k/n})\,.
\end{equation*}
It then follows from the definition of the maximum and minimum eigenvalues that
$$
\eta_{\min}\{\hat{\Omega}^{(j)}_{J_k}(\a)\}=\min_{\|\bb^\top\bb\|=1}\{\bb^\top \Omega^{(j)}_{J_k}(\a)\bb + \bb^\top [\hat{\Omega}^{(j)}_{J_k}(\a)-\Omega^{(j)}_{J_k}(\a)]\bb\}
\geq \eta_{\min}\{\Omega^{(j)}_{J_k}(\a)\} - O_p(\zeta_j(J_k)\sqrt{J_k/n})\,,
$$
and
$$
\eta_{\max} \{\hat{\Omega}^{(j)}_{J_k}(\a)\}\leq \eta_{\max} \{{\Omega}^{(j)}_{J_k}(\a)\}+O_p(\zeta_j(J_k)\sqrt{J_k/n})\,,
$$
uniformly in $\a\in\mathcal{A}$.

For $\hat{\Sigma}_{J_k}(\a)$, we first define 
$$
\tilde{\Sigma}_{J_k}(\a) := \frac{1}{n_q}\sum^{n_q}_{i=1}r_{k-1}(\x_i^q)\boldsymbol{\Phi}_k(\a^\top\x_i^q)\boldsymbol{\Phi}_k(\a^\top\x_i^q)^\top\,.
$$
Then, we have
\begin{align}
    \hat{\Sigma}_{J_k}(\a) - \Sigma_{J_k}(\a) =& \hat{\Sigma}_{J_k}(\a) - \tilde{\Sigma}_{J_k}(\a) + \tilde{\Sigma}_{J_k}(\a) - \Sigma_{J_k}(\a)\notag\\
    =& \frac{1}{n_q}\sum^{n_q}_{i=1}\{\hat{r}_{k-1}(\x_i^q) - r_{k-1}(\x_i^q)\}\boldsymbol{\Phi}_k(\a^\top\x_i^q)\boldsymbol{\Phi}_k(\a^\top\x_i^q)^\top\label{Sigmahat-Sigma_1}\\
    &+ \frac{1}{n_q}\sum^{n_q}_{i=1}r_{k-1}(\x_i^q)\boldsymbol{\Phi}_k(\a^\top\x_i^q)\boldsymbol{\Phi}_k(\a^\top\x_i^q)^\top - \mathbb{E}_q[r_{k-1}(\x) \boldsymbol{\Phi}_k(\a^\top\x)\boldsymbol{\Phi}_k(\a^\top\x)^\top]\,.\label{Sigmahat-Sigma_2}
\end{align}
For \eqref{Sigmahat-Sigma_1}, using \eqref{rhatL2norm} and Cauchy-Schwarz inequality, we can derive 
\begin{align*}
 \sup_{\a\in\mathcal{A}}\|\eqref{Sigmahat-Sigma_1}\|\leq & \sqrt{\frac{1}{n_q}\sum^{n_q}_{i=1}\{\hat{r}_{k-1}(\x_i^q) - r_{k-1}(\x_i^q)\}^2} \cdot \sup_{\a\in\mathcal{A}}\sqrt{\eta_{\max}(\hat{\Omega}_{J_k}(\a))}\cdot \sup_{\a\in\mathcal{A}}\sup_{\x\in\mathcal{X}}\|\boldsymbol{\Phi}_k(\a^\top\x)\|\\
 =&O_p\left(\sqrt{\xi_{n,k-1}}\cdot\zeta_0(J_k)\right) \,.  
\end{align*}

For \eqref{Sigmahat-Sigma_2}, using the same arguments for $\hat{\Omega}_{J_k}(\a)$, we have
$$
\sup_{\a\in\mathcal{A}}\|\eqref{Sigmahat-Sigma_2}\| = O_p(\zeta_0(J_k)\sqrt{J_k/n_q})\,.
$$
The result then follows.

The results for $\hat{\Sigma}_{J_k,\lambda}(\a)$ follows from the fact that $\hat{\Sigma}_{J_k,\lambda}(\a) = \hat{\Sigma}_{J_k}(\a)+\lambda \boldsymbol{I}_{J_k}$ and the rate of $\lambda$ in Assumption~\ref{as:eigen}.

\end{proof}

\begin{lemma}\label{lemma:betastar_rate}
    For any $k\in\{1,2,\ldots,K\}$, suppose Assumptions~\ref{as:smooth} -- \ref{as:Hessian} and \eqref{rhatL2norm} hold. Then, 
    \begin{equation*}
    \sup_{\a\in\mathcal{A}}\|\bb^*_k(\a) - \bb_{k,0}(\a)\| = O(J_k^{-s})\,.
    \end{equation*}
\end{lemma}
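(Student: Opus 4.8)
The plan is to exploit the fact that $\bb^*_k(\a)^\top\boldsymbol{\Phi}_k(\cdot)$ is the best weighted-$L^2$ approximation of the true pursuit function $f_{\a,k}$ within the sieve span, and then to convert this functional approximation bound into a bound on the coefficient vector itself through a uniform minimum-eigenvalue control of $\Sigma_{J_k}(\a)$. This route is essential: bounding $\|\Sigma_{J_k}(\a)\{\bb^*_k(\a)-\bb_{k,0}(\a)\}\|$ crudely by the sup-norm of the approximation error times $\sup_z\|\boldsymbol{\Phi}_k(z)\|$ would only give $O(J_k^{-s}\zeta_0(J_k))$, whereas the projection structure removes the spurious $\zeta_0(J_k)$ factor.

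First I would record the relevant first-order conditions. Since $f_{\a,k}=\arg\min_f H(f,\a)$, a variational argument identical to the derivation of \eqref{df_H} yields $\mathbb{E}_q[r_{k-1}^2(\x)f_{\a,k}(\a^\top\x)\varphi(\a^\top\x)]=\mathbb{E}_p[r_{k-1}(\x)\varphi(\a^\top\x)]$ for every integrable $\varphi$, in particular for $\varphi=\bb^\top\boldsymbol{\Phi}_k$. Substituting this identity into $H^*(\bb,\a)$ and completing the square, I would rewrite
\[
H^*(\bb,\a)=\mathbb{E}_q\big[r_{k-1}^2(\x)\{\bb^\top\boldsymbol{\Phi}_k(\a^\top\x)-f_{\a,k}(\a^\top\x)\}^2\big]-\mathbb{E}_q\big[r_{k-1}^2(\x)f_{\a,k}^2(\a^\top\x)\big],
\]
whose second term is free of $\bb$. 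Thus minimizing $H^*(\cdot,\a)$ is equivalent to minimizing a weighted squared-error criterion, so $\bb^*_k(\a)$ is the weighted-$L^2$ projection coefficient of $f_{\a,k}$; by the best-approximation property it beats the candidate $\bb_{k,0}(\a)$, giving
\[
\mathbb{E}_q\big[r_{k-1}^2(\x)\{\bb^*_k(\a)^\top\boldsymbol{\Phi}_k(\a^\top\x)-f_{\a,k}(\a^\top\x)\}^2\big]\le \mathbb{E}_q\big[r_{k-1}^2(\x)\{\bb_{k,0}(\a)^\top\boldsymbol{\Phi}_k(\a^\top\x)-f_{\a,k}(\a^\top\x)\}^2\big].
\]
By Assumption~\ref{as:smooth}~(a) with $j=0$ the right-hand side is at most $(C_{0,0}J_k^{-s})^2\,\mathbb{E}_q[r_{k-1}^2(\x)]=O(J_k^{-2s})$ uniformly in $\a$, using that $r_{k-1}=\prod_{m=1}^{k-1}f_{\a_m,m}$ is bounded by Assumption~\ref{as:smooth}~(c).

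Next, writing $\Delta(\a):=\bb^*_k(\a)-\bb_{k,0}(\a)$ and decomposing $\Delta(\a)^\top\boldsymbol{\Phi}_k=\{\bb^*_k(\a)^\top\boldsymbol{\Phi}_k-f_{\a,k}\}+\{f_{\a,k}-\bb_{k,0}(\a)^\top\boldsymbol{\Phi}_k\}$, the triangle inequality in the weighted $L^2$ norm shows both summands have weighted $L^2$ norm $O(J_k^{-s})$, hence $\mathbb{E}_q[r_{k-1}^2(\x)\{\Delta(\a)^\top\boldsymbol{\Phi}_k(\a^\top\x)\}^2]=O(J_k^{-2s})$. Since this quantity equals $\Delta(\a)^\top\Sigma_{J_k}(\a)\Delta(\a)\ge \eta_{\min}(\Sigma_{J_k}(\a))\|\Delta(\a)\|^2$, it remains to bound $\eta_{\min}(\Sigma_{J_k}(\a))$ away from $0$. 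Because $r_{k-1}^2$ is bounded away from $0$ (Assumption~\ref{as:smooth}~(c)), $\Sigma_{J_k}(\a)$ is sandwiched between constant multiples of $\Omega^{(0)}_{J_k}(\a)$, whose eigenvalues are bounded away from $0$ uniformly in $\a$ and $J_k$ (Assumption~\ref{as:eigen}~(a)); this gives the uniform lower bound, and dividing through yields $\sup_{\a\in\mathcal{A}}\|\Delta(\a)\|^2=O(J_k^{-2s})$, i.e.\ the claim.

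The main obstacle is conceptual rather than computational: one must recognize that $H^*$ is, up to a $\bb$-free constant, a weighted squared-error functional, so that $\bb^*_k(\a)$ is a genuine orthogonal projection and inherits its best-approximation optimality. This is precisely what avoids the loss of a $\zeta_0(J_k)$ factor and produces the sharp $J_k^{-s}$ rate; the remaining ingredients — uniform eigenvalue control and boundedness of $r_{k-1}$ — are routine given Assumptions~\ref{as:smooth} and \ref{as:eigen}. I would finally remark that \eqref{rhatL2norm} is not actually invoked here, since every object in the statement ($\bb^*_k$, $\bb_{k,0}$, $\Sigma_{J_k}$, $f_{\a,k}$) is a population quantity built from the true $r_{k-1}$ rather than its estimate $\hat{r}_{k-1}$.
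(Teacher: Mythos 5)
Your proof is correct, and it takes a genuinely different route from the paper's. The paper localizes the minimizer via a convexity argument: for $\bb$ on a sphere of radius $d>C J_k^{-s}$ around $\bb_{k,0}(\a)$, it lower-bounds $H^*(\bb,\a)-H^*(\bb_{k,0}(\a),\a)$ by writing $H^*$ as $H$ (whose Taylor expansion about its minimizer $f_{\a,k}$ has no linear term and a quadratic term $\geq \eta_1 C^2 J_k^{-2s}$) plus a remainder $\xi(\boldsymbol{\theta})$ whose increment is controlled by the sieve approximation error, giving $\leq 2\eta_2 C C_0 J_k^{-2s}$; choosing $C\geq 2\eta_2 C_0/\eta_1$ forces the unique global minimizer $\bb_k^*(\a)$ into the ball of radius $CJ_k^{-s}$. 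You instead complete the square in $H^*$ using the first-order condition \eqref{df_H} for $f_{\a,k}$ — which the paper itself only deploys later, in the proof of \eqref{Thm3.2_2} — to recognize $\bb_k^*(\a)$ as a weighted-$L^2$ projection coefficient, then invoke the best-approximation property, the triangle inequality, and the uniform lower bound on $\eta_{\min}(\Sigma_{J_k}(\a))$ (from boundedness away from zero of $r_{k-1}$ via Assumption~\ref{as:smooth}~(c) and Assumption~\ref{as:eigen}~(a)). The two arguments rest on identical ingredients and yield the same rate; yours is shorter and makes the projection structure explicit, while the paper's convexity localization is the more robust template (it would survive if $H^*$ were convex but not exactly quadratic in $\bb$). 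Your closing observation that \eqref{rhatL2norm} is never used is also accurate — the paper's own proof does not invoke it either, since all objects in the lemma are population quantities.
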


\begin{proof}
    Note that $H^*(\bb,\a)$ is convex in $\bb$ under Assumption~\ref{as:eigen}, and has a unique global minimizer $\bb_k^*(\a)$. Given a constant $d>C \cdot J_k^{-s}$ for some constant $C>0$ (to be chosen later), for any $\bb$ satisfying $\|\bb - \bb_{k,0}(\a)\|=d$, we have
\begin{align*}
    \left(1- \frac{C J_k^{-s}}{d}\right)H^*(\bb_{k,0}(\a),\a) + \frac{C J_k^{-s}}{d} H^*(\bb,\a) \geq H^*\left(\bb_{k,0}(\a) - \frac{C J_k^{-s}}{d}\{\bb_{k,0}(\a)-\bb\},\a\right) \,.
\end{align*}
Then, we have
\begin{align*}
   \frac{C J_k^{-s}}{d} \left[H^*(\bb,\a) - H^*(\bb_{k,0}(\a),\a)\right]  \geq& H^*\left(\bb_{k,0}(\a) - \frac{C J_k^{-s}}{d}\{\bb_{k,0}(\a)-\bb\},\a\right) -  H^*(\bb_{k,0}(\a),\a)\\
   =& H\left(f_{\a,k} - \frac{C J_k^{-s}}{d}\{\bb_{k,0}(\a)-\bb\}^\top\boldsymbol{\Phi}_k,\a\right) -  H(f_{\a,k},\a)\\
   &-\xi\left(\frac{C J_k^{-s}}{d}\{\bb_{k,0}(\a)-\bb\}\right) + \xi(0)\,,
\end{align*}
where $\xi(\boldsymbol{\theta}):= H\left(f_{\a,k}-\boldsymbol{\theta}^\top\boldsymbol{\Phi}_k,\a\right) - H^*(\bb_{k,0}(\a)-\boldsymbol{\theta},\a)$ for any $\boldsymbol{\theta}\in\mathbb{R}^{J_k}$. Note that $f_{\a,k}$ is the minimizer of $H(f,\a)$. Under Assumption~\ref{as:eigen}, $H(f_{\a,k}-\boldsymbol{\theta}^\top\boldsymbol{\Phi}_k,\a)$ is globally convex in $\boldsymbol{\theta}$ and attains the minimum at $\boldsymbol{\theta}=0$. Applying the Taylor's expansion of $H(f_{\a,k}-\boldsymbol{\theta}^\top\boldsymbol{\Phi}_k,\a)$ around $\boldsymbol{\theta}=0$, we have 
\begin{align*}
  & \inf_{\{\bb:\|\bb-\bb_{k,0}(\a)\|=d\}}\left[H\left(f_{\a,k} - \frac{C J_k^{-s}}{d}\{\bb_{k,0}(\a)-\bb\}^\top\boldsymbol{\Phi}_k,\a\right) -  H(f_{\a,k},\a)\right] \\
  &\quad = \inf_{\{\bb:\|\bb-\bb_{k,0}(\a)\|=d\}}\frac{C^2 J_k^{-2s}}{d^2}\{\bb_{k,0}(\a)-\bb\}^\top \mathbb{E}[r_{k-1}^2(\x)\boldsymbol{\Phi}_k(\a^\top\x)\boldsymbol{\Phi}_k(\a^\top\x)^\top]\{\bb_{k,0}(\a)-\bb\}\\
  &\quad \geq \eta_1C^2J_k^{-2s}\,,
\end{align*}
where $\eta_1>0$ is the minimum eigenvalue of $\mathbb{E}[r_{k-1}^2(\x)\boldsymbol{\Phi}_k(\a^\top\x)\boldsymbol{\Phi}_k(\a^\top\x)^\top]$ under Assumptions~\ref{as:smooth} and \ref{as:eigen}.

By the definition of $\xi(\boldsymbol{\theta})$, we can derive that, for $\|\bb-\bb_{k,0}(\a)\|=d$,
\begin{align*}
    \left|\xi\left(\frac{C J_k^{-s}}{d}\{\bb_{k,0}(\a)-\bb\}\right) - \xi(0)\right| =&\left| \frac{2C J_k^{-s}}{d}\{\bb_{k,0}(\a)-\bb\}^\top\mathbb{E}_q\left[r_{k-1}^2(x)\boldsymbol{\Phi}_k(\a^\top\x)\{f_{\a,k}(\a^\top\x) - \bb_{k,0}(\a)^\top\boldsymbol{\Phi}_k(\a^\top\x)\}\right]\right|\\
    \leq& 2\eta_2CC_0J_k^{-2s}\,,
\end{align*}
uniformly in $\a\in\mathcal{A}$ and $\x\in\mathcal{X}$,
where $\eta_2$ is the maximum eigenvalue of $\mathbb{E}[r_{k-1}^2(\x)\boldsymbol{\Phi}_k(\a^\top\x)\boldsymbol{\Phi}_k(\a^\top\x)^\top]$ under Assumptions~\ref{as:smooth} and \ref{as:eigen}. By choosing $C \geq 2\eta_2C_0/\eta_1$, we have $H^*(\bb,\a) - H^*(\bb_{k,0}(\a),\a)\geq 0$ for all $\|\bb - \bb_{k,0}(\a)\|> C\cdot J_k^{-s}$ uniformly in $\a\in\mathcal{A}$ and $\x\in\mathcal{X}$, which implies that $H^*(\bb,\a)$ has a local minimum for $\|\bb - \bb_{k,0}(\a)\|\leq C\cdot J_k^{-s}$. Since $\bb_k^*(\a)$ is the unique global minimizer of $H^*(\bb,\a)$, we have 
\begin{equation}\label{bstar-b_rate}
\sup_{\a\in\mathcal{A}}\|\bb^*_k(\a) - \bb_{k,0}(\a)\|\leq C\cdot J_k^{-s}\,.
\end{equation}
\end{proof}

\begin{lemma}\label{lemma:betatilde_rate}
    For any $k\in\{1,2,\ldots,K\}$, suppose Assumptions~\ref{as:smooth} -- \ref{as:Hessian} and \eqref{rhatL2norm} hold. For any $\a \in \mathcal{A}$, we have
    $$
    \|\hat{\bb}_k(\a) - \bb^*_k(\a)\| = O_p\left(J_k^{-(s-1)} +\sqrt{\xi_{n,k-1}} + \frac{\sqrt{J_k}}{\sqrt{n_q\wedge n_p}}\right)\,.
    $$
\end{lemma}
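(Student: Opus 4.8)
The plan is to bound $\hat{\bb}_k(\a)-\bb^*_k(\a)$ by reproducing, at the coefficient level, the three-term decomposition \eqref{decomp:ftilde-fstar1}--\eqref{decomp:ftilde-fstar3} used for the function-level error, but \emph{stripped of the leading factor} $\boldsymbol{\Phi}_k(\a^\top\x)^\top$; that missing factor is precisely what accounts for the one fewer power of $\zeta_0(J_k)$ in this lemma relative to \eqref{Thm3.2_1}. Recalling from \eqref{def:fhat_bbhat} that $\hat{\bb}_k(\a)=\hat{\Sigma}_{J_k,\lambda}(\a)^{-1}\hat{m}_p(\a)$ with $\hat{m}_p(\a):=n_p^{-1}\sum_{i=1}^{n_p}\hat{r}_{k-1}(\x_i^p)\boldsymbol{\Phi}_k(\a^\top\x_i^p)$, and that $\bb^*_k(\a)=\Sigma_{J_k}(\a)^{-1}m_p(\a)$ with $m_p(\a):=\E_p[r_{k-1}(\x)\boldsymbol{\Phi}_k(\a^\top\x)]$, I would write
\begin{align*}
\hat{\bb}_k(\a) - \bb^*_k(\a)
=\ & \hat{\Sigma}_{J_k,\lambda}(\a)^{-1}\frac{1}{n_p}\sum_{i=1}^{n_p}\{\hat{r}_{k-1}(\x_i^p) - r_{k-1}(\x_i^p)\}\boldsymbol{\Phi}_k(\a^\top\x_i^p) \\
& + \hat{\Sigma}_{J_k,\lambda}(\a)^{-1}\Big\{\frac{1}{n_p}\sum_{i=1}^{n_p} r_{k-1}(\x_i^p)\boldsymbol{\Phi}_k(\a^\top\x_i^p) - m_p(\a)\Big\} \\
& + \hat{\Sigma}_{J_k,\lambda}(\a)^{-1}\{\Sigma_{J_k}(\a) - \hat{\Sigma}_{J_k,\lambda}(\a)\}\bb^*_k(\a),
\end{align*}
where the last line uses the identity $[\hat{\Sigma}_{J_k,\lambda}^{-1}-\Sigma_{J_k}^{-1}]m_p(\a)=\hat{\Sigma}_{J_k,\lambda}^{-1}\{\Sigma_{J_k}-\hat{\Sigma}_{J_k,\lambda}\}\bb^*_k(\a)$ together with $m_p(\a)=\Sigma_{J_k}(\a)\bb^*_k(\a)$. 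Call these three lines (A), (B), (C).

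A preliminary fact used throughout is $\|\hat{\Sigma}_{J_k,\lambda}(\a)^{-1}\|=O_p(1)$: by Lemma~\ref{lemma:supSigma_rate} the smallest eigenvalue of $\hat{\Sigma}_{J_k,\lambda}(\a)$ is at least $\eta_{\min}\{\Sigma_{J_k}(\a)\}-o_p(1)$, and $\eta_{\min}\{\Sigma_{J_k}(\a)\}$ is bounded away from $0$ since $r_{k-1}$ is bounded away from $0$ (Assumption~\ref{as:smooth}(c)) and the eigenvalues of $\Omega^{(0)}_{J_k}$ are bounded below (Assumption~\ref{as:eigen}(a)). For (A) I would use the operator-norm inequality $\|n_p^{-1}\sum_i c_i\boldsymbol{\Phi}_k(\a^\top\x_i^p)\|\le\sqrt{\eta_{\max}(\hat{\Omega}^{(0)}_{J_k}(\a))}\sqrt{n_p^{-1}\sum_i c_i^2}$ with $c_i=\hat{r}_{k-1}(\x_i^p)-r_{k-1}(\x_i^p)$; since $\eta_{\max}(\hat{\Omega}^{(0)}_{J_k}(\a))=O_p(1)$ (Lemma~\ref{lemma:supSigma_rate} and Assumption~\ref{as:eigen}(a)) and $n_p^{-1}\sum_i c_i^2=O_p(\xi_{n,k-1})$ by \eqref{rhatL2norm}, term (A) is $O_p(\sqrt{\xi_{n,k-1}})$. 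Because the lemma is \emph{pointwise} in $\a$, (B) needs only a variance calculation rather than a uniform bound: $\E\|n_p^{-1}\sum_i r_{k-1}(\x_i^p)\boldsymbol{\Phi}_k(\a^\top\x_i^p)-m_p(\a)\|^2\le n_p^{-1}\E_p[r_{k-1}^2\|\boldsymbol{\Phi}_k\|^2]=O(J_k/n_p)$ via $\mathrm{tr}\,\Omega^{(0)}_{J_k}=O(J_k)$ and boundedness of $r_{k-1}$, so (B) $=O_p(\sqrt{J_k/n_p})$.

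The decisive and most delicate step is (C). Bounding it crudely as $\|\hat{\Sigma}_{J_k,\lambda}^{-1}\|\cdot\|\Sigma_{J_k}-\hat{\Sigma}_{J_k,\lambda}\|\cdot\|\bb^*_k(\a)\|$ and invoking the operator-norm rate of Lemma~\ref{lemma:supSigma_rate} would inflate the bound by a factor $\zeta_0(J_k)$ (exactly as in \eqref{eq:12_1}) and fail. Instead I would keep the matrix difference acting on the \emph{single fixed vector} $\bb^*_k(\a)$ and exploit that $\boldsymbol{\Phi}_k(\a^\top\x)^\top\bb^*_k(\a)=f^*_{\a,k}(\a^\top\x)$ is uniformly bounded (it converges to the bounded $f_{\a,k}$ at rate $J_k^{-s}\zeta_0(J_k)$ by Lemma~\ref{lemma:fstar_rate}, and $\|\bb^*_k(\a)\|=O(1)$ by Lemma~\ref{lemma:betastar_rate}). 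Then
\begin{align*}
\{\Sigma_{J_k}(\a)-\hat{\Sigma}_{J_k}(\a)\}\bb^*_k(\a)
=\ & \E_q[r_{k-1}^2(\x)f^*_{\a,k}(\a^\top\x)\boldsymbol{\Phi}_k(\a^\top\x)] - \frac{1}{n_q}\sum_{i=1}^{n_q}r_{k-1}^2(\x_i^q)f^*_{\a,k}(\a^\top\x_i^q)\boldsymbol{\Phi}_k(\a^\top\x_i^q) \\
& - \frac{1}{n_q}\sum_{i=1}^{n_q}\{\hat{r}_{k-1}^2(\x_i^q)-r_{k-1}^2(\x_i^q)\}f^*_{\a,k}(\a^\top\x_i^q)\boldsymbol{\Phi}_k(\a^\top\x_i^q),
\end{align*}
where the first difference is a centered i.i.d. average with squared-norm expectation $O(J_k/n_q)$ (again via $\mathrm{tr}\,\Omega^{(0)}_{J_k}$ and boundedness of $r_{k-1}$ and $f^*_{\a,k}$), hence $O_p(\sqrt{J_k/n_q})$, and the last sum is handled by the same operator-norm/Cauchy--Schwarz device as (A), using $|\hat{r}_{k-1}^2-r_{k-1}^2|\le C|\hat{r}_{k-1}-r_{k-1}|$ and \eqref{rhatL2norm} to get $O_p(\sqrt{\xi_{n,k-1}})$. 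Adding the ridge part $\lambda\|\bb^*_k(\a)\|=O(\sqrt{J_k/n_q})$ (from $\lambda=O(\sqrt{J_k/n_q})$ in Assumption~\ref{as:eigen}(d)) yields (C) $=O_p(\sqrt{\xi_{n,k-1}}+\sqrt{J_k/n_q})$.

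Collecting (A), (B), (C) gives $\|\hat{\bb}_k(\a)-\bb^*_k(\a)\|=O_p(\sqrt{\xi_{n,k-1}}+\sqrt{J_k/(n_p\wedge n_q)})$, which is sharpened into the stated rate; the $J_k^{-(s-1)}$ summand is a dominating, hence harmless, sieve-approximation contribution carried along because $\bb^*_k(\a)$ is compared to the target through Lemma~\ref{lemma:betastar_rate} and Assumption~\ref{as:smooth}(a), and because the same argument is reused with the first-derivative basis $\boldsymbol{\Phi}^{(1)}_k$ in the proof of \eqref{Thm3.2_2}. The hard part is squarely (C): a naive operator-norm split inflates the rate by $\zeta_0(J_k)$, so the whole gain of this lemma over the function-level bound rests on propagating the matrix perturbation through the single vector $\bb^*_k(\a)$ and replacing $\sup_z\|\boldsymbol{\Phi}_k(z)\|$ by the $O(1)$ quantity $f^*_{\a,k}=\boldsymbol{\Phi}_k^\top\bb^*_k(\a)$; verifying $\|\bb^*_k(\a)\|=O(1)$ and the uniform boundedness of $f^*_{\a,k}$ (via Lemmas~\ref{lemma:betastar_rate} and \ref{lemma:fstar_rate}) is the supporting ingredient that makes this possible.
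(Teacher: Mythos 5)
Your proposal is correct and follows essentially the same route as the paper: your terms (A) and (B) are the paper's \eqref{decomp:bbtilde-bstar1} and \eqref{decomp:bbtilde-bstar2_1}, and your term (C), once expanded using $\boldsymbol{\Phi}_k^\top\bb^*_k = f^*_{\a,k}$, reproduces the paper's \eqref{decomp:bbtilde-bstar4}, \eqref{decomp:bbtilde-bstar2_2} and \eqref{decomp:bbtilde-bstar3}, with the exact cancellation \eqref{decomp:bbtilde-bstar2_3} absorbed into your use of the identity $m_p(\a)=\Sigma_{J_k}(\a)\bb^*_k(\a)$. The key point you isolate --- letting the matrix perturbation act on the single vector $\bb^*_k(\a)$ so that only the bounded quantity $f^*_{\a,k}$ appears, rather than paying a factor $\zeta_0(J_k)$ through an operator-norm split --- is precisely the mechanism of the paper's proof.
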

\begin{proof}
Recall that
    $$
\bb^*_k(\a) = \{\Sigma_{J_k}(\a)\}^{-1}\mathbb{E}_p [r_{k-1}(\x)\boldsymbol{\Phi}_k(\a^\top\x)]\quad \text{and} \quad \hat{\bb}_k(\a) = \{\hat{\Sigma}_{J_k,\lambda}(\a)\}^{-1}\frac{1}{n_p}\sum^{n_p}_{i=1}\hat{r}_{k-1}(\x_i^p)\boldsymbol{\Phi}_k(\a^\top\x_i^p)\,.
$$

We decompose
\begin{align}
    &\left\|\hat{\bb}_k(\a)-\bb^*(\a)\right\|\\
    =& \left\|\{\hat{\Sigma}_{J_k,\lambda}(\a)\}^{-1}\frac{1}{n_p}\sum^{n_p}_{i=1}\hat{r}_{k-1}(\x_i^p)\boldsymbol{\Phi}_k(\a^\top\x_i^p) -  \bb^*_k(\a)\right\|\notag\\
    \leq& \left\|\{\hat{\Sigma}_{J_k,\lambda}(\a)\}^{-1}\frac{1}{n_p}\sum^{n_p}_{i=1}\{\hat{r}_{k-1}(\x_i^p)-r_{k-1}(\x_i^p)\}\boldsymbol{\Phi}_k(\a^\top\x_i^p)\right\|\label{decomp:bbtilde-bstar1}\\
    +&\left\|\{\hat{\Sigma}_{J_k,\lambda}(\a)\}^{-1}\frac{1}{n_q}\sum^{n_q}_{i=1}\{\hat{r}^2_{k-1}(\x_i^q)-r^2_{k-1}(\x_i^q)\}\boldsymbol{\Phi}_k(\a^\top\x_i^q)f^*_{\a}(\a^\top\x_i^q)\right\|\label{decomp:bbtilde-bstar4}\\
    +& \left\|\{\hat{\Sigma}_{J_k,\lambda}(\a)\}^{-1}\frac{1}{n_p}\sum^{n_p}_{i=1}r_{k-1}(\x_i^p)\boldsymbol{\Phi}_k(\a^\top\x_i^p) - \{\hat{\Sigma}_{J_k,\lambda}(\a)\}^{-1}\frac{1}{n_q}\sum^{n_q}_{i=1}r_{k-1}^2(\x_i^q)\boldsymbol{\Phi}_k(\a^\top\x_i^q)\boldsymbol{\Phi}_k(\a^\top\x_i^q)^\top\bb^*_k(\a)\right\|\label{decomp:bbtilde-bstar2}\\
    &+ \lambda\left\|\bb^*_k(\a)\right\|\,.\label{decomp:bbtilde-bstar3}
\end{align}

\textbf{For \eqref{decomp:bbtilde-bstar1}.} 
Defining $\hat{\boldsymbol{r}} = \{\hat{r}_{k-1}(\x_1^p),\ldots, \hat{r}_{k-1}(\x_{n_p}^p)\}\in\mathbb{R}^{n_p}$ and ${\boldsymbol{r}} = \{{r}_{k-1}(\x_1^p),\ldots, {r}_{k-1}(\x_{n_p}^p)\}\in\mathbb{R}^{n_p}$, by Lemma~\ref{lemma:supSigma_rate}, we have $\eta_{\max}(\{\hat{\Sigma}_{J_k,\lambda}(\a)\}^{-1}) = O(1)$. Thus,
\begin{align*}
    \|\eqref{decomp:bbtilde-bstar1}\|^2 =& n_p^{-2}\text{tr}\left(\{\hat{\Sigma}_{J_k,\lambda}(\a)\}^{-2}\boldsymbol{P}(\a)^\top(\hat{\boldsymbol{r}}-\boldsymbol{r})(\hat{\boldsymbol{r}}-\boldsymbol{r})^\top\boldsymbol{P}(\a) \right)\\
    \leq & O_p(n_p^{-2})\text{tr}\left(\boldsymbol{P}(\a)^\top(\hat{\boldsymbol{r}}-\boldsymbol{r})(\hat{\boldsymbol{r}}-\boldsymbol{r})^\top\boldsymbol{P}(\a) \right)\\
    = & O_p(n_p^{-1})\text{tr}\left(\boldsymbol{P}(\a)^\top(\hat{\boldsymbol{r}}-\boldsymbol{r})(\hat{\boldsymbol{r}}-\boldsymbol{r})^\top\boldsymbol{P}(\a) (\boldsymbol{P}(\a)^\top\boldsymbol{P}(\a))^{-1}\right)\\
    =&O_p(n_p^{-1})\text{tr}\left((\hat{\boldsymbol{r}}-\boldsymbol{r})(\hat{\boldsymbol{r}}-\boldsymbol{r})^\top\boldsymbol{P}(\a) (\boldsymbol{P}(\a)^\top\boldsymbol{P}(\a))^{-1}\boldsymbol{P}(\a)^\top\right)\\
    \leq& O_p(n_p^{-1}) \cdot \|\hat{\boldsymbol{r}}-\boldsymbol{r}\|^2 = O_{P}\left(J_k^{-2(s-1)}+\xi_{n,k-1}+J_k/(n_p\wedge n_q)\right)\,,
\end{align*}
where the last inequality follows from the fact that $\boldsymbol{P}(\a) (\boldsymbol{P}(\a)^\top\boldsymbol{P}(\a))^{-1}\boldsymbol{P}(\a)^\top$ is a projection matrix with maximum eigenvalue 1, and last equality follows from \eqref{rhatL2norm}. Using exactly the same arguments and the fact that $\sup_{\a,\x}|f^*_{\a}(\a^\top\x)|=O(1)$, we have $\eqref{decomp:bbtilde-bstar4} = O_p(\eqref{decomp:bbtilde-bstar1})$.


\textbf{For \eqref{decomp:bbtilde-bstar2},} we have
\begin{align}
    &\{\hat{\Sigma}_{J_k,\lambda}(\a)\}^{-1}\left\{\frac{1}{n_p}\sum^{n_p}_{i=1}r_{k-1}(\x_i^p)\boldsymbol{\Phi}_k(\a^\top\x_i^p) - \frac{1}{n_q}\sum^{n_q}_{i=1}r_{k-1}^2(\x_i^q)\boldsymbol{\Phi}_k(\a^\top\x_i^q)\boldsymbol{\Phi}_k(\a^\top\x_i^q)^\top\bb^*_k(\a)\right\}\notag\\
    &=\{\hat{\Sigma}_{J_k,\lambda}(\a)\}^{-1}\left\{\frac{1}{n_p}\sum^{n_p}_{i=1}r_{k-1}(\x_i^p)\boldsymbol{\Phi}_k(\a^\top\x_i^p) - \mathbb{E}_p \{r_{k-1}(\x)\boldsymbol{\Phi}_k(\a^\top\x)\}\right\}\label{decomp:bbtilde-bstar2_1}\\
    &+ \{\hat{\Sigma}_{J_k,\lambda}(\a)\}^{-1}\left\{\mathbb{E}_q \{r_{k-1}^2(\x)\boldsymbol{\Phi}_k(\a^\top\x)\boldsymbol{\Phi}_k(\a^\top\x)^\top\bb^*_k(\a)\} - \frac{1}{n_q}\sum^{n_q}_{i=1}r_{k-1}^2(\x_i^q)\boldsymbol{\Phi}_k(\a^\top\x_i^q)\boldsymbol{\Phi}_k(\a^\top\x_i^q)^\top\bb^*_k(\a)\right\}\label{decomp:bbtilde-bstar2_2}\\
    &+ \{\hat{\Sigma}_{J_k,\lambda}(\a)\}^{-1} \left[\mathbb{E}_p\{r_{k-1}(\x)\boldsymbol{\Phi}_k(\a^\top\x)\} - \mathbb{E}_q\{r_{k-1}^2(\x)\boldsymbol{\Phi}_k(\a^\top\x)\boldsymbol{\Phi}_k(\a^\top\x)^\top\bb^*_k(\a)\}\right]\,.\label{decomp:bbtilde-bstar2_3}
\end{align}

Note that
\begin{align*}
   &\mathbb{E}\left[\left\|\frac{1}{n_p}\sum^{n_p}_{i=1}r_{k-1}(\x_i^p)\boldsymbol{\Phi}_k(\a^\top\x_i^p) - \mathbb{E}_p\{r_{k-1}(\x)\boldsymbol{\Phi}_k(\a^\top\x)\right\|^2\right]\\
   &= \frac{1}{n_p}\mathbb{E}_p\left[\left\|r_{k-1}(\x)\boldsymbol{\Phi}_k(\a^\top\x) - \mathbb{E}_p\{r_{k-1}(\x)\boldsymbol{\Phi}_k(\a^\top\x)\right\|^2\right]\\
   &\leq \frac{1}{n_p}\mathbb{E}_p\left[\left\|r_{k-1}(\x)\boldsymbol{\Phi}_k(\a^\top\x) \right\|^2\right]\\
   &=\frac{1}{n_p}\text{tr}\left(\mathbb{E}_{p}[r_{k-1}^2(\x)\boldsymbol{\Phi}_{k}(\a^\top)\boldsymbol{\Phi}_{k}(\a^\top)^\top]\right) = O\left(\frac{J_k}{n_p}\right)\,,
\end{align*}
where the last equality follows from Assumptions~\ref{as:smooth} and \ref{as:eigen}. Thus, by Chebyshev's inequality, we have
$$
\|\eqref{decomp:bbtilde-bstar2_1}\| = O_p(\frac{\sqrt{J_k}}{\sqrt{n_p}})\,.
$$
Similarly, we have
$$
\|\eqref{decomp:bbtilde-bstar2_2}\| = O_p(\frac{\sqrt{J_k}}{\sqrt{n_q}})\,.
$$
For \eqref{decomp:bbtilde-bstar2_3}, noting that $\bb^*(\a)$ is the globally unique minimizer of $H^*(\a,\bb)$ for any $\a\in\mathcal{A}$, by the first order condition, we have 
$$
0 = \partial_1 H^*(\bb_k^*(\a), \a) = 2\left[\mathbb{E}_q\{r_{k-1}^2(\x)\boldsymbol{\Phi}_k(\a^\top\x)\boldsymbol{\Phi}_k(\a^\top\x)^\top\bb^*_k(\a)\} - \mathbb{E}_p\{r_{k-1}(\x)\boldsymbol{\Phi}_k(\a^\top\x)\}\right]\,.
$$
Thus, we have $\eqref{decomp:bbtilde-bstar2_3} = 0$.
Consequently, we have
$$
\|\eqref{decomp:bbtilde-bstar2}\| = O_p(\frac{\sqrt{J_k}}{\sqrt{n_p \wedge n_q}})\,.
$$

Finally, under Assumption~\ref{as:eigen}, we have $\eqref{decomp:bbtilde-bstar3} = O_p(\sqrt{J_k/n_q})$. Thus, we obtain the result.
\end{proof}

\begin{lemma}\label{lemma:fstar_rate}
For any $k\in\{1,2,\ldots,K\}$, suppose Assumptions~\ref{as:support} -- \ref{as:Hessian} and \eqref{rhatL2norm} hold. 
For any $\a\in\mathcal{A}$ and $j=0,1$,
\begin{align*}
\sup_{\a\in\mathcal{A}}\sup_{\x\in\mathcal{X} }\left| f^{(j)}_{\a,k}(\a^{\top}\boldsymbol{x}) - f^{* (j)}_{\a,k}(\a^\top\x)\right|&={O\left(J_k^{-(s-j)}\zeta_{j}(J_k) \right)}, \\
\mathbb{E}\left[|f^{(j)}_{\a,k}(\a^{\top}\boldsymbol{x}) - f^{* (j)}_{\a,k}(\a^\top\x)|^2\right] &= {O\left(J_k^{-2(s-j)}\tilde{\zeta}_j(J_k)\right)}, \\
\frac{1}{n}\sum_{i=1}^n|f^{(j)}_{\a,k}(\a^{\top}\boldsymbol{x}_i) - f^{* (j)}_{\a,k}(\a^\top\x_i) |^2&={O_{P}\left(J_k^{-2(s-j)}\tilde{\zeta}_j(J_k)\right)}. 
\end{align*}
\end{lemma}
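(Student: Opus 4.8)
The plan is to derive all three bounds from a single decomposition that separates the deterministic sieve approximation error of $f_{\a,k}$ from the estimation of its best-approximating coefficient. Recall that $f^*_{\a,k}(z)=\bb^*_k(\a)^\top\boldsymbol{\Phi}_k(z)$ is built from the single coefficient vector $\bb^*_k(\a)$, so its $j$th derivative is $f^{*(j)}_{\a,k}(z)=\bb^*_k(\a)^\top\boldsymbol{\Phi}^{(j)}_k(z)$. Inserting the sieve coefficient $\bb_{k,0}(\a)$ from Assumption~\ref{as:smooth}(a), I would write, for every $z=\a^\top\x$,
\[
f^{(j)}_{\a,k}(z)-f^{*(j)}_{\a,k}(z)=\underbrace{\big\{f^{(j)}_{\a,k}(z)-\bb_{k,0}(\a)^\top\boldsymbol{\Phi}^{(j)}_k(z)\big\}}_{(\mathrm{I})}+\underbrace{\big\{\bb_{k,0}(\a)-\bb^*_k(\a)\big\}^\top\boldsymbol{\Phi}^{(j)}_k(z)}_{(\mathrm{II})}.
\]
Term $(\mathrm{I})$ is the approximation error of the common sieve approximant, controlled by \eqref{sieveapprox}, while term $(\mathrm{II})$ is a linear functional of the coefficient gap $\bb_{k,0}(\a)-\bb^*_k(\a)$, whose size is supplied by Lemma~\ref{lemma:betastar_rate}.

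For the uniform (sup-norm) bound, I would bound $(\mathrm{I})$ by $\sup_{\a,z}|f^{(j)}_{\a,k}(z)-\bb_{k,0}(\a)^\top\boldsymbol{\Phi}^{(j)}_k(z)|=O(J_k^{-(s-j)})$ via \eqref{sieveapprox}, and $(\mathrm{II})$ by Cauchy--Schwarz, $|(\mathrm{II})|\le\|\bb_{k,0}(\a)-\bb^*_k(\a)\|\cdot\|\boldsymbol{\Phi}^{(j)}_k(z)\|$. Lemma~\ref{lemma:betastar_rate} gives $\sup_{\a\in\mathcal{A}}\|\bb^*_k(\a)-\bb_{k,0}(\a)\|=O(J_k^{-s})$, and Assumption~\ref{as:eigen}(c) gives $\sup_z\|\boldsymbol{\Phi}^{(j)}_k(z)\|\le\zeta_j(J_k)$, so $\sup_{\a,z}|(\mathrm{II})|=O(J_k^{-s}\zeta_j(J_k))$. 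Adding the two terms and using $\zeta_j(J_k)\ge 1$ (basis norms are bounded below) together with $J_k^{-s}\le J_k^{-(s-j)}$ to absorb both into the slower term yields the first claim $O(J_k^{-(s-j)}\zeta_j(J_k))$, uniformly in $\a\in\mathcal{A}$.

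For the two $L^2$-type bounds I would square the decomposition and use $(a+b)^2\le 2a^2+2b^2$. The contribution of $(\mathrm{I})$ is again controlled surely by its squared supremum $O(J_k^{-2(s-j)})$. The contribution of $(\mathrm{II})$ becomes a quadratic form in the coefficient gap: under the population measure $\mathbb{E}[|(\mathrm{II})|^2]=\{\bb_{k,0}(\a)-\bb^*_k(\a)\}^\top\Omega^{(j)}_{J_k}(\a)\{\bb_{k,0}(\a)-\bb^*_k(\a)\}\le\eta_{\max}(\Omega^{(j)}_{J_k}(\a))\,\|\bb_{k,0}(\a)-\bb^*_k(\a)\|^2$, which is $O(\tilde{\zeta}_j(J_k)J_k^{-2s})$ by Assumption~\ref{as:eigen}(b) and Lemma~\ref{lemma:betastar_rate} (with $\tilde{\zeta}_0(J_k)=O(1)$ by Assumption~\ref{as:eigen}(a)); absorbing $(\mathrm{I})$, using $\tilde{\zeta}_j(J_k)\ge 1$ and $J_k^{-2s}\le J_k^{-2(s-j)}$, gives $O(J_k^{-2(s-j)}\tilde{\zeta}_j(J_k))$. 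For the empirical version I would replace $\Omega^{(j)}_{J_k}(\a)$ by $\hat{\Omega}^{(j)}_{J_k}(\a)$ and invoke Lemma~\ref{lemma:supSigma_rate}, which bounds $\eta_{\max}(\hat{\Omega}^{(j)}_{J_k}(\a))\le\eta_{\max}(\Omega^{(j)}_{J_k}(\a))+O_p(\zeta_j(J_k)\sqrt{J_k/n})$; the correction is $o_p(1)$ by Assumption~\ref{as:eigen}(d) and hence dominated by $\tilde{\zeta}_j(J_k)$, giving the same rate $O_p(J_k^{-2(s-j)}\tilde{\zeta}_j(J_k))$.

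The main obstacle I anticipate is justifying the use of the single approximant $\bb_{k,0}(\a)$ simultaneously for all derivative orders in term $(\mathrm{I})$: Lemma~\ref{lemma:betastar_rate} compares $\bb^*_k(\a)$ only to the zeroth-order approximant $\bb_{k,0}(\a)$, so I must rely on the simultaneous-approximation property of the sieve, namely that the derivatives $\bb_{k,0}(\a)^\top\boldsymbol{\Phi}^{(j)}_k$ of the best $L^\infty$ approximant of $f_{\a,k}$ also approximate $f^{(j)}_{\a,k}$ at rate $J_k^{-(s-j)}$ (standard for spline and local-polynomial bases \citep{lorentz1986approximation}, and the intended reading of Assumption~\ref{as:smooth}(a)). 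The remaining work is bookkeeping: verifying $\zeta_j(J_k),\tilde{\zeta}_j(J_k)\ge 1$ so that the slower-decaying $(\mathrm{II})$ terms dominate, and checking that every bound holds uniformly in $\a\in\mathcal{A}$, which is guaranteed because each invoked estimate in Lemmas~\ref{lemma:betastar_rate} and \ref{lemma:supSigma_rate} and in Assumption~\ref{as:eigen} is already uniform over $\mathcal{A}$.
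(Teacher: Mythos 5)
Your proposal is correct in substance and shares the paper's basic skeleton---a triangle-inequality split into the sieve approximation error of $f^{(j)}_{\a,k}$ and a coefficient-gap term $\{\bb-\bb^*_k(\a)\}^\top\boldsymbol{\Phi}^{(j)}_k$, with the sup bound via $\zeta_j(J_k)$ and the $L^2$ bounds via the eigenvalue bound $\tilde{\zeta}_j(J_k)$ on $\Omega^{(j)}_{J_k}(\a)$---but it diverges from the paper at exactly the point you flag as your main obstacle. The paper does \emph{not} invoke simultaneous approximation by a single coefficient: Assumption~\ref{as:smooth}(a) is read there as supplying a \emph{separate} approximant $\bb_{k,j}(\a)$ for each derivative order, and for $j=1$ the paper bridges back to the function level via the fundamental theorem of calculus on the compact set $\mathcal{Z}$, showing that $\bb_{k,1}(\a)^\top\boldsymbol{\Phi}_k$ approximates $f_{\a,k}$ itself at the coarser rate $J_k^{-(s-1)}$, and then \emph{reruns the convexity argument of Lemma~\ref{lemma:betastar_rate}} with this coarser rate to obtain $\sup_{\a}\|\bb^*_k(\a)-\bb_{k,1}(\a)\|=O(J_k^{-(s-1)})$. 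Your route instead anchors everything to $\bb_{k,0}(\a)$ and applies Lemma~\ref{lemma:betastar_rate} once, which buys a sharper coefficient-gap bound $O(J_k^{-s})$ at the price of needing the derivative of the zeroth-order approximant to approximate $f^{(1)}_{\a,k}$ at rate $J_k^{-(s-1)}$---a property that is true for polynomial and spline sieves but is strictly stronger than what Assumption~\ref{as:smooth}(a) literally grants (the distinct subscripts $\bb_{k,j}$, and the paper's FTC detour, indicate the authors deliberately avoided assuming it). Since you name this reliance explicitly and the paper's own FTC device is an available repair, I would not call this a fatal gap, but as written your proof proves the lemma under a mildly stronger reading of the assumption than the paper uses.

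Your treatment of the third (empirical) bound is also a genuinely different and valid route: you pass to the empirical Gram matrix $\hat{\Omega}^{(j)}_{J_k}(\a)$ and control its maximal eigenvalue by Lemma~\ref{lemma:supSigma_rate}, absorbing the $O_p(\zeta_j(J_k)\sqrt{J_k/n})$ correction via Assumption~\ref{as:eigen}(d). The paper's argument is more elementary: the empirical average is a nonnegative random variable whose expectation is the already-established population bound, so Markov's inequality delivers $O_P\big(J_k^{-2(s-j)}\tilde{\zeta}_j(J_k)\big)$ directly, with no appeal to Lemma~\ref{lemma:supSigma_rate} at all. Both give the claimed rate; the paper's route needs fewer moving parts, while yours would additionally yield uniformity of the empirical bound over $\a\in\mathcal{A}$, which is more than the lemma's pointwise-in-$\a$ statement requires. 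Your bookkeeping remarks ($\zeta_j(J_k),\tilde{\zeta}_j(J_k)\gtrsim 1$ so the slower term dominates) are needed and correct; the paper performs the same absorption implicitly.
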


\begin{proof}
Note that by \citet[Theorem 8]{lorentz1986approximation} and Assumption~\ref{as:smooth}, for $j<s_1$, there exists a $\bb_{k,j}(\a)\in\mathbb{R}^{J_k}$ such that
\begin{equation}\label{sieveapprox}
\sup_{\a\in\mathcal{A}}\sup_{z\in\mathcal{Z}} \left|f^{(j)}_{\a,k}(z) - \{\bb_{k,j}(\a)\}^\top\boldsymbol{\Phi}^{(j)}_{k}(z)\right| < C_{0,j} J_k^{-(s-j)}\,,
\end{equation}
for some constant $C_{0,j}>0$.
Then, we have
\begin{align}
	&\sup_{\a\in\mathcal{A}} \sup_{ z \in \mathcal{Z}} \left| f_{\a,k}^{\ast (j)}(z) - f^{(j)}_{\a,k} (z) \right| \notag\\
= &	\sup_{\a \in \mathcal{A}} \sup_{ z \in \mathcal{Z}} \left|   \bb_k^*(a) ^\top \boldsymbol{\Phi}^{(j)}_{k}(z)  - f^{(j)}_{\a} (z) \right|  \notag\\
\leq &\sup_{\a \in \mathcal{A}} \sup_{ z \in \mathcal{Z}} \left|( \bb_{k}^{\ast} (\a)- \bb_{k,j}(\a)) ^{\top}  \boldsymbol{\Phi}^{(j)}_{k}(z)  \right| \notag\\
&\qquad\qquad+ {\sup_{\a \in \mathcal{A}}\sup_{ z \in \mathcal{Z}} \left| \bb_{k,j} (\a) ^{\top}  \boldsymbol{\Phi}^{(j)}_{k}(z)    - f^{(j)}_{\a,k} (z)  \right| }\notag \\
\leq &\sup_{\a \in \mathcal{A}} \| \bb_{k}^{\ast} (\a)- \bb_{k,j}(\a)  \| \cdot \zeta_{j}(J_k) +{ O(J_k^{-(s-j)}) }\,.
\label{eq:decomp:fstar-f}
\end{align}

For $j=0$, we have $\eqref{eq:decomp:fstar-f} = O(J_k^{-s}\zeta_0(J_k))$ from Lemma~\ref{lemma:betastar_rate}.

For $j=1$,
since $\mathcal{Z}$ is a compact set, using the fundamental theorem of calculus, \eqref{sieveapprox}  implies
\begin{align}\label{eq:approerror-j}
\sup_{\a\in\mathcal{A}}\sup_{z \in \mathcal{Z}}\left|  f_{\a,k} (z)  -\bb_{k,j}(\a)^{\top}\boldsymbol{\Phi}_k(z) \right| =O\left(J_k^{-(s-j)}\right).
\end{align}
Then, using a similar argument in Lemma~\ref{lemma:betastar_rate}, we have the following result
\begin{align}\label{eq:rate-lambdaj}
\sup_{\a\in \mathcal{A}}\left\|\bb_{k}^{*}(\a) - \bb_{k,j}(\a)\right\| = O\left(J_k^{-(s-j)}\right)\,.
\end{align}
Then we have $$
\eqref{eq:decomp:fstar-f} = O(J_k^{-2(s-j)}\zeta_j(J_k)\tilde{\zeta}_j(J_k))\,.
$$

We next prove {$	\mathbb{E}[|f^{(j)}_{\a,k}(\a^{\top}\boldsymbol{x}) - f^{* (j)}_{\a,k}(\a^{\top}\boldsymbol{x}) |^2] = O(J_k^{-2(s-j)}\tilde{\zeta}_j(J_k))$}. 
\begin{align}\label{eq:eigenvalue-par2u}
&\mathbb{E}\left[\left|(\bb_{k,j}(\a) - \bb_{k}^{*}(\a))^{\top} \boldsymbol{\Phi}^{(j)}_{k}(\a^{\top}\boldsymbol{x}) \right|^2\right]\notag\\
&=(\bb_{k,j}(\a) - \bb_{k}^{*}(\a))^{\top}\mathbb{E}\left[\boldsymbol{\Phi}^{(j)}_{k}(\a^{\top}\boldsymbol{x})\boldsymbol{\Phi}^{(j)}_{k}(\a^{\top}\boldsymbol{x})^{\top} \right] (\bb_{k,j}(\a) - \bb_{k}^{*}(\a))\notag\\
&\leq  \|(\bb_{k,j}(\a) - \bb_{k}^{*}(\a))^{\top}\|^2\cdot O(\tilde{\zeta}_j(J_k))\notag\\
&= {O \left( \|\bb_{k,j}(\a) - \bb_{k}^{*}(\a)\|^2\tilde{\zeta}_j(J_k)\right)=O\left(J_k^{-2(s-j)}\tilde{\zeta}_j(J_k)\right)}\ .
\end{align}
Thus, {using similar decomposition as \eqref{eq:decomp:fstar-f}}, we obtain
\begin{align}\label{eq:Epi-pi*}
&\mathbb{E}[|f^{(j)}_{\a,k}(\a^{\top}\boldsymbol{x}) - f^{* (j)}_{\a,k}(\a^{\top}\boldsymbol{x}) |^2] \notag\\
&\qquad\leq 2\mathbb{E}\left[\left| f^{(j)}_{\a,k}(\a^{\top}\boldsymbol{x})-  \bb_{k,j}(\a)^{\top}\boldsymbol{\Phi}^{(j)}_{k}(\a^{\top}\boldsymbol{x})  \right|^2 \right] \notag\\
&\qquad\qquad\qquad+\mathbb{E}\left[\left|(\bb_{k,j}(\a) -\bb_{k}^{*} (\a))^{\top} \boldsymbol{\Phi}^{(j)}_{k}(\a^{\top}\boldsymbol{x})^{\top}  \right|^2 \right]\notag\\
&\qquad ={ O\left(J_k^{-2(s-j)}\tilde{\zeta}_j(J_k)\right)+O\left(J_k^{-2(s-j)}\tilde{\zeta}_j(J_k)\right)=  O\left(J_k^{-2(s-j)}\tilde{\zeta}_j(J_k)\right)}\ .
\end{align}

We finally prove $	n^{-1}\sum_{i=1}^n|f^{(j)}_{\a,k}(\a^{\top}\boldsymbol{x}_i) - f^{* (j)}_{\a,k}(\a^{\top}\boldsymbol{x}_i) |^2=O_{P}\left(J_k^{-2(s-j)}\tilde{\zeta}_j(J_k)\right)$. Note that, for large enough $J_k$, $\mathbb{E}\left[|f^{(j)}_{\a,k}(\a^{\top}\boldsymbol{x}) - f^{* (j)}_{\a,k}(\a^{\top}\boldsymbol{x}) |^2\right] \leq C\cdot J_k^{-2(s-j)}\tilde{\zeta}_j(J_k)$, for some constant $C$. Choosing $C(\epsilon)=2C/\epsilon$ and using the Markov's inequality, we have
 \begin{align*}
&\mathbb{P} \left(\frac{1}{n}\sum_{i=1}^n|f^{(j)}_{\a,k}(\a^{\top}\boldsymbol{x}_i) - f^{* (j)}_{\a,k}(\a^{\top}\boldsymbol{x}_i) |^2 > C\cdot J_k^{-2(s-j)}\tilde{\zeta}_j(J_k) \right)\\
\leq& \frac{\mathbb{E}\left[|f^{(j)}_{\a,k}(\a^{\top}\boldsymbol{x}) - f^{* (j)}_{\a,k}(\a^{\top}\boldsymbol{x}) |^2\right] }{C(\epsilon)\cdot J_k^{-2(s-j)}\tilde{\zeta}_j(J_k)} \leq \frac{\epsilon}{2},
\end{align*}
which implies the third result.
\end{proof}

\begin{lemma}\label{lemma:ftilde_rate}
For any $k\in\{1,2,\ldots,K\}$, suppose Assumptions~\ref{as:support} -- \ref{as:Hessian} and \eqref{rhatL2norm} hold. For any $\a\in\mathcal{A}$ and $j=0,1,2$,
\begin{align*}
\sup_{\x\in\mathcal{X} }\left| \hat{f}^{(j)}_{\a,k}(\a^{\top}\boldsymbol{x}) - f^{* (j)}_{\a,k}(\a^\top\x)\right|&={O_p\left(\{J_k^{-(s-1)}+\sqrt{\xi_{n,k-1}}\}\zeta_{j}(J_k) + \frac{\zeta_j(J_k)\sqrt{J_k}}{\sqrt{n_q\wedge n_p}} \right)}, \\
\mathbb{E}\left[|\hat{f}^{(j)}_{\a,k}(\a^{\top}\boldsymbol{x}) - f^{* (j)}_{\a,k}(\a^\top\x)|^2\right] &= {O_p\left(\{J_k^{-2(s-1)}+\xi_{n,k-1}\}\cdot \tilde{\zeta}_j(J_k) + \frac{\tilde{\zeta}_j(J_k)J_k}{n_q\wedge n_p}\right)}, \\
\frac{1}{n}\sum_{i=1}^n|\hat{f}^{(j)}_{\a,k}(\a^{\top}\boldsymbol{x}_i) - f^{* (j)}_{\a,k}(\a^\top\x_i) |^2&={O_{p}\left(\{J_k^{-2(s-1)}+\xi_{n,k-1}\}\cdot \tilde{\zeta}_j(J_k) + \frac{\tilde{\zeta}_j(J_k)J_k}{n_q\wedge n_p}\right)}.
\end{align*}
\end{lemma}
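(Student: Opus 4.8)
The plan is to exploit the fact that both $\hat{f}_{\a,k}$ and $f^*_{\a,k}$ are \emph{linear} in their respective coefficient vectors, so that differentiation commutes with the linear-sieve representation and the whole lemma collapses to the coefficient bound already recorded in Lemma~\ref{lemma:betatilde_rate}. Since $\hat{f}^{(j)}_{\a,k}(z)=\hat{\bb}_k(\a)^\top\boldsymbol{\Phi}^{(j)}_k(z)$ and $f^{*(j)}_{\a,k}(z)=\bb^*_k(\a)^\top\boldsymbol{\Phi}^{(j)}_k(z)$, I would first write
$$
\hat{f}^{(j)}_{\a,k}(\a^\top\x)-f^{*(j)}_{\a,k}(\a^\top\x)=\{\hat{\bb}_k(\a)-\bb^*_k(\a)\}^\top\boldsymbol{\Phi}^{(j)}_k(\a^\top\x)\,,
$$
and invoke Lemma~\ref{lemma:betatilde_rate}, which gives $\|\hat{\bb}_k(\a)-\bb^*_k(\a)\|=O_p(J_k^{-(s-1)}+\sqrt{\xi_{n,k-1}}+\sqrt{J_k/(n_q\wedge n_p)})$, hence, by squaring, $\|\hat{\bb}_k(\a)-\bb^*_k(\a)\|^2=O_p(J_k^{-2(s-1)}+\xi_{n,k-1}+J_k/(n_q\wedge n_p))$. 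All three displayed rates then follow by bounding three different norms of the right-hand side using this single coefficient rate together with the envelope/eigenvalue controls on the basis.

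For the three bounds I would proceed as follows. The \emph{uniform} bound follows from Cauchy--Schwarz, $\sup_{\x\in\mathcal{X}}|\{\hat{\bb}_k(\a)-\bb^*_k(\a)\}^\top\boldsymbol{\Phi}^{(j)}_k(\a^\top\x)|\le\|\hat{\bb}_k(\a)-\bb^*_k(\a)\|\cdot\sup_{z\in\mathcal{Z}}\|\boldsymbol{\Phi}^{(j)}_k(z)\|$, where $\sup_{z\in\mathcal{Z}}\|\boldsymbol{\Phi}^{(j)}_k(z)\|\le\zeta_j(J_k)$ by Assumption~\ref{as:eigen}(c); multiplying the coefficient rate by $\zeta_j(J_k)$ reproduces the first line. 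For the \emph{population} $L^2$ bound I would condition on the samples that determine $\hat{\bb}_k(\a)$, so that $\hat{\bb}_k(\a)-\bb^*_k(\a)$ is treated as fixed, and then take the expectation over a fresh $\x$, turning the squared difference into the quadratic form $\{\hat{\bb}_k(\a)-\bb^*_k(\a)\}^\top\Omega^{(j)}_{J_k}(\a)\{\hat{\bb}_k(\a)-\bb^*_k(\a)\}\le\eta_{\max}(\Omega^{(j)}_{J_k}(\a))\,\|\hat{\bb}_k(\a)-\bb^*_k(\a)\|^2$; Assumption~\ref{as:eigen}(b) bounds $\eta_{\max}(\Omega^{(j)}_{J_k}(\a))$ by $\tilde{\zeta}_j(J_k)$, and multiplying by the squared coefficient rate gives the second line. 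The \emph{empirical} $L^2$ bound is structurally identical but replaces $\Omega^{(j)}_{J_k}(\a)$ by the empirical Gram matrix $\hat{\Omega}^{(j)}_{J_k}(\a)$, so the quadratic-form bound now carries $\eta_{\max}(\hat{\Omega}^{(j)}_{J_k}(\a))$.

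The one point requiring care — and the main (mild) obstacle — is controlling $\eta_{\max}(\hat{\Omega}^{(j)}_{J_k}(\a))$ in the empirical bound, since $\hat{\Omega}^{(j)}_{J_k}(\a)$ is itself random and could in principle inflate the rate. Here I would appeal to Lemma~\ref{lemma:supSigma_rate}, which yields $\eta_{\max}(\hat{\Omega}^{(j)}_{J_k}(\a))\le\eta_{\max}(\Omega^{(j)}_{J_k}(\a))+O_p(\zeta_j(J_k)\sqrt{J_k/n})$; the correction term is negligible relative to $\tilde{\zeta}_j(J_k)$ under the rate conditions of Assumption~\ref{as:eigen}(d) (in particular $\zeta_j(J_k)\sqrt{J_k/(n_q\wedge n_p)}\to0$), so $\eta_{\max}(\hat{\Omega}^{(j)}_{J_k}(\a))=O_p(\tilde{\zeta}_j(J_k))$ and the empirical bound matches the population one. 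A secondary subtlety is justifying the conditioning step in the population case; this is legitimate because a fresh $\x$ is independent of the training data entering $\hat{\bb}_k(\a)$, and the bound $\eta_{\max}(\Omega^{(j)}_{J_k}(\a))\le\tilde{\zeta}_j(J_k)$ holds uniformly regardless of whether the expectation defining $\Omega^{(j)}_{J_k}(\a)$ is taken under $p$ or $q$.
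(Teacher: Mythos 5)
Your proposal is correct, and for the first two displays it is essentially the paper's own argument: both exploit linearity, write $\hat{f}^{(j)}_{\a,k}-f^{*(j)}_{\a,k}=\{\hat{\bb}_k(\a)-\bb^*_k(\a)\}^\top\boldsymbol{\Phi}^{(j)}_k$, bound the uniform norm by Cauchy--Schwarz with the envelope $\zeta_j(J_k)$, and bound the population $L^2$ norm by the quadratic form in $\Omega^{(j)}_{J_k}(\a)$ with $\eta_{\max}\leq\tilde{\zeta}_j(J_k)$, all fed by the coefficient rate of Lemma~\ref{lemma:betatilde_rate}. Where you genuinely diverge is the third display. The paper transfers the population $L^2$ rate to the empirical average via Markov's inequality, mimicking the end of the proof of Lemma~\ref{lemma:fstar_rate}; you instead write the empirical average exactly as $\{\hat{\bb}_k(\a)-\bb^*_k(\a)\}^\top\hat{\Omega}^{(j)}_{J_k}(\a)\{\hat{\bb}_k(\a)-\bb^*_k(\a)\}$ and control $\eta_{\max}\{\hat{\Omega}^{(j)}_{J_k}(\a)\}$ through Lemma~\ref{lemma:supSigma_rate}. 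Your route is arguably the more careful one: the Markov step is clean in Lemma~\ref{lemma:fstar_rate} because there the function $f_{\a,k}-f^{*}_{\a,k}$ is deterministic, whereas here $\hat{f}_{\a,k}$ is built from the very samples $\x_i$ entering the empirical sum, so the borrowed Markov argument implicitly requires an independence that is not verified; your Gram-matrix identity holds pathwise and sidesteps the dependence entirely. The price you pay is having to check that the random perturbation of the empirical eigenvalue, of order $O_p(\zeta_j(J_k)\sqrt{J_k/n})$ by Lemma~\ref{lemma:supSigma_rate}, does not inflate $\tilde{\zeta}_j(J_k)$, which you correctly dispatch using the rate conditions in Assumption~\ref{as:eigen}(d); your handling of the population bound by conditioning on the training data (the second display is an $O_p$ statement precisely because $\hat{\bb}_k(\a)$ is random) likewise matches what the paper does implicitly. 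Both routes deliver the stated rates.
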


\begin{proof}
Under Assumption~\ref{as:eigen} and by Lemma~\ref{lemma:betatilde_rate}, we have
\begin{align} \notag 
&	\sup_{\boldsymbol{x}\in \mathcal{X}}| \hat{f}^{(j)}_{\a,k}(\a^{\top}\boldsymbol{x})-f_{\a,k}^{* (j)}(\a^{\top}\boldsymbol{x})| \\
\leq &  \sup_{\boldsymbol{x}\in\mathcal{X}}\left|\{\hat{\bb}_{k}(\a)-\bb_{k}^*(\a)\}^{\top}\boldsymbol{\Phi}^{(j)}_{k}(\a^{\top}\boldsymbol{x}) \right| \notag\\
\leq &  \|\hat{\bb}_{k}(\a)-\bb_{k}^* (\a) \| \cdot \sup_{z\in\mathcal{Z}}\| \boldsymbol{\Phi}^{(j)}_{k}(t,z)\| \notag \\
= & O_{P}\left(\zeta_j(J_k)\{J_k^{-(s-1)}+\sqrt{\xi_{n,k-1}}\}+\zeta_{j}(J_k)\sqrt{\frac {J_k} {n_q \wedge n_p}} \right). \notag
\end{align}

Similar to \eqref{eq:eigenvalue-par2u}, by Lemma~\ref{lemma:betatilde_rate}, we deduce that 
\begin{align*}
&	\int_{\mathcal{X}}|\hat{f}^{(j)}_{\a,k}(\a^{\top}\boldsymbol{x})-f_{\a,k}^{* (j)}(\a^{\top}\boldsymbol{x})|^2dF_{X}(\boldsymbol{x})\notag\\
&=  \int_{\mathcal{X}}\left|\hat{\bb}_{k}(\a)^{\top}\boldsymbol{\Phi}^{(j)}_{k}(\a^{\top}\boldsymbol{x}) -\bb_{k}^*(\a)^{\top}\boldsymbol{\Phi}_k(\boldsymbol{x}^{\top}\a)\right|^2dF_{X}(\boldsymbol{x}) \notag\\
& = (\hat{\bb}_k(\a)-\bb_{k}^*(\a))^{\top} \cdot \mathbb{E} \left[ \boldsymbol{\Phi}^{(j)}_{k}(\a^{\top}\boldsymbol{x})\boldsymbol{\Phi}^{(j)}_{k}(\a^{\top}\boldsymbol{x})^{\top}\right]\cdot(\tilde{\bb}_{k}(\a)-\bb_{k}^{* (j)}(\a))\notag\\
& = O_{P}\left( \|\hat{\bb}_{k}(\a)-\bb_{k}^*(\a)\|^2 \cdot \tilde{\zeta}_j(J_k)\right) 
=	O_{P}\left(\{J_k^{-2(s-1)}+\xi_{n,k-1}\}\cdot\tilde{\zeta}_j(J_k)+\frac{\tilde{\zeta}_j(J_k) J_k}{n_q \wedge n_p}\right).
\end{align*}

Similar to the end of the proof of Lemma~\ref{lemma:fstar_rate}, using Markov's inequality, we also have 
\begin{align*}
&\frac{1}{n}\sum_{i=1}^n  \left|\hat{\bb}_{k}(\a)^{\top} \boldsymbol{\Phi}^{(j)}_{k}(\a^{\top}\boldsymbol{x}_i)-\bb_{k}^*(\a)^{\top} \boldsymbol{\Phi}^{(j)}_{k}(\boldsymbol{x}_i^{\top}\a) \right|^2  \notag\\
=& O_P\left( \int_{\mathcal{X}}\left|\hat{\bb}_{k}(\a)^\top \boldsymbol{\Phi}^{(j)}_{k}(\a^{\top}\boldsymbol{x})-\bb_{k}^*(\a)^{\top} \boldsymbol{\Phi}^{(j)}_{k}(\boldsymbol{x}^{\top}\a) \right|^2dF_{X}(\boldsymbol{x}) \right)
  \\
=&O_{P}\left(\{J_k^{-2(s-1)}+\xi_{n,k-1}\}\cdot\tilde{\zeta}_j(J_k)+\frac{\tilde{\zeta}_j(J_k) J_k}{n_q \wedge n_p}\right).
\end{align*}
\end{proof}

\begin{lemma}\label{lemma:bb_norms}
For any $k\in\{1,2,\ldots,K\}$, suppose Assumptions~\ref{as:support} -- \ref{as:Hessian} and \eqref{rhatL2norm} hold. We have
$$
\sup_{\a\in\mathcal{A}}\|\hat{\bb}_k(\a)\| = O_p(1) \quad \text{and} \quad \|\partial_{\a}\hat{\bb}_k(\a)\| = O_p(1)\,, \ \text{for any $\a\in\mathcal{A}$}\,.
$$
\end{lemma}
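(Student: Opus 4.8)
The plan is to treat the two bounds separately: first the uniform bound on $\|\hat{\bb}_k(\a)\|$ via an eigenvalue argument routed through the fitted function, and then the bound on $\|\partial_{\a}\hat{\bb}_k(\a)\|$ via the implicit-differentiation identity already isolated in the Hessian analysis.

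For the first claim I would avoid expanding $\hat{\bb}_k(\a)$ directly and instead work with $\hat{f}_{\a,k}(z) = \hat{\bb}_k(\a)^\top\boldsymbol{\Phi}_k(z)$. By \eqref{Thm3.2_1}, $\sup_{\a\in\mathcal{A}}\sup_{z\in\mathcal{Z}}|\hat{f}_{\a,k}(z)-f_{\a,k}(z)|$ vanishes in probability, and since $f_{\a,k}$ is uniformly bounded by Assumption~\ref{as:smooth}(c), this gives $\sup_{\a}\sup_{z}|\hat{f}_{\a,k}(z)| = O_p(1)$. Hence $\hat{\bb}_k(\a)^\top\Omega^{(0)}_{J_k}(\a)\hat{\bb}_k(\a) = \E_{\x}[\hat{f}_{\a,k}(\a^\top\x)^2] \le \sup_z \hat{f}_{\a,k}(z)^2 = O_p(1)$ uniformly in $\a$, and since Assumption~\ref{as:eigen}(a) bounds $\eta_{\min}(\Omega^{(0)}_{J_k}(\a))$ away from zero uniformly, we conclude $\sup_{\a}\|\hat{\bb}_k(\a)\|^2 = O_p(1)$. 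An equivalent route is the triangle inequality $\|\hat{\bb}_k(\a)\| \le \|\hat{\bb}_k(\a)-\bb^*_k(\a)\| + \|\bb^*_k(\a)-\bb_{k,0}(\a)\| + \|\bb_{k,0}(\a)\|$, where the first two terms are $o_p(1)$ by Lemmas~\ref{lemma:betatilde_rate} and \ref{lemma:betastar_rate}, and $\|\bb_{k,0}(\a)\|$ is bounded by the same quadratic-form estimate applied to the bounded target $f_{\a,k}$ (using Assumption~\ref{as:smooth}(a) with $j=0$ to replace $\bb_{k,0}(\a)^\top\boldsymbol{\Phi}_k$ by $f_{\a,k}$ up to $O(J_k^{-s})$).

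For the second claim I would start from the implicit-function identity \eqref{eq:partial-lambdabeta}, namely $\partial_{\a}\hat{\bb}_k(\a) = -\hat{\Sigma}_{J_k,\lambda}(\a)^{-1}\partial_1\partial_2\hat{\mathcal{L}}_k(\a,\hat{\bb}_k(\a);\lambda)$, and bound the two factors in spectral norm. Lemma~\ref{lemma:supSigma_rate} shows $\eta_{\min}(\hat{\Sigma}_{J_k,\lambda}(\a))$ is bounded away from zero, so $\|\hat{\Sigma}_{J_k,\lambda}(\a)^{-1}\| = O_p(1)$. The cross-partial block is controlled by \eqref{eq:partial21Lhat}, which replaces the empirical cross-derivative by its population counterpart up to an $o_p(1)$ error; the spectral norm of each population term is then estimated by the quadratic-form device (for unit $u\in\R^{J_k}$, $v\in\R^d$, $u^\top\E[g(\x)\boldsymbol{\Phi}^{(j)}_k(\a^\top\x)\x^\top]v \le C\sqrt{u^\top\Omega^{(j)}_{J_k}(\a)u}$), invoking boundedness of $r_{k-1}$, of $f^*_{\a,k}$ and $f^{*(1)}_{\a,k}$ (Lemma~\ref{lemma:fstar_rate} and Assumption~\ref{as:smooth}), and of $\x$ on the compact $\mathcal{X}$.

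The main obstacle is exactly this last step. The cross-partial block necessarily carries the differentiated basis $\boldsymbol{\Phi}^{(1)}_k$, so the quadratic-form bound for those terms reduces to $\sqrt{\eta_{\max}(\Omega^{(1)}_{J_k}(\a))}$, which Assumption~\ref{as:eigen}(b) only controls at the rate $\sqrt{\tilde{\zeta}_1(J_k)}$; the terms with the undifferentiated basis are genuinely $O_p(1)$, but the differentiated ones are not. Thus the natural bound is $O_p(\sqrt{\tilde{\zeta}_1(J_k)})$, and tightening it to the stated $O_p(1)$ is the delicate point: it requires either a basis normalization under which the derivative second-moment matrix has bounded spectrum ($\tilde{\zeta}_1(J_k)=O(1)$), or exploiting cancellation among the three terms of \eqref{eq:partial21Lhat} obtained by differentiating the first-order condition $\Sigma_{J_k}(\a)\bb^*_k(\a)=\E_p[r_{k-1}(\x)\boldsymbol{\Phi}_k(\a^\top\x)]$ in $\a$. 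The remaining $o_p(1)$ replacements of empirical moments by their population versions are routine and follow from the uniform moment estimates already used in Lemma~\ref{lemma:supSigma_rate}.
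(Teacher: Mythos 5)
Your argument for $\sup_{\a\in\mathcal{A}}\|\hat{\bb}_k(\a)\|=O_p(1)$ is correct and is essentially the paper's own proof: pass to the fitted function via \eqref{Thm3.2_1}, bound the quadratic form $\hat{\bb}_k(\a)^\top\Omega^{(0)}_{J_k}(\a)\hat{\bb}_k(\a)=\E[\hat{f}^2_{\a,k}(\a^\top\x)]$ by the boundedness of $f_{\a,k}$ from Assumption~\ref{as:smooth}(c), and divide by the uniform lower eigenvalue bound of Assumption~\ref{as:eigen}(a). Your setup for the second claim also coincides with the paper's: start from \eqref{eq:partial-lambdabeta}, use Lemma~\ref{lemma:supSigma_rate} to control $\hat{\Sigma}_{J_k,\lambda}(\a)^{-1}$, and replace the empirical cross-partial by its population counterpart $M=M_1+M_2+M_3$ via \eqref{eq:partial21Lhat}.

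The problem is that you do not finish the second claim. You bound the term carrying the undifferentiated basis (the paper's $M_1$) by the $L_2$-projection device, exactly as the paper does, but for the two terms carrying $\boldsymbol{\Phi}^{(1)}_k$ (the paper's $M_2$ and $M_3$) you stop at $O_p(\sqrt{\tilde{\zeta}_1(J_k)})$ and only gesture at two possible ways to improve this to $O_p(1)$ --- a normalization making $\tilde{\zeta}_1(J_k)=O(1)$, which is not among the stated assumptions, or a cancellation between $M_2$ and $M_3$ obtained by differentiating the first-order condition, which you do not carry out. Since the lemma asserts $\|\partial_{\a}\hat{\bb}_k(\a)\|=O_p(1)$ unconditionally, the attempt as written does not establish it. Your diagnosis of \emph{why} the step is delicate is accurate: the projection argument for $M_1$ works because $\Sigma_{J_k}(\a)^{-1}$ is built from the same basis $\boldsymbol{\Phi}_k$ that appears in $M_1$, and this matching fails for $M_2,M_3$; the paper disposes of these terms with a one-line ``Similarly\dots $O(1)$'' that does not spell out how the mismatch is resolved. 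So you have correctly located the weakest point of the argument, but locating a gap is not the same as closing it: to count as a proof you would need to either exhibit the cancellation $\Sigma_{J_k}(\a)^{-1}(M_2+M_3)=O(1)$ explicitly (e.g.\ by combining the population first-order condition $\E_q[r_{k-1}^2 f^*_{\a,k}\boldsymbol{\Phi}_k]=\E_p[r_{k-1}\boldsymbol{\Phi}_k]$ with a conditioning-on-$\a^\top\x$ argument and the sieve approximation rate of Lemma~\ref{lemma:fstar_rate}) or add a hypothesis controlling $\tilde{\zeta}_1(J_k)$.
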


\begin{proof}
First, from \eqref{Thm3.2_1}, we have 
$$\sup_{\a\in\mathcal{A}}\left|\int_{\mathcal{X}}\hat{f}^2_{\a,k}(\a^\top\x) - f^2_{\a,k}(\a^\top\x)p(\x)\,d\x\right| = O_p\left(J_k^{-(s-1)}\zeta_0(J_k)+\zeta_0(J_k)^2 \sqrt{J_k}/\sqrt{n_q\wedge n_p}\right)=o_p(1)\,.$$ 
Then, by Lemma~\ref{lemma:supSigma_rate},
\begin{align*}
   \sup_{\a\in\mathcal{A}}\|\hat{\bb}_k(\a)\|^2= &\sup_{\a\in\mathcal{A}}\hat{\bb}_k(\a)^\top \hat{\bb}_k(\a)\leq  \sup_{\a\in\mathcal{A}}\eta_{\min}(\Omega^{(0)}_{J_k}(\a))^{-1}\cdot \sup_{\a\in\mathcal{A}}\hat{\bb}_k(\a)^\top \Omega^{(0)}_{J_k}(\a) \hat{\bb}_k(\a)\\
    =&O(1)\cdot \sup_{\a\in\mathcal{A}}\int_{\mathcal{X}}\hat{f}^2_{\a,k}(\a^\top\x) dF_X(\x)\,d\x\\
    \leq& O(1)\sup_{\a\in\mathcal{A}}\int_{\mathcal{X}}{f}^2_{\a,k}(\a^\top\x) dF_X(\x)\,d\x+ o_P(1) \\
    =& O_p(1)\,.
\end{align*}

From \eqref{eq:partial-lambdabeta} and \eqref{eq:partial21Lhat}, using Lemma~\ref{lemma:supSigma_rate}, we have
$$
\|\partial_{\a}\hat{\bb}_k(\a) - \Sigma_{J_k}(\a)^{-1}M\| = o_p(1)\,,
$$
where 
\begin{align*}
    M =& 2\mathbb{E}_q\left[{r}_{k-1}^2(\x) {f}^{(1)}(\a^\top\x)\boldsymbol{\Phi}_k(\a^\top\x)(\x)^\top\right]\\
  &\quad +2\mathbb{E}_q\left[{r}_{k-1}^2(\x) {f}(\a^\top\x)\boldsymbol{\Phi}_k^{(1)}(\a^\top\x)(\x)^\top\right] - 2\mathbb{E}_p\left[{r}_{k-1}(\x)\boldsymbol{\Phi}_k^{(1)}(\a^\top\x)(\x)^\top\right]\\
  =:& M_1 + M_2 + M_3\,,
\end{align*}
where the definition of $M_1, M_2$ and $M_3$ are clear.

By triangle inequality,
$$
\|\Sigma_{J_k}(\a)^{-1}M\| \leq \|\Sigma_{J_k}(\a)^{-1}M_1\| + \|\Sigma_{J_k}(\a)^{-1}M_2\| + \|\Sigma_{J_k}(\a)^{-1}M_3\|\,.
$$

Consider $\|\Sigma_{J_k}(\a)^{-1}M_1\|$. Define 
$$
V(\a^\top\x) = \mathbb{E}_q\left[{r}_{k-1}^2(\x) {f}^{(1)}(\a^\top\x)\boldsymbol{\Phi}_k(\a^\top\x)\mathbb{E}_q(\x|\a^\top\x)^\top\right]\Sigma_{J_k}(\a)^{-1}\boldsymbol{\Phi}_k(\a^\top\x)\,,
$$
which is the $L_2$ projection of ${r}_{k-1}(\x) {f}^{(1)}(\a^\top\x)\mathbb{E}_q(\x|\a^\top\x)^\top$ onto the space linearly spanned by $\boldsymbol{\Phi}_k(\a^\top)$. Then we have
\begin{align*}
   \|\Sigma_{J_k}(\a)^{-1}M_1\|^2 =& \text{tr}\left[M_1^\top \Sigma_{J_k}(\a)^{-2}M_1\right]\leq \eta_{\max}(\Sigma_{J_k}(\a)^{-1})\text{tr}\left[M_1^\top \Sigma_{J_k}(\a)^{-1}\Sigma_{J_k}(\a)\Sigma_{J_k}(\a)^{-1}M_1\right]\\
   \leq &O\left(\|\mathbb{E}_q\{V(\a^\top\x)V(\a^\top\x)^\top\} \|\right)\leq O\left(\|\mathbb{E}_q\{{r}^2_{k-1}(\x) \{{f}^{(1)}(\a^\top\x)\}^2\mathbb{E}_q(\x|\a^\top\x)\mathbb{E}_q(\x|\a^\top\x)^\top\}\|\right)\\
   =&O(1)\,.
\end{align*}

Similarly, under Assumption~\ref{as:eigen}, we have $\|\Sigma_{J_k}(\a)M_2\|=O(1)$ and $\|\Sigma_{J_k}(\a)M_3\|=O(1)$. The result follows. 

\end{proof}

\end{document}